\newcommand{\argmax}{\operatornamewithlimits{arg \,max}}
\DeclareMathOperator{\diag}{diag}
\def\mGamma{\boldsymbol \Gamma}
\def\bmGamma{\boldsymbol \Gamma_{(a, b),(b,a)}}
\def\mgamma{\gamma}
\def\bmG{\boldsymbol G_{(a, b),(b,a)}}
\def\mR{\boldsymbol R}
\def\bmR{\boldsymbol R_{(a, b),(b,a)}}
\def\tN{\tilde{\boldsymbol N}}
\def\t{\tilde}
\def\Z{\boldsymbol Z}
\def\B{\boldsymbol B}
\def\b{\boldsymbol}
\def\h{\widehat}
\def\E{\mathbb{E}}
\begin{document}

\title{Spectral clustering for dependent community Hawkes process models of temporal networks}

\author{\name Lingfei Zhao \email zhao.2412@osu.edu \\
       \addr Department of Statistics\\
       The Ohio State University\\
       Columbus, OH 43120, USA
       \AND
       \name Hadeel Soliman \email hadeel.soliman@rockets.utoledo.edu \\
       \addr Department of Electrical Engineering and Computer Science\\
       University of Toledo\\
       Toledo, OH 43606, USA
       \AND
       \name Kevin S. Xu\thanks{This research was partially conducted while K.~S. Xu was at the University of Toledo.} \email ksx2@case.edu \\
       \addr Department of Computer and Data Sciences\\
       Case Western Reserve University\\
       Cleveland, OH 44106, USA
       \AND
       \name Subhadeep Paul \email paul.963@osu.edu \\
       \addr Department of Statistics\\
       The Ohio State University\\
       Columbus, OH 43120, USA}

\editor{NA}

\maketitle

\begin{abstract} 
Temporal networks observed continuously over time through timestamped relational events data are commonly encountered in application settings including online social media communications, financial transactions, and international relations.  
Temporal networks often exhibit community structure and strong dependence patterns among node pairs. This dependence can be modeled through \emph{mutual excitations}, where an interaction event from a sender to a receiver node increases the possibility of future events among other node pairs. 

We provide statistical results for a class of models that we call \emph{dependent community Hawkes (DCH)} models, which combine the stochastic block model with mutually exciting Hawkes processes for modeling both community structure and dependence among node pairs, respectively. We derive a non-asymptotic upper bound on the misclustering error of spectral clustering on the event count matrix as a function of the number of nodes and communities, time duration, and 
the amount of dependence in the model. Our result leverages recent results on bounding an appropriate distance between a multivariate Hawkes process count vector and a Gaussian vector, along with results from random matrix theory. 
We also propose a DCH model that incorporates only self and reciprocal excitation along with highly scalable parameter estimation using a Generalized Method of Moments (GMM) estimator that we demonstrate to be consistent for growing network size and time duration. 
\end{abstract}

\begin{keywords}
  continuous-time networks, temporal networks, point processes, Hawkes processes, network dependence, spectral clustering, generalized method of moments
\end{keywords}

\section{Introduction}

In many application settings involving networks where relations between nodes change over time, the observed data consist of timestamped relational events. For example, in \emph{social media communications}, users interact with each other through specific activities such as liking, mentioning, replying to, sharing, or commenting on another user's content. In \emph{international relations and conflicts}, nations commit acts of hostility or disputes through discrete timestamped events. In daily \emph{interactions among humans}, individuals come in contact with each other through events of co-presence in a physical space. 
These types of data are usually obtained as a table of timestamped ``action" events containing information on sender, receiver, and time of every event. Such data are usually referred to as relational events data, instantaneous interaction data, contact sequences, or more generally, temporal network data \citep{butts20084,brandes2009networks,holme2012temporal}.

A large body of models and methods have been proposed in the literature for analysis of relational events data in the last two decades.
A common modeling approach involves combining a model for an underlying (but unobserved) network with a point process model for the event times. 
The model used for the underlying network is often the Stochastic Block Model (SBM) \citep{DuBois2010,Dubois2013,xin2015continuous,junuthula2019block,arastuie2020chip,soliman2022multivariate}, or the closely related Infinite Relational Model (IRM) \citep{Blundell2012} or overlapping SBM \citep{miscouridou2018modelling}.
The event times among pairs of nodes are often modeled as realizations of temporal point processes (TPPs) that are conditionally independent given the community or block assignments. For example, in \cite{DuBois2010}, the events are generated following independent Poisson processes given the latent block labels of the senders and receivers. The model of  \cite{xin2015continuous} used inhomogeneous Poisson processes and \cite{arastuie2020chip} and \cite{junuthula2019block} used self-exciting Hawkes processes to model the event histories with an SBM.

However, dependencies among the pairwise processes and temporal motifs are commonly observed in relational events data, which most of the models above do not account for\footnote{Notable exceptions include the models proposed by \cite{Dubois2013} and \citet{soliman2022multivariate}.}. For example, consider the communication between two teams within an organization as illustrated in Figure \ref{fig:teamAB}. Suppose A1 and A2 are part of team A, and B1 and B2 are part of team B. If the user A1 sends an email to the user B1 (denoted in the figure as solid black directed arrow), then this action is likely to trigger not only more emails from A1 to B1 (dashed blue arrow), but also a response event from B1 to A1 (dashed red arrow).  Moreover A1 might send an email to B1's teammate B2 to request further clarification (dashed blue arrow) or B1 might send an email to A1's teammate A2 to keep them in the loop (dashed red arrow). Further, A1's teammate A2 might send a follow up email to B1, or B1's teammate B2 might choose to respond to A1 having received the forwarded email (dashed arrows). As this example illustrates, an event has the ability to trigger multiple other events between nodes. 

\begin{figure}[t]
     \centering  \includegraphics[width = 0.3 \textwidth]{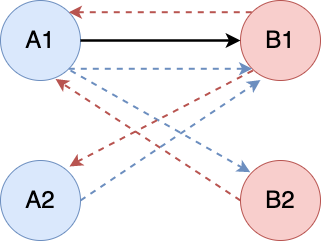}
    \caption{An example of dependence in temporal networks: an event from A1 to B1 (solid arrow) triggers multiple possible future events (red and blue dashed arrows).}
    \label{fig:teamAB}
\end{figure}

As another example, in the Militarized Interstate Disputes (MID) data that we analyze in this paper, we note that an action of threat or display of force by a country $i$ on another country $j$ leads to responses by the allies of both country $i$, the initiator, as well as country $j$, to whom the action is targeted.  In a systematic study,  \cite{paranjape2017motifs} identified a number of \textit{temporal motifs}  commonly observed in continuous-time networks. 
\cite{do2022analyzing} found that, indeed, many of these temporal motifs appear in MID data, even over short time windows. 
The presence of such temporal motifs over short time windows indicates that there are dependencies among the events. Such dependencies are also natural manifestations of social or network influence and contagion that has been widely studied \citep{nath2022identifying,goldsmith2013social}.

An important part of estimation in models based upon the SBM is to estimate the unknown blocks or communities. Several approaches have been used in the literature for estimating the community labels from temporal networks, including posterior inference with MCMC procedures  \citep{DuBois2010,Dubois2013,Blundell2012,fan2022hawkes}, EM type algorithms \citep{xin2015continuous,junuthula2019block} and spectral clustering algorithms \citep{junuthula2019block,arastuie2020chip, soliman2022multivariate}. 
Thus far, the only theoretical guarantees for estimation methods for these models is for the CHIP model  \citep{arastuie2020chip}, where a spectral clustering algorithm was shown to be consistent using proof techniques similar to \cite{lei2015consistency} by leveraging the conditional independence of the Hawkes processes. However, the theoretical guarantees cannot be directly extended to spectral clustering in settings with dependence among the node pairs \citep{Blundell2012,junuthula2019block,soliman2022multivariate}, and so the proof techniques from \cite{arastuie2020chip} cannot be used in this case. 
This paper focuses on developing statistical theory for estimators for models that incorporate dependence with community structure. Further, while the results in \cite{arastuie2020chip} are asymptotic and require the time over which the system is observed $T \to \infty$, our results are non-asymptotic and provide upper bound on estimation error as a function of number of nodes $n$ and $T$.

\subsection{Our Contributions}
We make two main contributions. \textit{First, we develop a theoretical upper bound} on the misclustering error of the spectral clustering algorithm under a general class of models that we call \emph{Dependent Community Hawkes (DCH)} models. The class of DCH models either include or is very closely related to many prior models that combine some variant of an SBM and a Hawkes process \citep{Blundell2012, miscouridou2018modelling, junuthula2019block, arastuie2020chip, soliman2022multivariate}. As mentioned earlier, our upper bounds in this paper are non-asymptotic in both the number of nodes $n$ and the time points $T$, illustrating data requirements in terms of both how many interacting entities we need to observe and how long we need to observe the interactions. Our results also allow us to study the effect of dependence among node pairs on the accuracy of spectral clustering. Finally, by letting $T \to \infty$, we establish conditions under which spectral clustering provides a consistent estimate of the community structure as we observe the system for a long time. 
These results also provide the first theoretical guarantees for estimation in the BHM \citep{junuthula2019block} and MULCH \citep{soliman2022multivariate} models, which fall into the class of DCH models we consider. 

The DCH models can be further thought of as plausible generative models for \textit{static weighted networks} where weights denote some type of counts, and random variables denoting the \textit{weighted edges are dependent}. How to utilize edge weights in a weighted network is a significant open problem in network science, where such weights are often treated as bounded nuisance parameters. Moreover, very few works on network science consider dependent edge weights.
\textit{We hypothesize} that in many application settings, observed static networks, especially those with weighted edges, are generated through an underlying relational event model.  Hence, the theoretical results in this paper are relevant more broadly. 

\textit{Second, we propose the self and reciprocal excitation Hawkes process model (SR)}, which also falls into the class of DCH models. The SR model can be thought of as an intermediate model between the highly scalable but less flexible CHIP model \citep{arastuie2020chip} and the highly flexible but less scalable MULCH model \citep{soliman2022multivariate}. 
We develop a highly scalable estimation approach for the SR model involving Generalized Method of Moments (GMM) estimation. 
We also develop theoretical consistency results for the GMM estimators of the Hawkes process parameters. The estimation method is related to the GMM estimation method in \cite{achab2017uncovering}, but our method is different from \cite{achab2017uncovering} in that we leverage the counts from multiple multivariate Hawkes processes. A theoretical novelty in our result is an explicit proof of the identification condition for a restricted SR model, which is an assumption in the results of \cite{achab2017uncovering}. The identification condition needs to hold for a multivariate Hawkes process in order to consistently estimate the parameters using \cite{achab2017uncovering}'s approach. Such identification is not guaranteed in general for multivariate Hawkes processes and consequently for the DCH family of models. We show that the structure of a restricted version of SR model allows us to prove identification explicitly.  
Our proposed SR model and GMM-based estimator retains the computationally efficiency of CHIP, yet provides better fit to real network datasets.  We further propose a new computationally efficient likelihood refinement procedure that iteratively refines the community assignments given the initial spectral clustering and parameter estimates. The proposed procedure is computationally feasible on large datasets and empirically improves community detection accuracy.

\subsection{Background Literature and Related Work}
\label{sec:background}
\paragraph{Stochastic Block Model:}
The Stochastic Block Model (SBM) is a widely studied random graph model for networks with community structure \citep{hll83}. The SBM proposes that every node in the network belongs to exactly one community, and given the community assignments, the edges between pairs of nodes are formed independently following a Bernoulli distribution whose parameters depend only the community assignment of nodes. 
In many application settings, the community assignments are unobserved and must be estimated from the network itself. 

Spectral clustering has emerged as a computationally efficient estimator for the communities, and recent results provide a variety of theoretical guarantees on its accuracy under different assumptions \citep{rcy11,lei2015consistency,gao2017achieving}. 
We note that these theoretical guarantees all assume conditional independence of edges between node pairs, which does not apply to the class of DCH models we consider in this paper. 
We use some of the proof techniques used in this prior work but also have to consider the dependence between node pairs to provide guarantees for the class of DCH models.

\paragraph{Hawkes Process:}

The Hawkes process \citep{hawkes1971spectra,laub2015hawkes} 
is a temporal point process model for modeling the stochastic process of arrival times of events. When modeling multiple sequences of event histories, the process is self and mutually exciting, implying that the instantaneous intensity of the process is increased by new events occurring both in the self and neighboring processes. 
The mutually exciting Hawkes process is a multidimensional point process model where the instantaneous intensity of arrivals of events in one process or dimension is increased by arrivals in both the same process or dimension as well as other processes or dimensions.

\paragraph{Related Work:}
There is a large body of prior literature on modeling continuous-time networks using a combination of a latent variable model for an underlying (but unobserved) network and a temporal point process model for the observed relational events. 
The underlying network model used is typically a variant of the Stochastic Block Model (SBM) \citep{DuBois2010,Blundell2012,Dubois2013,xin2015continuous,matias2015semiparametric, corneli2018multiple,junuthula2019block,arastuie2020chip,miscouridou2018modelling,soliman2022multivariate, fan2022hawkes} or the Latent Space Model (LSM)
\citep{yang2017decoupling, huang2022mutually,rastelli2021continuous, romero2023gaussian,passino2023mutually}. 
Such models typically assume that the relational events between node pairs are conditionally independent given the latent variables, i.e., the community assignments in the SBM and latent positions in the LSM.

The models with conditionally independent processes for different node pairs, such as CHIP \citep{arastuie2020chip}, fail to model the dependencies across the node pairs in the data. This aspect was recognized by \cite{Blundell2012} who used mutually exciting Hawkes processes to model reciprocating relationships in an IRM. The inhomogeneous Poisson processes in \cite{Dubois2013} incorporated observed count statistics on various types of motifs into its intensity function.

Recently, \cite{soliman2022multivariate}  considered mutually exciting Hawkes processes within an SBM structure to include complex dependencies, including reciprocity, generalized reciprocity, and turn continuing in their MULCH model. 
As they discuss, a fully mutually exciting Hawkes process for modeling such a system will require $O(n^2)$ processes that are dependent on each other and consist of $O(n^2 \times n^2)$ matrix of unknown self and mutual excitation (jump size) parameters. Such a model will be computationally intractable even for moderate sized datasets (e.g., $n=100$ nodes), while fitting the model will be statistically difficult for sparse datasets. As a solution, \cite{soliman2022multivariate} proposed to limit dependence only within the block pair that a node pair belongs to and the reciprocating block pair with the help of latent block or community assignments. Therefore if a node pair $(i,j)$ is such that $i$ belongs to community $a$ and $j$ belongs to community $b$, then interaction events from $i$ to $j$ is independent of events from $k$ to $l$ provided $k \notin \{a,b\}$ and $l \notin \{a,b\}$. However, the MULCH model only includes some specific forms of dependence among the node pairs. 
We generalize this observation by introducing a class of models with a very general form of dependence of node pairs.

\section{Dependent Community Hawkes (DCH) Models}

\label{sec:model}
We consider a relational events data table with timestamped interactions obtained from a continuously evolving system with $n$ nodes over time period $[0,T]$. 
We propose a class of models, which we call \textit{Dependent Community Hawkes (DCH)} models. 
These models are capable of modeling complex dependence patterns among the node pairs and are useful for studying the properties of the spectral clustering of the count matrix. This class of models either subsumes or is closely related to a number of existing models in the literature.

We assume each node in the network, $i$, has an unknown community or block label $z_i$, that takes values in $\{1,\ldots,K\}$. Let 
$X$ denote an assignment operator which assigns an ordered node pair to its ordered block pair. For example if $z_i = a, z_j =b$, then $X(i,j) = (a,b)$.  We assume that events between node pair $(i,j)$, such that $X(i,j) = (a,b)$ are independent of events between node pairs $(i',j')$, if $X(i',j') \notin \{(a,b),(b,a)\}$. On the other hand, events in node pairs $(i',j')$ which are in the same block pair, i.e., $X(i',j') =(a,b)$ \textit{or} reciprocal block pair, i.e.,  $X(i',j') =(b,a)$, exert dependence on events from node $i$ to $j$ controlled by the excitation patterns of a mutually exciting multivariate Hawkes process \citep{hawkes1971spectra,hawkes2018hawkes}.

We define the conditional intensity function for events $i \to j$  in the mutually exciting Hawkes process with the exponential kernel as
\begin{equation*}
    \lambda_{ij}(t) = \mu_{ij} + \sum_{\substack{(i',j'): X(i',j') \in \\ \{(a,b),(b,a)\}}} \bigg \{ \alpha^{i'j' \rightarrow ij} \beta^{i'j' \rightarrow ij} \sum_{t_s \in T_{i'j'}} \exp(-\beta^{i'j' \rightarrow ij}(t - t_s)) \bigg\},
\end{equation*}
where $T_{i'j'}$ is the set of timestamps for events from $i'$ to $j'$, and $\mu_{ij}>0$ is the baseline intensity parameter. Let $\b \mu$ be an $n\times n$ matrix 
whose $(i,j)$th element is $\mu_{ij}$.
The excitation parameters $\alpha^{i'j' \rightarrow ij}$ of the $n^2$ dimensional multivariate Hawkes process that govern the $n^2$ dyadic event times can be written as elements of the $n^2 \times n^2$ matrix $\boldsymbol{\Gamma}$. Since the Kernel function is an exponential Kernel, the parameters $\alpha^{i'j' \to ij}$ has the interpretation of the mean number of events from $i$ to $j$ directly (and causally) triggered by an event from $i'$ to $j'$ \cite{achab2017uncovering}. For ease of exposition, we will explicitly allow self-connections, which also occur in some application settings, e.g., a user posting on their own Facebook wall.
The class of DCH models further assumes a block or community structure in the matrix $\boldsymbol{\mu}$, i.e., $\b{\mu} = \b{Z} \b{M} \b{Z}^T$, where $\b{Z}$ is the matrix whose rows are community indicator vectors and $\b{M}$ is a $K \times K$ matrix. Note the $\b{M}$ matrix is not symmetric, and consequently,  the $\b \mu$ matrix is also not symmetric. 

\subsection{The Block-diagonal Excitation Matrix}
\label{sec:blocks_excitation}

The assumption that a node pair can only receive mutual excitation from node pairs in its own block pair and reciprocal block pair implies that the $\boldsymbol{\Gamma}$ matrix can be rearranged in such a way that the resulting matrix is a block diagonal matrix with $\frac{K(K+1)}{2}$ blocks. Since we are going to describe a $n \times n $ matrix of dyadic relational processes using a $n^2$ dimensional multivariate Hawkes process, we need to define an ordering of node pairs $(i,j)$ such that we can uniquely traverse between the matrix and its vectorized version. 

In particular, we order the rows and columns of the matrix $\b \Gamma$ by block pair assignments of the node pairs given by the operator $X(\cdot,\cdot)$. Let $n_a$ be the number of nodes in the community $a$. If $a=b$, i.e., both the nodes of the pair are in the same community, then we define $\b \Gamma_{(a,a),(a,a)}$ as the $n_{a}^2\times n_{a}^2$ matrix recording the influence the $n_a^2$ node pairs (including self-loop node pairs) for which $X(i,j)=(a,a)$, exert on each other. Let $\{i_{1},i_{2},\dots,i_{n_a}\}$ denote the nodes which are in the community $a$. Then we can order the $n_{a}^2$ directed node pairs as $\mathcal A_{aa} = \{(i_{1},i_{2}),\dots, (i_{1},i_{n_a}),(i_{2},i_{1}),\dots,(i_{2},i_{n_a}),\dots, (i_{n_a},i_{ n_a})\}$. Both the rows and columns of the matrix $\b \Gamma_{(a,a), (a,a)}$ are arranged in the order specified in the set $\mathcal A_{aa}$. 
    
If $a\neq b$, then define $\boldsymbol \Gamma_{(a,b), (b,a)}\in \mathbb{R}^{2n_{a}n_b\times 2n_{a}n_b}$ with rows and columns denoting all $n_an_b$ node pairs for which $X(i,j)=(a,b)$ and all $n_an_b$ node pairs for which $X(i,j)=(b,a)$. Let $\{i_{1},i_{2},\dots,i_{n_a}\}$ denote the nodes that are in block $a$, and $\{j_{1},j_{2},\dots,j_{n_b}\}$ denote the nodes that are in block $b$. We can also arrange the $2n_an_b$ directed node pairs in the following ordered set: $\mathcal A_{ab}=\{(i_{1},j_{1}),\dots, (i_1,j_{n_b}), (i_2, j_1), \ldots, (i_2, j_{n_b}),\ldots, (i_{n_a},j_{n_b}), (j_{1},i_{1}),\dots, (j_{n_b },i_{n_a})\}$,  
such that, for the first $n_an_b$ node pairs, $X(i,j)=(a,b)$, while for the next $n_an_b$ node pairs, $X(i,j)=(b,a)$. 

By this construction, we can reorder all these directed node pairs  
to get the set of ordered node pairs $\mathcal{A}$. We define two operators. Let $\operatorname{vec}(\b A)$ define the vectorized form of a matrix $\b A$ according to some order and $\operatorname{vec}^{-1}(\b b)$ define the matrix one obtains with the elements of vector $\b b$, such that $\operatorname{vec}^{-1}(\operatorname{vec}(\b A))= \b A$.  Then we define $\operatorname{vec}(\b \mu)\in \mathbb{R}^{n^2}$  as the vectorized form of  baseline intensities such that the elements are ordered according to the set $\mathcal{A}$.

According to this construction, we can write 
\[\boldsymbol{\Gamma} = \left(\begin{array}{cccc}
\boldsymbol\Gamma_{(1,1), (1,1)}  & & \\
& \boldsymbol\Gamma_{(1,2), (2,1)}  & \\
& &  \dots& 
\end{array}\right),\]
i.e., a block diagonal matrix consisting of blocks $\left(\boldsymbol\Gamma_{(a,b),(b,a)}\right).$ We can further write each $\boldsymbol \Gamma_{(a,b),(b,a)}$ for $a \neq b$ as a block matrix, i.e., 
\[
\label{eq:block_Gamma_G}
\boldsymbol{\Gamma}_{(a, b),(b,a)}=\left(\begin{array}{ll}
\boldsymbol{\Gamma}_{a b \rightarrow a b} & \boldsymbol{\Gamma}_{ba \rightarrow ab} \\
\boldsymbol{\Gamma}_{ab \rightarrow ba} & \boldsymbol{\Gamma}_{b a \rightarrow b a}
\end{array}\right).\] 
The first block $\b \Gamma_{ab \rightarrow ab}$ of dimension $n_an_b \times n_an_b$ has elements $\alpha^{i'j'\rightarrow ij}$, where $X(i,j)=(a,b)$ and $X(i',j')=(a,b)$. The remaining blocks are also defined similarly.

So far, we have not put any restrictions on the excitation parameters $\alpha^{i'j'\rightarrow ij}$ governing the dependence patterns within a block pair. Now, we further require that, for any block pair $(a,b)$, the submatrices $\b \Gamma_{ab \rightarrow ab}, \b \Gamma_{ab \rightarrow ba}$, $ \b \Gamma_{ba \rightarrow ab},$ and $\b \Gamma_{ba \rightarrow ba}$ have identical row sums and column sums. Therefore if $X(i,j)=X(i',j')=(a,b)$, then the \textit{total influence} through mutual excitation that processes $i \to j$ and $i' \to j'$ send and receive from other processes in block pairs $(a,b)$ and $(b,a)$ are identical. This property can be thought of as the notion of \textit{stochastic equivalence} in the DCH models. For comparison, in SBM, the notion of stochastic equivalence is that, for two nodes $i$ and $i'$, if $z_i = z_{i'}$, then the probabilities of connection with the rest of the network are the same for $i$ and $i'$. The notion of stochastic equivalence in the DCH models implies that, node pairs in the same community pair ($X(i,j)=X(i',j')=(a,b)$), send (row sum) and receive (column sum) identical amount of influence to other node pairs in the community pairs $(a,b)$ and $(b,a)$.

The combination of $\b \mu, \b \Gamma$ matrices defined above along with this notion of stochastic equivalence defines the DCH models. Next we show that CHIP \citep{arastuie2020chip}, BHM \citep{junuthula2019block}, and MULCH \citep{soliman2022multivariate} models are special cases of the DCH models, and  propose another special case of the DCH  models.

\subsection{Examples of DCH Models}
\label{sec:dch_examples}

\paragraph{Community Hawkes Independent Pairs (CHIP) Model:} 
The CHIP model in \cite{arastuie2020chip} is a special case of the DCH models described above with the $\b\Gamma$ matrix being a diagonal matrix.  
The conditional intensity function for the events between node pair $(i,j)$ such that $X(i,j) = (a,b)$ in this model is
\begin{equation}
    \label{eq:cif_chip}
    \lambda_{ij}(t) = M_{ab} + \sum_{t_s\in T_{ij}} \alpha_{ab}^{n} \beta^n_{ab} e^{-\beta^n_{ab} (t-t_{s})}.
\end{equation}
Since the process only has a self-exciting term and no mechanism of mutual excitation, the $\b \Gamma$ matrix is diagonal. Therefore the components of the $\b \Gamma$ matrix are
\begin{equation*}
\boldsymbol{\Gamma}_{(a, b),(b,a)}=\left(\begin{array}{ll}
\alpha_{ab}^n \b I_{n_a n_b} & 0 \\
0 & \alpha_{ba}^n\b I_{n_a n_b}
\end{array}\right)
\end{equation*} 
when $a \neq b$ and $\boldsymbol{\Gamma}_{(a,a),(a,a)} = \alpha_{aa}^n I_{n_a(n_a-1)}$. The matrix $\b{\mu}$ has a block structure since $\mu_{ij}= M_{z_i, z_j}$, which only depends on the community assignments of nodes $i$ and $j$. Therefore, the model is part of the DCH family.\\

\paragraph{Block Hawkes Model (BHM):} The BHM model in \cite{junuthula2019block} uses a self-exciting (univariate) Hawkes process for each block pair $(a,b)$ to generate events. 
The conditional intensity function for events between block pair $(a,b)$ in this model is given by
\begin{equation*}
    \lambda_{ab}(t) = M_{ab} + \sum_{t_s\in T_{ij}} \alpha_{ab}^{n} \beta^n_{ab} e^{-\beta^n_{ab} (t-t_{s})}.
\end{equation*}
Notice that, unlike \eqref{eq:cif_chip}, the Hawkes process is for the entire block pair $(a,b)$, not the individual node pairs $(i,j)$.
This block pair Hawkes process is then randomly thinned so that each node pair $(i,j)$ such that $X(i,j) = (a,b)$ is equally likely to ``receive'' the event. 
This can be equivalently represented by a mutually exciting Hawkes process with $n_a n_b$ different dimensions such that each dimension excites each other dimension equally, i.e., the excitation matrix for block pair $(a,b)$ is a constant multiplied by the all-ones matrix. 
The components of the excitation matrix $\b \Gamma$ then have the following form:
\begin{equation}
\boldsymbol{\Gamma}_{(a, b), (b,a)}=\left(\begin{array}{ll}
\frac{\alpha_{ab}^n}{n_a n_b} \b 1_{n_a n_b} \b 1_{n_a n_b}^T & 0 \\
0 & \frac{\alpha_{ba}^n}{n_a n_b} \b 1_{n_a n_b} \b 1_{n_a n_b}^T
\end{array}\right)
\end{equation} 
when $a \neq b$ and $\boldsymbol{\Gamma}_{(a,a),(a,a)} = \alpha_{aa}^n \b 1_{n_a(n_a-1)} \b 1_{n_a(n_a-1)}^T$.

\paragraph{Multivariate Community Hawkes (MULCH) Model:} The MULCH model in \cite{soliman2022multivariate} is more flexible than the CHIP model and introduces a larger range of mutual excitation types. 
For a node pair $(i,j)$ such that $z_i=a,z_j=b$, 
the conditional intensity function for events $i \to j$  given the history of all events in the mutually exciting Hawkes process is
\begin{equation}
    \lambda_{ij}(t) = \mu_{ij} + \sum_{\substack{(x,y)}} \alpha^{xy \rightarrow ij} \beta^{xy \rightarrow ij} \sum_{t_s \in T_{xy}} \exp(-\beta^{xy \rightarrow ij}(t - t_s))
    \label{fullMBHM}
\end{equation}
The excitation parameters of the multivariate Hawkes process governing the intensity function for events from $i$ to $j$ in (\ref{fullMBHM}) satisfy
\begin{equation*}
\alpha^{xy \rightarrow ij}=\left\{\begin{array}{rl}
 \alpha_{ab}^n,& \text{if } x=i,y=j \text{ (self excitation)}, \\
\alpha_{ab}^r,& \text{if } x=j,y=i \text{ (reciprocal excitation)},\\
\alpha_{ab}^{tc},& \text{if } x=i,z_y=b \text{ (turn continuation)}, \\
\alpha_{ab}^{ac},& \text{if } z_x=a, y=j \text{ (allied continuation)}, \\
\alpha_{ab}^{gr},& \text{if } z_x=b, y=i \text{ (generalized reciprocity)}, \\
\alpha_{ab}^{ar},& \text{if } x=j, z_y=a \text{ (allied reciprocity)}, \\
0,& \text{otherwise},\\
\end{array}\right.
~~~~~~~
\mu_{ij} = M_{ab},
\end{equation*}
and kernel functions have the similar block structure as $\b \Gamma$. 
From the discussion in Appendix A.3 of \cite{soliman2022multivariate}, the condition of identical row sum is satisfied. Therefore, the MULCH model is a special case of the DCH models.

\subsection{Self and Reciprocal Excitation (SR) Model}  
\label{sec:sr_model}
Fitting the CHIP model \citep{arastuie2020chip} to large scale networks with millions of nodes is possible due to its computationally efficient moment-based estimation. 
However, the model lacks flexibility due to not modeling any dependence on dyadic pairs. 
The MULCH model \citep{soliman2022multivariate}, on the other hand, is a highly flexible model that goes even beyond dyadic dependence, but the maximum likelihood estimator is very slow, and thus the model scales only to thousands of nodes. Furthermore, \cite{soliman2022multivariate} used a sum of known kernels approach to approximate the decay parameter $\beta$ because a direct estimation of the parameter is intractable. 

We propose a new model, which we call the \emph{Self and Reciprocal Excitation (SR) model}. It is also a member of the above DCH class of models, just like CHIP and MULCH. The SR model is less flexible than MULCH but is computationally more tractable. Given the community assignments of two nodes $i,j$, the pair of event times $\{T_{ij},T_{ji}\}$ follows a bivariate Hawkes process that is independent of all other node pairs. 
We note that this type of bivariate Hawkes process structure has also been used in several latent space Hawkes process models \citep{yang2017decoupling,huang2022mutually}, which do not belong to the DCH class.

For the SR model with $K$ communities, the conditional intensity function for the process from node $i$ to node $j$ such that $z_i=a$ and $z_j=b$, is given by
\begin{equation}
\label{eq:SR_model}
    \lambda_{ij}(t) = M_{ab} + \sum_{t_s\in T_{ij}, t_s \leq t} \alpha_{ab}^{n} \beta^n_{ab} e^{-\beta^n_{ab} (t-t_{s})} + \sum_{t_s\in T_{ji}, t_s \leq t}  \alpha_{ab}^r \beta^r_{ab}  e^ {-\beta^r_{ab} (t-t_{s})},
\end{equation}
where $\b M,\b\alpha^n,\b \alpha^r,\b\beta^n,\b\beta^r$ are all $K \times K$ non-negative matrices of parameters. The parameter $M_{ab}$ controls the baseline intensity of communication from a node $i$ that belongs to community $a$ to a node $j$ that belongs to community $b$. 
The second term in \eqref{eq:SR_model} models self excitation, i.e., the phenomenon that node $i$ is more likely to send a message to node $j$ if it has sent a message to $j$ in recent past. 
The third term in \eqref{eq:SR_model}, on the other hand, models reciprocal excitation, whereby node $j$ is more likely to send a message to node $i$ (reciprocate) if it receives a message from $i$. The  parameters $\alpha^n_{ab}, \alpha^r_{ab}$ control the jump size, and  $\beta^n_{ab}, \beta^r_{ab}$ control the decay rate of the intensity function followed by a self event $(i,j)$ and a reciprocal event $(j,i)$, respectively. 

For this model, the $\b \Gamma$ matrix defined earlier is block diagonal. Additionally, the $\b \Gamma_{(a,b),(b,a)}$ blocks have a property that, for every row, say the row corresponding to a node pair $(i,j)$, there is the non-zero element $\alpha_{ab}^n$ in the diagonal position, and a non-zero element in exactly one other spot, namely the row corresponding to node pair $(j,i)$ with element $\alpha_{ab}^r$. Clearly, the rows of $\b \Gamma_{(a,b),(b,a)}$ in this case have the same sum, $\alpha_{ab}^n + \alpha_{ab}^r$. Therefore, this model is a special case of the DCH model. 

\subsubsection{Restricted SR Model}
\label{sec:restricted_sr}
We further define a restricted version of this SR model where we let $\alpha^r_{ab} = \alpha^r_{ba}$, so that the amount of reciprocal excitation between block pairs $(a,b)$ and $(b,a)$ is identical.  
This reduces the number of parameters in the $\b M$ and $\b \Gamma$ matrices for block pairs $(a,b)$ and $(b,a)$ with $a\neq b$ from $6$ to $5$ parameters:  $M_{ab},M_{ba}, \alpha^n_{ab}, \alpha^n_{ba},\alpha^r_{ab}$.
This \emph{restricted SR} model reduces the flexibility of the SR model by constraining the reciprocal excitation parameters; however, it enables us to 
propose a computationally fast estimation procedure that includes a generalized method of moment (GMM) estimator of the parameters in Section \ref{sec:gmm_estimation}.

An alternative way to restrict the SR model is to have a shared self excitation rather than reciprocal excitation parameter between block pairs $(a,b)$ and $(b,a)$, i.e., $\alpha_{ab}^n = \alpha_{ba}^n$. 
This also reduces the number of parameters from $6$ to $5$ to enable estimation using the GMM, although our theoretical results in Section \ref{sec:sr_theory} may not hold. 
We consider this model variant in experiments in Section \ref{sec:ablation}.

\section{Spectral Clustering in the DCH Models}
Let $\b N_T$ be the $n\times n$ matrix whose $(i,j)$th element denotes the number of events that node $i$ sends to node $j$ until time $T$. Recall that we allow node $i$ to send events to itself. The diagonal elements $(\b N_T)_{ii}$ denote the events $i$ sends to itself. We call this asymmetric (due to directed events) and weighted matrix the \emph{count matrix}. 

The first step of our estimation procedure in the DCH models is to obtain the community assignments from the spectral clustering method (described in Algorithm \ref{alg:spectral_clustering}) applied to this count matrix. We derive an upper bound on the error rates of community detection using this method for count matrices generated by a model in the class of DCH models. 
The upper bound is non-asymptotic in $n$ and $T$ and provides explicit dependence on $n$, $T$, and other model quantities. This upper bound then leads to results on consistency of spectral clustering when $T \to \infty$. In order to interpret these dependencies on model quantities better, we obtain the bounds under a simplified special case of the DCH models.

We define the notations $\|\cdot \|_2, \|\cdot \|_{\infty}, \|\cdot \|_{1}, \rho(\cdot)$ to denote the spectral norm, maximum absolute row sum, maximum absolute column sum norm, and the spectral radius of a matrix, respectively, while $\|\cdot \|$ denotes the Euclidean norm of a vector.

\begin{algorithm}[t]
\caption{Spectral Clustering on the Count Matrix}
\label{alg:spectral_clustering}
\textbf{Input:} Count matrix $\mathbf{N}_T$; number of clusters $K$

\textbf{Output:} Membership vector $\boldsymbol{z}$

\begin{algorithmic}[1]
\State Compute $\mathbf{X}_L, \mathbf{X}_R \in \mathbb{R}^{n \times K}$ as the top $K$ left and right singular vectors of $\mathbf{N}_T$.
\State Form matrix $\mathbf{X} = \left(\mathbf{X}_L \mid \mathbf{X}_R\right) \in \mathbb{R}^{n \times 2K}$ by column-wise concatenation.
\State Define index set $\mathcal{I} = \left\{ i : \| \mathbf{X}_{i\cdot} \| > 0 \right\}$.
\State Extract rows: $\mathbf{X}^+ = (\mathbf{X}_{\mathcal{I}\cdot})$.
\State Normalize rows to unit length: $\mathbf{X}^{+*}_{ij} =\frac{\mathbf{X}^+_{ij}}{\|\mathbf{X}^+_{i,.}\|}$.
\State Apply $(1+\varepsilon)$-approximate $k$-means to rows of $\mathbf{X}^{+*}$ to get $K$ clusters.
\State Assign nodes not in $\mathcal{I}$ to the first cluster.
\State \Return membership vector $\boldsymbol{z}$.
\end{algorithmic}
\end{algorithm}

\subsection{Non-asymptotic Results for General DCH Models}

We adopt a result from \cite{khabou2021malliavin} which provides a Gaussian concentration result for multivariate Hawkes processes using the Malliavin-Stein method in our context in the following proposition. Let $\mathcal{C}^2(\mathbb R^{n^2})$ denote the class of twice differentiable functions of $n^2$ dimensional real vectors. For a function $g \in \mathcal{C}^2(\mathbb R^{n^2})$, define $\|g\|_{Lip}= \sup_{\bm x \neq \bm y}\frac{|g(\bm x)- g(\bm y)|}{\|\bm x-\bm y\|}$, and $M_2(g) = \sup_{\bm x \neq \bm y} \frac{\|\nabla g(\bm x) - \nabla g(\bm y)\|}{\|\bm x-\bm y\|}$, where $\bm x,\bm y \in \mathbb R^{n^2}$. For any vector $\b x$, define the operator $\diag(\b x)$ as an operator that makes a diagonal matrix with the elements of the vector $\b x$. Further define $\b R=(\b I-\b \Gamma)^{-1}$. Then, the following proposition is a consequence of Theorem 1.1 in \cite{khabou2021malliavin}.

\begin{proposition} Define the distance $d_2$ between two random vectors $X$ and $Y$ as 
\[
d_2 (X,Y) = \sup_{f \in \mathcal{H}} |E[f(X)] - E[f(Y)]|,
\]
where  $\mathcal{H} = \{g \in \mathcal{C}^2(R^{n^2}): \|g\|_{Lip} \leq 1, M_2(g) \leq 1\}$. Let $n$ be a fixed quantity that does not change with $T$ and assume $\rho(\b \Gamma) <1$.
Define \[
Y_T = \frac{\operatorname{vec} (\b N_T) - \b R \operatorname{vec} (\b \mu) T}{\sqrt{T}}.
\]
Let $G \sim N_{n^2}(0, \b R \diag (\b R \operatorname{vec}(\b \mu)) \b R^T)$. 
Then there exists a constant $C(n)$ that does not depend on $T$, but possibly depends on $n$, such that 
\[
d_2 (Y_T, G) \leq \frac{C}{\sqrt{T}},
\]
for any $T$.
\label{prop:normal}
\end{proposition}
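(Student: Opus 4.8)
The plan is to recognize the statement as the rate-quantified version of the classical multivariate Hawkes CLT, obtained by a direct application of the Malliavin–Stein bound of \cite{khabou2021malliavin} once the correct first- and second-order parameters of the $n^2$-dimensional process are identified. First I would set up the object precisely: stacking the dyadic counting processes $N_{ij}(\cdot)$ according to the ordering $\mathcal{A}$, the vector $\operatorname{vec}(\b N_T)$ is the value at time $T$ of an $n^2$-dimensional multivariate Hawkes process with baseline intensity vector $\operatorname{vec}(\b \mu)$ and exponential excitation kernels whose integrated matrix is $\b \Gamma$. Since $\rho(\b \Gamma) < 1$, the process is stable: $\b I - \b \Gamma$ is invertible, $\b R = (\b I - \b \Gamma)^{-1}$ is well defined with nonnegative entries, and the process can be coupled to its stationary version with geometrically decaying coupling error.

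Next I would pin down the limiting Gaussian from standard Hawkes theory. The first-order property gives the mean intensity vector $\b \Lambda = \b R \operatorname{vec}(\b \mu)$, so $\E[\operatorname{vec}(\b N_T)] = \b R \operatorname{vec}(\b \mu)\, T + O(1)$, where the $O(1)$ term is the transient contribution $\int_0^T (\b \Lambda - \E[\boldsymbol\lambda(t)])\,dt$, which is bounded uniformly in $T$ because the integrand decays geometrically under the stability condition with exponential kernels. The functional CLT for stable multivariate Hawkes processes (Bacry–Delattre–Hoffmann–Muzy) then identifies the asymptotic covariance as $\b \Sigma = \b R \diag(\b \Lambda)\, \b R^T = \b R \diag(\b R \operatorname{vec}(\b \mu))\, \b R^T$, which is exactly the covariance of $G$. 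This confirms that the proposition is the right quantitative refinement of that CLT.

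Then I would invoke Theorem 1.1 of \cite{khabou2021malliavin}, checking that its hypotheses hold here: the test-function class $\mathcal{H}$ of $\mathcal{C}^2$ functions with $\|g\|_{Lip}\le 1$ and $M_2(g)\le 1$ is exactly the class defining the distance $d_2$ in that theorem; the exponential kernels are bounded with finite $L^1$ and $L^2$ norms and satisfy $\rho(\b \Gamma)<1$; and $n$ (hence the dimension $n^2$) is held fixed while $T$ varies. The theorem then yields a bound of the form $d_2\bigl((\operatorname{vec}(\b N_T) - \E[\operatorname{vec}(\b N_T)])/\sqrt{T},\, G\bigr) \le C(n)/\sqrt{T}$, where $C(n)$ absorbs the dependence on $n^2$, on the magnitudes of $\operatorname{vec}(\b \mu)$, on $\rho(\b \Gamma)$, and on the kernel decay rates, but not on $T$. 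To pass from this to the statement, which centers by $\b R \operatorname{vec}(\b \mu)\, T$ rather than by $\E[\operatorname{vec}(\b N_T)]$, I would use the triangle inequality for $d_2$: replacing the exact mean by the asymptotic mean shifts $Y_T$ by a deterministic vector of norm $O(1/\sqrt{T})$, which alters $\E[f(Y_T)]$ by at most $\|f\|_{Lip}\cdot O(1/\sqrt{T})$ for every $f\in\mathcal{H}$, and this is folded into $C$.

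The main obstacle I expect is bookkeeping rather than conceptual: ensuring that the normalization, centering, and covariance conventions of \cite{khabou2021malliavin} line up with ours — in particular whether their result is stated for the counting process, the compensated process, or the intensity, and whether their distance is literally our $d_2$ or matches it only up to an absolute constant — and carefully controlling the transient boundary term so that it is genuinely $O(1)$ (and not $O(\sqrt{T})$) uniformly in $T$. A secondary point is verifying that exponential kernels meet whatever integrability/regularity conditions that theorem imposes on the kernel; since exponential kernels are the canonical example this should be immediate but must be stated explicitly. Everything else — invertibility of $\b I - \b \Gamma$, the closed forms of $\b \Lambda$ and $\b \Sigma$ — is routine Hawkes-process theory.
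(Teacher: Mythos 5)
Your proposal matches the paper's treatment: the paper gives no separate argument and simply derives the proposition as a direct consequence of Theorem 1.1 in \cite{khabou2021malliavin}, applied to the $n^2$-dimensional Hawkes process with fixed dimension, exponential kernels, and $\rho(\b \Gamma)<1$, with the limiting covariance $\b R \diag(\b R \operatorname{vec}(\b\mu))\b R^T$ identified exactly as you do. Your extra care about the centering convention and the $O(1)$ transient term is sound bookkeeping but does not change the route.
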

The above proposition provides a bound for the $d_2$ distance, which has also been called the ``smooth Wasserstein distance'' in the literature \citep{gaunt2023bounding}, between a suitably transformed count vector from a DCH model and a Gaussian vector with an appropriate covariance matrix. Note that, in the above proposition, the $n^2 \times n^2$ covariance matrix of the zero-mean Gaussian vector $G$ does not depend on $T$. Further,  $d_2(Y_T,G) \to 0$ implies that $Y_T$ converges to $G$ in distribution (Remark 6 in \cite{khabou2021malliavin}, Remark 2.16 in \cite{giovanni2010multi}). However, to make further progress on bounding the spectral norm difference of the count matrix from its expection, we require explicit bounds on the Kolmogorov distance between the vectors, $d_K(Y_T,G)$. The following proposition that follows from the result in \cite{gaunt2023bounding} with $m=2$ provides that.   
\begin{proposition}
Suppose $\sigma=\min_{1\leq j\leq n^2} (\b R \diag (\b R \operatorname{vec}(\b \mu)) \b R^T)_{jj}$. We verify that $\sigma>0$ and $d_2(Y_T,G) \leq \frac{\sqrt{4 \log n}+2}{2\sigma}$ for sufficiently large $T$. Then, 
\[
d_K(Y_T,G) \leq 2 \left ( \frac{\sqrt{4 \log n}+2}{\sigma}\right)^{2/3} (4 C(n))^{1/3}T^{-1/6}.
\]
\label{Kdist}
\end{proposition}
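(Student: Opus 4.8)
The plan is to obtain the bound by feeding the smooth-Wasserstein estimate of Proposition~\ref{prop:normal} into the generic device of \cite{gaunt2023bounding} (case $m=2$), which upgrades a $d_2$ bound to a Kolmogorov bound whenever the target law is a non-degenerate Gaussian vector. Conceptually this device is the classical smoothing argument: one approximates the indicator of a hyperrectangle by a $\mathcal{C}^{2}$ function $h_\varepsilon$ with $\|h_\varepsilon\|_{Lip}=O(1/\varepsilon)$ and $M_2(h_\varepsilon)=O(1/\varepsilon^{2})$, so that a rescaling of $h_\varepsilon$ lies in the class $\mathcal{H}$ of Proposition~\ref{prop:normal}; the error then splits into a term $O(\varepsilon^{-2}\,d_2(Y_T,G))$ controlled by that proposition and a term $O(\varepsilon\,\sigma^{-1}\sqrt{\log(n^{2})})$ controlled by Gaussian anti-concentration over rectangles (this is where the $\sqrt{\log}$ of the ambient dimension $n^{2}$ enters), and optimizing over $\varepsilon\asymp(\sigma\, d_2(Y_T,G))^{1/3}$ produces the exponent $1/6$ in $T$ together with the displayed constants. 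Since \cite{gaunt2023bounding} already packages this, the remaining work is to check that its hypotheses hold in the present setting and to pin down the quantity $\sigma$.

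First I would check $\sigma>0$. Since $\rho(\b\Gamma)<1$ (inherited from Proposition~\ref{prop:normal}) and $\b\Gamma$ has nonnegative entries (each excitation parameter is an expected count of triggered events, hence nonnegative), $\b R=(\b I-\b\Gamma)^{-1}=\sum_{k\ge0}\b\Gamma^{k}$ is entrywise nonnegative and has all diagonal entries $\ge1$ (from the $k=0$ term). Because $\mu_{ij}>0$ for every $i,j$, the vector $\b v:=\b R\operatorname{vec}(\b\mu)$ is entrywise positive, and hence, writing $\b D=\diag(\b v)$, for each coordinate $j$ one has $(\b R\b D\b R^{T})_{jj}=\sum_{k}v_{k}R_{jk}^{2}\ge v_{j}R_{jj}^{2}\ge v_{j}>0$. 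Thus $\sigma=\min_{j}(\b R\b D\b R^{T})_{jj}>0$; in fact the same chain gives $\sigma\ge\min_{i,j}\mu_{ij}$, so the $\sigma$ appearing in the denominator of the bound is not merely positive but controlled by a model parameter.

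Next I would discharge the precondition and then apply the cited lemma. Proposition~\ref{prop:normal} gives $d_2(Y_T,G)\le C(n)/\sqrt{T}$ with $n$ (hence $C(n)$ and $\sigma$) fixed as $T$ grows, so $d_2(Y_T,G)\to0$ and the hypothesis $d_2(Y_T,G)\le(\sqrt{4\log n}+2)/(2\sigma)$ holds once $T\ge\big(2\sigma C(n)/(\sqrt{4\log n}+2)\big)^{2}$. For such $T$, invoking \cite{gaunt2023bounding} with $m=2$, ambient dimension $d=n^{2}$ (so that $\sqrt{2\log d}=\sqrt{4\log n}$), target $G$, and $W=Y_T$ gives $d_K(Y_T,G)\le 2\big((\sqrt{4\log n}+2)/\sigma\big)^{2/3}\,(4\,d_2(Y_T,G))^{1/3}$, and substituting $d_2(Y_T,G)\le C(n)/\sqrt{T}$ yields the stated bound, since $(4\,C(n)T^{-1/2})^{1/3}=(4C(n))^{1/3}T^{-1/6}$. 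The one compatibility point I would spell out is that the test class $\mathcal{H}=\{g\in\mathcal{C}^{2}:\|g\|_{Lip}\le1,\ M_2(g)\le1\}$ of Proposition~\ref{prop:normal} is exactly the class underlying the $d_2$ metric of \cite{gaunt2023bounding}, which holds because for $\mathcal{C}^{2}$ functions $\|g\|_{Lip}=\sup_x\|\nabla g(x)\|$ and $M_2(g)=\sup_x\|\nabla^{2}g(x)\|_{op}$, matching their bounded-first- and bounded-second-derivative seminorm conditions.

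The hard part will not be any single calculation but rather two bookkeeping checks. First, one must confirm that the multivariate Kolmogorov distance in play is the supremum over hyperrectangles (rather than coordinatewise half-lines), since that is the version for which \cite{gaunt2023bounding} produces the $\sqrt{\log(n^{2})}$ dimension factor and hence applies verbatim. Second, one should not let $\sigma\to0$ silently: the bound is informative only if $\sigma$ is bounded below by a model quantity such as $\min_{i,j}\mu_{ij}$, so that estimate should be recorded. The $d_2$ precondition itself is harmless — it merely keeps the implicit smoothing width in the regime where the conversion lemma's constants are valid, and it is automatically met for large $T$ because $d_2(Y_T,G)=O(T^{-1/2})$.
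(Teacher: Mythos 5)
Your proposal is correct and takes essentially the same route as the paper, which states this proposition as a direct consequence of the $m=2$ Kolmogorov-vs-smooth-Wasserstein conversion result of \cite{gaunt2023bounding} applied to the $d_2$ bound of Proposition~\ref{prop:normal} with ambient dimension $n^2$. Your explicit checks—that $\sigma \geq \min_{i,j}\mu_{ij} > 0$ via the entrywise nonnegativity of $\b R = \sum_{k\geq 0}\b\Gamma^{k}$ with diagonal entries at least $1$, and that the precondition $d_2(Y_T,G)\leq \frac{\sqrt{4\log n}+2}{2\sigma}$ holds for sufficiently large $T$ because $d_2(Y_T,G)\leq C(n)/\sqrt{T}$ with $n$ fixed—supply precisely the verifications the paper asserts without detail.
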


Now, we are ready to state our main results. The following theorem provides a bound on the matrix spectral norm of the difference between the count matrix and its expectation as a function of $n$ and $T$. The probability with which the bound holds is a function of $T$, and the bound can be turned into a high probability bound by letting $T \to \infty$.
\begin{theorem}
\label{thm:spectral_norm_bound}
Let $\b N_T$ be the $n \times n$ count matrix of a temporal network generated from a DCH model with parameters $\b \mu, \b \Gamma$. Let $\mu_{\max} = \max_{i,j}\mu_{ij}$. Assume the following. (1) The spectral radius $\rho(\boldsymbol\Gamma) =\sigma^*<1$, 
(2)  For any block pair $(a,b)$, the maximum absolute row and column sums for the submatrices $\b \Gamma_{ab \rightarrow ab}, \b \Gamma_{ab \rightarrow ba}, \b \Gamma_{ba \rightarrow ab}, \b \Gamma_{ba \rightarrow ba}$ are identical and are upper bounded by $\gamma_{\max}>0$ for all $(a,b)$ pairs. 
Define $\E \b N_T = \operatorname{vec}^{-1}\left((\b R\operatorname{vec}(\b \mu)T\right)$.
Then, for all $n>1$ and $T>1$ we have, with probability at least $1-\exp(-\log n \log T)-\frac{\kappa(n)}{T^{1/6}}$, for some $\kappa(n)>0$ which is a function of $n$ but not of $T$,
\[
\sqrt{\frac{T}{\log T}}\left\|\frac{\boldsymbol{N}_{T} -\E \b N_T}{T} \right\| \leq  3(1-\sigma^*)^{-3}\sqrt{n (1 + \gamma_{\max})^3\mu_{\max}(1+2 \log n)} ).
\]
\end{theorem}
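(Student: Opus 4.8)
Since $\operatorname{vec}(\boldsymbol{N}_{T}-\E\boldsymbol{N}_{T})=\sqrt{T}\,Y_{T}$ in the notation of Proposition~\ref{prop:normal}, we have $\sqrt{T/\log T}\,\|(\boldsymbol{N}_{T}-\E\boldsymbol{N}_{T})/T\|=\|\operatorname{vec}^{-1}(Y_{T})\|/\sqrt{\log T}$, so the task is to control the operator norm of the $n\times n$ random matrix $\operatorname{vec}^{-1}(Y_{T})$. The plan is a standard $\varepsilon$-net reduction followed by a direction-by-direction transfer from the dependent Hawkes count vector $Y_{T}$ to the Gaussian surrogate $G\sim N(0,\Sigma)$ with $\Sigma=\boldsymbol{R}\diag(\boldsymbol{R}\operatorname{vec}(\boldsymbol{\mu}))\boldsymbol{R}^{\top}$, using Propositions~\ref{prop:normal} and~\ref{Kdist}; the key twist is that we must replace the coordinatewise control of Proposition~\ref{Kdist} by directional control. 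Fix a $\tfrac14$-net $\mathcal{N}$ of the unit sphere $S^{n-1}$ with $|\mathcal{N}|\le 9^{n}$. Then $\|\operatorname{vec}^{-1}(Y_{T})\|\le 2\max_{u,v\in\mathcal{N}}\langle w_{u,v},Y_{T}\rangle$, where $w_{u,v}:=\operatorname{vec}(uv^{\top})$ satisfies $\|w_{u,v}\|=\|u\|\,\|v\|=1$.

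\emph{Direction-wise Gaussian comparison.} For a fixed unit $w$, composing any $g$ in the test class $\mathcal{H}$ of Proposition~\ref{prop:normal} with the linear map $x\mapsto\langle w,x\rangle$ (of operator norm $1$) increases neither its Lipschitz constant nor $M_{2}$, hence $d_{2}(\langle w,Y_{T}\rangle,\langle w,G\rangle)\le d_{2}(Y_{T},G)\le C(n)/\sqrt{T}$. Applying the one-dimensional case of the smooth-Wasserstein-to-Kolmogorov bound behind Proposition~\ref{Kdist} (Gaunt--Li with $m=2$) to the scalar $\langle w,Y_{T}\rangle$ gives $d_{K}(\langle w,Y_{T}\rangle,\langle w,G\rangle)\le c_{0}(w^{\top}\Sigma w)^{-1/3}C(n)^{1/3}T^{-1/6}$. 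This uses that $\Sigma$ is strictly positive definite: $\boldsymbol{R}=(\boldsymbol{I}-\boldsymbol{\Gamma})^{-1}$ is invertible and $\boldsymbol{R}\operatorname{vec}(\boldsymbol{\mu})>0$ entrywise (because $\boldsymbol{R}=\sum_{k\ge0}\boldsymbol{\Gamma}^{k}\ge\boldsymbol{I}$ and $\boldsymbol{\mu}>0$), so $w^{\top}\Sigma w\ge\lambda_{\min}(\Sigma)=:\sigma_{\min}^{2}>0$ uniformly in $w$, which is exactly what makes the conversion valid in every direction. Since $\langle w,G\rangle\sim N(0,w^{\top}\Sigma w)$, combining with the Gaussian tail bound gives, for every $t>0$,
\[
\Pr\big[|\langle w,Y_{T}\rangle|>t\big]\le 2\exp\!\Big(-\frac{t^{2}}{2\,w^{\top}\Sigma w}\Big)+2c_{0}\,\sigma_{\min}^{-2/3}C(n)^{1/3}T^{-1/6}.
\]

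\emph{Variance bound, union bound, assembly.} For the Gaussian part we bound $\sup_{w\in S^{n-1}}w^{\top}\Sigma w=\lambda_{\max}(\Sigma)$ via $w^{\top}\Sigma w=\|\diag(\boldsymbol{R}\operatorname{vec}\boldsymbol{\mu})^{1/2}\boldsymbol{R}^{\top}w\|^{2}\le\big(\max_{j}(\boldsymbol{R}\operatorname{vec}\boldsymbol{\mu})_{j}\big)\|\boldsymbol{R}\|_{2}^{2}\le\|\boldsymbol{R}\|_{\infty}\mu_{\max}\,\|\boldsymbol{R}\|_{1}\|\boldsymbol{R}\|_{\infty}$, which reduces everything to bounding $\|\boldsymbol{R}\|_{1}$ and $\|\boldsymbol{R}\|_{\infty}$. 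Here assumptions (1)--(2) enter: $\boldsymbol{\Gamma}$ is block-diagonal and entrywise nonnegative with the four sub-blocks of each $\boldsymbol{\Gamma}_{(a,b),(b,a)}$ having identical row and column sums bounded by $\gamma_{\max}$, so $\boldsymbol{R}$ is block-diagonal with blocks $(\boldsymbol{I}-\boldsymbol{\Gamma}_{(a,b),(b,a)})^{-1}$, and a Perron--Frobenius argument on each nonnegative block $\boldsymbol{\Gamma}_{(a,b),(b,a)}$ (Perron eigenvalue at most $\sigma^{*}$, Perron eigenvector controlled by the identical-sum structure) bounds $\|\boldsymbol{R}\|_{\infty}$ and $\|\boldsymbol{R}\|_{1}$ by a structural multiple of $(1+\gamma_{\max})(1-\sigma^{*})^{-2}$. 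Collecting powers gives $\lambda_{\max}(\Sigma)\le(1+\gamma_{\max})^{3}(1-\sigma^{*})^{-6}\mu_{\max}=:v_{\max}$, so $\sqrt{v_{\max}}=(1+\gamma_{\max})^{3/2}(1-\sigma^{*})^{-3}\sqrt{\mu_{\max}}$ — the parameter-dependent factor appearing in the theorem. Now take $t=\sqrt{2v_{\max}(2n\ln 9+\log n\log T)}$ and union-bound over the $|\mathcal{N}|^{2}\le 9^{2n}$ pairs: the Gaussian-tail contributions sum to $\exp(-\log n\log T)$, and the Kolmogorov contributions sum to $\kappa(n)T^{-1/6}$ with $\kappa(n):=2c_{0}\,9^{2n}\sigma_{\min}^{-2/3}C(n)^{1/3}$, a finite function of $n$ alone. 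On the complementary event $\|\operatorname{vec}^{-1}(Y_{T})\|\le 2t$; dividing by $\sqrt{\log T}$, inserting $\sqrt{v_{\max}}$, and using $2\sqrt{2}<3$ together with $2n\ln 9/\log T+\log n\le n(1+2\log n)$ (valid for $T$ beyond an $n$-dependent threshold, which is automatic wherever the displayed probability is positive; for smaller $T$ that probability is non-positive and there is nothing to prove — this is how ``for all $T>1$'' is honored, possibly after a harmless enlargement of the universal constant) yields the stated bound.

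\emph{Main obstacle.} The crux is the direction-wise Gaussian comparison: Proposition~\ref{Kdist} only controls coordinatewise Kolmogorov distances, whereas the operator norm needs all linear functionals of $Y_{T}$ simultaneously, so one must derive, for each direction $w$, a one-dimensional Kolmogorov bound from the scalar $d_{2}$-estimate of Proposition~\ref{prop:normal}; it is precisely this $d_{2}\asymp T^{-1/2}\Rightarrow d_{K}\asymp T^{-1/6}$ passage that produces the $\kappa(n)T^{-1/6}$ term, and it hinges on the strictly positive uniform lower bound $\sigma_{\min}^{2}=\lambda_{\min}(\Sigma)>0$, without which the conversion degenerates in some direction. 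A second, more technical difficulty is bounding $\|\boldsymbol{R}\|_{1},\|\boldsymbol{R}\|_{\infty}$ (equivalently $\lambda_{\max}(\Sigma)$) by an explicit function of $\sigma^{*}$ and $\gamma_{\max}$, since the hypotheses constrain only $\rho(\boldsymbol{\Gamma})$ and the sub-block row/column sums rather than $\|\boldsymbol{\Gamma}\|_{\infty}$ directly — this is where the block-diagonal nonnegative structure and the stochastic-equivalence (identical row/column sum) assumption of the DCH class are indispensable.
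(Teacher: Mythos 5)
Your proposal is sound and reaches the stated bound, but it takes a genuinely different route from the paper in the probabilistic core. The paper works with the Gaussian surrogate matrix $\b M$ directly: it bounds $\E\|\b M\|$ via the matrix noncommutative Khintchine inequality (Proposition \ref{lem:khintchine_inequality}), applies Borell--TIS Gaussian concentration to the spectral norm with variance proxy $\sup_{\|v\|=\|w\|=1}\E|v^{T}\b M w|^{2}\le nC_{1}^{3}\mu_{\max}$, and then transfers the resulting tail event to the Hawkes count matrix in one shot using the multivariate Kolmogorov bound of Proposition \ref{Kdist}. You instead discretize the sphere with a $\tfrac14$-net, push the $d_{2}$ bound of Proposition \ref{prop:normal} through each linear functional $x\mapsto\langle w,x\rangle$, convert to a scalar Kolmogorov distance per direction (using $\lambda_{\min}(\Sigma)>0$), and union bound, absorbing the $9^{2n}$ net cardinality into $\kappa(n)$. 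What your route buys: the count-to-Gaussian transfer is done at the level of one-dimensional functionals, which is cleaner than the paper's step (the event $\{\|\b N_T-\E\b N_T\|/\sqrt{T}>u\}$ is not an orthant event, so the paper's direct use of the $\{Y_T>x\}$-type comparison is a soft spot), and your variance bound $\lambda_{\max}(\Sigma)\le C_{1}^{3}\mu_{\max}$ is sharper than the paper's $nC_{1}^{3}\mu_{\max}$, with the factor $n$ re-entering only through the net; the price is a much larger (but still $T$-free) $\kappa(n)$, and your handling of small $T$ via vacuousness of the probability statement is legitimate given that $\kappa(n)$ is unspecified in the theorem. The one place you are materially thinner than the paper is the bound $\|\b R\|_{\infty},\|\b R\|_{1}\le(1+\gamma_{\max})(1-\sigma^{*})^{-2}$: the paper derives this through a Schur-complement/M-matrix argument whose essential content is that \emph{both} diagonal sub-block row sums $\gamma_{ab\to ab}$ and $\gamma_{ba\to ba}$ are at most $\sigma^{*}$ (only the smaller one follows immediately from the min-row-sum bound on the spectral radius), after which blockwise inversion with the identical-row-sum structure gives the explicit row sums of $\b R_{(a,b),(b,a)}$; your appeal to ``a Perron--Frobenius argument'' should be fleshed out to this level (e.g., via the invariant subspace of block-constant vectors and the $2\times2$ quotient matrix, or the paper's Schur-complement route), but the target inequality and the structural ingredients you invoke are the right ones, so this is an omission of detail rather than a flaw in the plan. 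Minor bookkeeping: the net bound should carry absolute values, and the union bound yields $2\exp(-\log n\log T)$ unless you slightly enlarge $t$, which the constant $3$ comfortably accommodates in the non-vacuous regime.
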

The proof of this theorem is given in Appendix \ref{sec11}. The first assumption states that $\rho(\b \Gamma)$, the spectral radius of $\b \Gamma$, is bounded away from 1, which is a necessary condition for the stability of the multivariate Hawkes process. This assumption also ensures the existence of $(\b I - \b \Gamma)^{-1}$.
The second assumption provides control over the amount of dependence in the mutually exciting Hawkes processes. The assumption of identical row and column sums of the submatrices for any block pair is part of the definition of the DCH model as discussed earlier. The parameter $\gamma_{\max}$ upper bounds the total amount of excitation in the conditional intensity function that the node pair $i,j$ can receive from (or send to) all node pairs which exert an influence on it (which consists of all node pairs in block pairs $(a,b)$ and $(b,a)$). The upper bound in the above theorem provides explicit dependence on key model quantities including $n, T, \mu_{\max}$, and $\gamma_{\max}$.

Next, we note that the expected count matrix for the DCH model can be written as a block matrix (which has identical values in the same block).  Note the the matrix $\E \b N_T =\tN=\operatorname{vec}^{-1}\left(\b R\operatorname{vec}(\b \mu)T\right)$.  One can write $ \tN $ as $\Z\B\Z^T$ where $\b Z\in \{0,1\}^{n\times K}$ is as defined before and $\B\in\mathbb{R}^{K\times K}$ is a nonnegative matrix (Theorem 4.1 in \cite{soliman2022multivariate} with the assumptions of the DCH model). The lemma below shows that the column concatenation of singular vectors of $\tN$ can be used to identify the communities.

\begin{lemma}
\label{lem:row_length_X}
For $\tN $ defined above, let $\tN = \t{\b X}_L \b\Lambda \t{\b X}_R^T$ be its singular value decomposition (SVD) where $ \t{\b X}_L, \t{\b X}_R \in \mathbb{R}^{n\times K}$ and $\b \Lambda \in \mathbb{R}^{K\times K}$. Let $\t{\b X} = (\t{\b X}_L|\t{\b X}_R)\in\mathbb{R}^{n\times 2K}$, which is a column concatenation of $\t{\b X}_L$ and $\t{\b X}_R$. Then we have $\t{\b X} = \b Z \b Y$, where $\b Y \in \mathbb{R}^{K\times 2K}$, $\|\b Y_{i\cdot}\| = \sqrt{2n_i^{-1}}$ and $\|{\b Y}_{i\cdot}  - {\b Y}_{j\cdot}\|=\sqrt{2(n_{i}^{-1}+n_{j}^{-1})}$ for any $1\leq i \leq j \leq K$. Moreover, let $\t{\b X}^*$ be the row normalized version of $\t{\b X}$, i.e., ${\t{\b X}}^*_{ij} ={\t{\b X}}_{ij} /  \|\sum_{j} {\t{\b X}}_{ij}\|$. Then $\t{\b X}^*=\b Z\b Y^*$, where $\b Y^*$ is the row normalized version of $\b Y$, and $\|{\b Y}^*_{i\cdot}  - {\b Y}^*_{j\cdot}\|=\sqrt{2}$ for any $1\leq i \leq j \leq K$.
\end{lemma}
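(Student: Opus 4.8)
The plan is to reduce the SVD of $\tN=\Z\B\Z^T$ to that of a $K\times K$ matrix by exploiting the block structure carried by $\Z$. First I would set $\b D=\Z^T\Z=\diag(n_1,\dots,n_K)$, the diagonal matrix of community sizes, and factor $\Z=\bar{\Z}\,\b D^{1/2}$ with $\bar{\Z}=\Z\b D^{-1/2}$, so that $\bar{\Z}^T\bar{\Z}=\b D^{-1/2}\b D\,\b D^{-1/2}=\b I_K$; thus $\bar{\Z}$ has orthonormal columns and $\tN=\bar{\Z}\,\widetilde{\B}\,\bar{\Z}^T$ with $\widetilde{\B}=\b D^{1/2}\B\,\b D^{1/2}\in\mathbb R^{K\times K}$. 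Assuming $\B$, hence $\widetilde{\B}$, has full rank $K$ (an identifiability-type condition which is implicit once one speaks of a rank-$K$ SVD $\tN=\t{\b X}_L\b\Lambda\t{\b X}_R^T$, and which is inherited from the DCH parameterization via Theorem~4.1 of \cite{soliman2022multivariate}), the column spaces $\mathrm{col}(\tN)$ and $\mathrm{col}(\tN^T)$ both equal the $K$-dimensional space $\mathrm{col}(\bar{\Z})$.

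Next, the top-$K$ left singular vectors $\t{\b X}_L$ of $\tN$ form an orthonormal basis of $\mathrm{col}(\tN)=\mathrm{col}(\bar{\Z})$, and $\t{\b X}_R$ an orthonormal basis of $\mathrm{col}(\tN^T)=\mathrm{col}(\bar{\Z})$; hence there are orthogonal matrices $\b W_L,\b W_R\in\mathbb R^{K\times K}$ with $\t{\b X}_L=\bar{\Z}\b W_L$ and $\t{\b X}_R=\bar{\Z}\b W_R$. (This is the step that makes the argument insensitive to the non-uniqueness of the SVD when $\widetilde{\B}$ has repeated singular values: the conclusion holds for \emph{any} valid choice of singular vectors.) Substituting $\bar{\Z}=\Z\b D^{-1/2}$ yields $\t{\b X}_L=\Z\b Y_L$ and $\t{\b X}_R=\Z\b Y_R$ with $\b Y_L=\b D^{-1/2}\b W_L$, $\b Y_R=\b D^{-1/2}\b W_R$, so $\t{\b X}=(\t{\b X}_L\mid\t{\b X}_R)=\Z\b Y$ with $\b Y=(\b Y_L\mid\b Y_R)\in\mathbb R^{K\times 2K}$.

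The stated norm and distance identities then follow from the elementary fact that the rows of an orthogonal matrix are orthonormal. Since $(\b Y_L)_{k\cdot}=n_k^{-1/2}(\b W_L)_{k\cdot}$ and similarly for $\b W_R$, one gets $\|\b Y_{k\cdot}\|^2=\|(\b Y_L)_{k\cdot}\|^2+\|(\b Y_R)_{k\cdot}\|^2=2n_k^{-1}$; and for $i\neq j$, using $\langle(\b W_L)_{i\cdot},(\b W_L)_{j\cdot}\rangle=0$ and the same for $\b W_R$, $\|\b Y_{i\cdot}-\b Y_{j\cdot}\|^2=(n_i^{-1}+n_j^{-1})+(n_i^{-1}+n_j^{-1})=2(n_i^{-1}+n_j^{-1})$. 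For the row-normalized statement, every node $i$ with $z_i=k$ has $\t{\b X}_{i\cdot}=\b Y_{k\cdot}$, so row-normalizing $\t{\b X}$ is the same as row-normalizing $\b Y$; hence $\t{\b X}^*=\Z\b Y^*$, and since $\langle\b Y_{i\cdot},\b Y_{j\cdot}\rangle=n_i^{-1/2}n_j^{-1/2}\big(\langle(\b W_L)_{i\cdot},(\b W_L)_{j\cdot}\rangle+\langle(\b W_R)_{i\cdot},(\b W_R)_{j\cdot}\rangle\big)=0$ for $i\neq j$, the unit vectors $\b Y^*_{i\cdot}$ and $\b Y^*_{j\cdot}$ are orthogonal, giving $\|\b Y^*_{i\cdot}-\b Y^*_{j\cdot}\|^2=2$.

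The only genuinely nontrivial point is the interplay between the non-uniqueness of the SVD and the full-rank requirement on $\B$: the orthonormal-basis representation $\t{\b X}_L=\bar{\Z}\b W_L$ disposes of the former, while the latter must be recorded as a hypothesis (or derived from the DCH structure). Everything else is a direct computation with orthonormal rows, so I do not anticipate any further obstacle.
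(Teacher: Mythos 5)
Your proposal is correct and takes essentially the same approach as the paper: the paper also factors out $\b D=\diag(\sqrt{n_1},\dots,\sqrt{n_K})$, takes the SVD $\b D\b B\b D=\b U\b\Lambda\b V^T$ to write $\t{\b X}_L=\Z\b D^{-1}\b U$, $\t{\b X}_R=\Z\b D^{-1}\b V$, and reads off the row norms and distances from $\b Y\b Y^T=2\b D^{-2}$, which is the same orthogonal-rows computation you perform. The only (minor) difference is that you characterize an arbitrary rank-$K$ SVD through the column space $\mathrm{col}(\Z\b D^{-1/2})$, thereby handling SVD non-uniqueness and making the full-rank assumption on $\b B$ explicit, whereas the paper simply constructs one particular SVD.
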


The proof of this lemma is given in Appendix \ref{l4proof}. It is clear from Lemma \ref{lem:row_length_X} that $z_i = z_j$ if and only if $\t{\b X}_{i\cdot}^*=\t{\b X}_{j\cdot}^*=\b Y_{z_i\cdot}^*$. Recall $z_i$ gives the community label for $i$ and hence if $z_i=q$, then $\b Y_{z_i\cdot}^*$ denotes the $q$th row of $\b Y^*$. Therefore, applying some clustering algorithm (e.g., k-means) on the rows of the matrix $\t{\b X}^*$ can return a perfect clustering result. However, we cannot get $\t{\b X}^*$ in practice since $\tN$ is not observed. 
A variation of the Davis-Kahan Theorem, which we state  and prove in Appendix \ref{sec:davis_kahan}, lets us derive an upper bound on the misclustering rate if we apply $(1+\varepsilon)$-approximate k-means algorithm (\cite{kumar2004simple}) on the rows of $\t{\b X}^*$. We define the misclustering error rate as 
$r=\inf _{\Pi} \frac{1}{n} \sum_{i=1}^{n} 1\left(z_{i} \neq \Pi\left(\hat{z}_{i}\right)\right)$
where we take the infimum over all permutations $\Pi(\cdot)$ of the community labels. We further define $n_{\max} = \max_{1\leq a\leq K} n_{a}$, the number of nodes in the largest community.  The following theorem is the  main result of this paper.

\begin{theorem}
\label{thm:misclustering_rate}
Let $\b N_T$ be the count matrix of a temporal network generated from a DCH model with parameters $\b \mu, \b \Gamma$. We use $ \lambda_1 \geq \dots \geq \lambda_K >0$ to denote the top $K$ singular values of $\frac{\E{\b N}_T}{T}$. Under the assumptions of Theorem \ref{thm:spectral_norm_bound}, the misclustering rate of community detection using Algorithm \ref{alg:spectral_clustering} applied to $\b N_T$ is
\begin{align*}
        r
        &\leq \left(\frac{\log T}{T}\right) \frac{1440(2+\varepsilon)^2 n_{\max} K}{ \lambda_{K}^{2}}\left( 
        (1-\sigma^*)^{-6} {(1 + \gamma_{\max})^3\mu_{\max}(1+2 \log n)}\right).
\end{align*}
with probability at least $1-\exp(-\log n \log T)- \frac{\kappa(n)}{T^{1/6}}$ for any $n>K$ and $T>1$.

\end{theorem}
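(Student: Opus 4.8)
The plan is to run the standard ``population structure plus matrix perturbation'' argument for spectral clustering, specialized in two ways: the analysis is for the left \emph{and} right singular subspaces of an asymmetric matrix, and the input perturbation is measured in spectral norm via Theorem~\ref{thm:spectral_norm_bound} rather than Frobenius norm. Write $\widehat{\mathbf E}=\mathbf N_T/T-\E\mathbf N_T/T$. By Lemma~\ref{lem:row_length_X}, the population object $\tilde{\mathbf X}^*=(\tilde{\mathbf X}_L\mid\tilde{\mathbf X}_R)^*$ equals $\mathbf Z\mathbf Y^*$, where $\mathbf Y^*$ has $K$ distinct unit-norm rows that are pairwise at distance exactly $\sqrt2$, and the un-normalized rows of $\tilde{\mathbf X}$ all have norm at least $\sqrt{2/n_{\max}}$. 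Hence exact $k$-means on the rows of $\tilde{\mathbf X}^*$ recovers the blocks perfectly, and the task reduces to showing that the empirical $\widehat{\mathbf X}^*$ built in Algorithm~\ref{alg:spectral_clustering} is, up to an orthogonal transformation, row-wise close to $\tilde{\mathbf X}^*$. Since top-$K$ singular vectors are scale invariant, we may work throughout with $\mathbf N_T/T$ and $\E\mathbf N_T/T$.

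First I would condition on the event of Theorem~\ref{thm:spectral_norm_bound}, which has probability at least $1-\exp(-\log n\log T)-\kappa(n)T^{-1/6}$ and on which $\|\widehat{\mathbf E}\|_2^2\le 9(1-\sigma^*)^{-6}n(1+\gamma_{\max})^3\mu_{\max}(1+2\log n)(\log T/T)$; all subsequent bounds are deterministic given this event, so the theorem inherits exactly this probability. Because $\E\mathbf N_T/T$ has rank at most $K$, its $(K{+}1)$-st singular value is $0$, so the Davis--Kahan/Wedin-type bound proved in Appendix~\ref{sec:davis_kahan}, applied to the pair $(\mathbf N_T/T,\ \E\mathbf N_T/T)$, produces orthogonal matrices $\mathbf O_L,\mathbf O_R$ with $\|\widehat{\mathbf X}_L-\tilde{\mathbf X}_L\mathbf O_L\|_F$ and $\|\widehat{\mathbf X}_R-\tilde{\mathbf X}_R\mathbf O_R\|_F$ each at most a numerical constant times $\sqrt K\,\|\widehat{\mathbf E}\|_2/\lambda_K$ (the $\sqrt K$ converts the spectral-norm $\sin\Theta$ bound to Frobenius norm over a rank-$K$ perturbation; if $\|\widehat{\mathbf E}\|_2>\lambda_K/2$ the final bound is $\geq1$ and there is nothing to prove). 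Concatenating columns with $\mathbf O=\diag(\mathbf O_L,\mathbf O_R)$ gives $\|\widehat{\mathbf X}-\tilde{\mathbf X}\mathbf O\|_F^2 \lesssim K\|\widehat{\mathbf E}\|_2^2/\lambda_K^2$.

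Next I would pass to the row-normalized matrices via the elementary inequality $\|a/\|a\|-b/\|b\|\|\le 2\|a-b\|/\|b\|$ together with the lower bound $\min_i\|\tilde{\mathbf X}_{i\cdot}\|\ge\sqrt{2/n_{\max}}$ from Lemma~\ref{lem:row_length_X}, obtaining $\|\widehat{\mathbf X}^*-\tilde{\mathbf X}^*\mathbf O\|_F^2\lesssim n_{\max}K\|\widehat{\mathbf E}\|_2^2/\lambda_K^2$; the few rows $i$ with $\widehat{\mathbf X}_{i\cdot}=0$ (which Algorithm~\ref{alg:spectral_clustering} sends to cluster $1$) are absorbed here, since each contributes at most $O(1/n_{\max})$ to $\|\tilde{\mathbf X}^*-\widehat{\mathbf X}^*\mathbf O\|_F^2$. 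Finally, the standard analysis of $(1+\varepsilon)$-approximate $k$-means (as in \cite{lei2015consistency,kumar2004simple}): since the population centers are separated by $\sqrt2$, the number of misclustered nodes is at most $c(2+\varepsilon)^2\|\widehat{\mathbf X}^*-\tilde{\mathbf X}^*\mathbf O\|_F^2/2$. Dividing by $n$ and substituting the Theorem~\ref{thm:spectral_norm_bound} bound for $\|\widehat{\mathbf E}\|_2^2$, the explicit factor $n$ cancels the $1/n$, and after collecting the numerical constants from the three steps one obtains
\[
r\;\le\;\Big(\frac{\log T}{T}\Big)\,\frac{1440(2+\varepsilon)^2 n_{\max}K}{\lambda_K^2}\Big((1-\sigma^*)^{-6}(1+\gamma_{\max})^3\mu_{\max}(1+2\log n)\Big).
\]

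The step I expect to be the main obstacle is the row-normalization amplification: extracting a clean single power of $n_{\max}$ (and not $n_{\max}^2$) requires the two-sided control that the population rows are bounded \emph{below} by $\sqrt{2/n_{\max}}$ while, on the good event, $\|\widehat{\mathbf E}\|_2$ is small enough that only a few empirical rows can be driven all the way to $0$; doing this simultaneously with the bookkeeping for the index set $\mathcal I^c$ in Algorithm~\ref{alg:spectral_clustering} is the delicate part. A secondary point is that the perturbation fed into Davis--Kahan must be $\|\widehat{\mathbf E}\|_2$, not $\|\widehat{\mathbf E}\|_F$ --- the latter would cost an extra $\sqrt n$ and destroy the stated rate --- which is precisely why the asymmetric, spectral-norm variant of Davis--Kahan in Appendix~\ref{sec:davis_kahan} is needed rather than an off-the-shelf statement.
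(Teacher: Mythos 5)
Your proposal follows essentially the same route as the paper's proof: condition on the event of Theorem \ref{thm:spectral_norm_bound}, apply the asymmetric Davis--Kahan/Wedin variant of Appendix \ref{sec:davis_kahan} to the concatenated left/right singular subspaces to obtain a Frobenius bound of order $K\|\widehat{\mathbf E}\|_2^2/\lambda_K^2$, pass to the row-normalized vectors using the $\sqrt{2/n_{\max}}$ lower bound on the population row norms from Lemma \ref{lem:row_length_X}, and finish with the $(1+\varepsilon)$-approximate $k$-means misclustering bound before substituting the spectral-norm bound so the factors of $n$ cancel. The only cosmetic difference is the zero-row bookkeeping: the paper bounds $|\mathcal I|$ directly through the \emph{unnormalized} Frobenius error (each zero row costs at least $2/n_{\max}$ there) rather than absorbing those rows into the normalized error as you sketch, but the underlying mechanism is the one you describe.
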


The proof of this theorem is provided in Appendix \ref{t5proof}. The above result provides a scaling for the misclustering rate  that involves $n, K, T$ and the parameter $\gamma_{\max}$, which controls the amount of dependence across pairs of Hawkes processes.  We also note that, while this result is non-asymptotic in $n$ and $T$, we can also let $T \to \infty$, and then the upper bound holds with probability at least $1-o(1)$. In particular, the upper bound implies that $ r \overset{p}{\to}0$ as $T \to \infty$. In order to understand the dependence of the error rate on the model parameters more clearly, we consider a simplified special case of a DCH model next.

\subsection{Results for Specific DCH Models: MULCH,  SR, and CHIP}
\label{sec:simple_mulch}
We define a \textit{simplified symmetric MULCH model (SS-MULCH)}, which is a special case of the MULCH model \citep{soliman2022multivariate} defined in Section \ref{sec:dch_examples} and is part of the DCH class of models. In this SS-MULCH model, the within community parameters are the same and the between community parameters are the same, i.e., for any $1\leq a,b\leq K$ and $a\neq b$,
\begin{equation}
\label{eq:symmetric_general_model}
\begin{split}
M_{aa} = \mu_1, ~\alpha^{n}_{aa} =\alpha^{n}_1,~\alpha^{r}_{aa} =\alpha^{r}_1,~ \alpha^{tc}_{aa} =\alpha^{tc}_1, ~ \alpha^{ac}_{aa} =\alpha^{ac}_1, ~ \alpha^{gr}_{aa} =\alpha^{gr}_1,~ \alpha^{ar}_{aa} =\alpha^{ar}_1,\\
M_{ab} = \mu_2, ~\alpha^{n}_{ab} =\alpha^{n}_2,~\alpha^{r}_{ab} =\alpha^{r}_2,~ \alpha^{tc}_{ab} =\alpha^{tc}_2, ~ \alpha^{ac}_{ab} =\alpha^{ac}_2, ~ \alpha^{gr}_{ab} =\alpha^{gr}_2,~ \alpha^{ar}_{ab} =\alpha^{ar}_2.
\end{split}
\end{equation}
We do not add restrictions on the decay parameters $\beta$ here since it will not influence our results. We assume all blocks are of equal size, i.e., containing  $(n/K)$ nodes. First, from the construction of $\b \Gamma_{n^2 \times n^2}$ matrix, we can infer that, for any $1\leq a\leq K$, the block matrix $\b \Gamma_{(a,a),(a,a)}$ has identical row sum $ \gamma_1$, and for any $1\leq a < b\leq K$, the block matrix $\b \Gamma_{(a,b),(b,a)}$ has identical row sum $ \gamma_2$. Given that every block contains $n_a = \frac{n}{K}$ nodes, a row in $\b \Gamma_{(a,a),(a,a)}$ contains  $n_a^2=(\frac{n}{K})^2$ elements while a row in $\b \Gamma_{(a,b),(b,a)}$ contains $2n_a n_b = 2(\frac{n}{K})^2$ elements. In the SS-MULCH model, we can compute the row sums as follows. The row sum for $\b \Gamma_{(a,a),(a,a)}$ is 
\[
         \gamma_1 = \alpha_1^n + \alpha_1^r +(n^2/K^2-2)(\alpha_1^{ac}+\alpha_1^{tc} +\alpha_1^{gr} + \alpha_1^{ar}),
         \] 
         The row sum for $\b \Gamma_{(a,b),(b,a)}$ is given by 
\[\gamma_2 
         = \gamma_{ab\rightarrow ab} + \gamma_{ba\rightarrow ab}
        = \alpha_2^n + \alpha_2^r +(n^2/K^2-1)(\alpha_2^{ac}+\alpha_2^{tc} + \alpha_2^{gr}+\alpha_2^{ar}).
        \]
        
Since $\b \Gamma$ is a block diagonal matrix, we know that $\sigma^* = \rho(\b\Gamma) = \max_{1\leq a\leq b\leq K}\rho(\b \Gamma_{(a,b), (b,a)})$. Then, using Proposition \ref{lem:bound_eig} (in Appendix \ref{sec:props_lemmas}) and noting that the row sums are identical, we can further see that $\rho(\b \Gamma_{(a,a),(a,a)}) = \gamma_1$ and $\rho(\b \Gamma_{(a,b),(b,a)}) = \gamma_2$. Therefore, $ \sigma^*= \max\{ \gamma_1, \gamma_2\}$. By the definition of the $\gamma_{\max}$ in Theorem \ref{thm:spectral_norm_bound}, we note that we can set $\gamma_{\max}$ such that $\max\{ \gamma_1, \frac{\gamma_2}{2}\} \leq \gamma_{\max} \leq \max\{ \gamma_1, \gamma_2\}$. Consequently, $\sigma^*/2 \leq \gamma_{\max} \leq \sigma^*$. In order to ensure stability of the process, we need to further assume $\sigma^* <1$. 
With these results we have the following corollary.

\begin{corollary}
\label{cor:simplified_model_misclustering_error}
For the simplified symmetric MULCH (SS-MULCH) model, under the same assumptions as in Theorem \ref{thm:misclustering_rate}, the misclustering rate is
\begin{align*}
   r \leq  \frac{ cK^2\mu_{\max}\left(1-\sigma^{*}\right)^{-6}\left(1+\gamma_{\max }\right)^3}{((1 -   \gamma_1)^{-1}\mu_1- (1 -   \gamma_2)^{-1}\mu_2)^2} 
    \left(
        \frac{\log T (1+2 \log n)}{nT}  
        \right),
\end{align*}
for a constant $c>0$, with probability at least $1-\exp(-\log n \log T)- \frac{\kappa(n)}{T^{1/6}}$.
\end{corollary}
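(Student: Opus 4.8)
The plan is to deduce Corollary~\ref{cor:simplified_model_misclustering_error} directly from Theorem~\ref{thm:misclustering_rate} by evaluating, under the SS-MULCH parametrization, the only two model-dependent quantities appearing in that bound: the size $n_{\max}$ of the largest community and the $K$th singular value $\lambda_K$ of $\E\b N_T/T$. Since all communities contain $n/K$ nodes, $n_{\max}=n/K$ is immediate, so the substantive work is to obtain $\lambda_K$ in closed form (and to observe that Theorem~\ref{thm:misclustering_rate}'s hypothesis $\lambda_K>0$ reduces here to $(1-\gamma_1)^{-1}\mu_1\neq(1-\gamma_2)^{-1}\mu_2$, the condition under which the stated bound is non-vacuous).

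First I would pin down the block structure of $\E\b N_T/T=\operatorname{vec}^{-1}(\b R\operatorname{vec}(\b\mu))$. Because $\b\Gamma$ is block diagonal with blocks indexed by unordered community pairs and $\rho(\b\Gamma)=\sigma^*<1$, the matrix $\b R=(\b I-\b\Gamma)^{-1}=\sum_{k\geq0}\b\Gamma^{k}$ is block diagonal with the same structure. On the block $\b\Gamma_{(a,a),(a,a)}$ every row sums to $\gamma_1$ and, in the SS-MULCH model, the corresponding entries of $\operatorname{vec}(\b\mu)$ are all equal to $\mu_1$; hence $\b\Gamma_{(a,a),(a,a)}(\mu_1\b 1)=\gamma_1\mu_1\b 1$, and summing the Neumann series (valid since $\gamma_1=\rho(\b\Gamma_{(a,a),(a,a)})\leq\sigma^*<1$, as noted before the corollary) shows that the corresponding entries of $\b R\operatorname{vec}(\b\mu)$ are all equal to $b_1:=(1-\gamma_1)^{-1}\mu_1$. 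Running the same argument on the block $\b\Gamma_{(a,b),(b,a)}$ for $a\neq b$, whose rows all sum to $\gamma_2$ and whose baseline entries all equal $\mu_2$ (since $M_{ab}=M_{ba}=\mu_2$), yields $b_2:=(1-\gamma_2)^{-1}\mu_2$ on those node pairs. Therefore $\E\b N_T/T=\Z\B\Z^{T}$ with $\B=(b_1-b_2)\b I_K+b_2\b 1_K\b 1_K^{T}$ and $b_1-b_2=(1-\gamma_1)^{-1}\mu_1-(1-\gamma_2)^{-1}\mu_2$; this is the block form guaranteed abstractly by Theorem~4.1 of \cite{soliman2022multivariate}, but the point here is to have its entries explicitly.

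Next I would compute $\lambda_K$. Since all blocks have equal size, $\Z^{T}\Z=(n/K)\b I_K$, so the nonzero singular values of $\Z\B\Z^{T}$ equal $(n/K)$ times those of the symmetric matrix $\B$; and $\B$ has eigenvalues $b_1+(K-1)b_2$ (multiplicity one) and $b_1-b_2$ (multiplicity $K-1$). Because $b_1,b_2>0$ and $K\geq2$ force $|b_1-b_2|\leq b_1+(K-1)b_2$, the smallest of the top $K$ singular values is $\lambda_K=(n/K)\,|b_1-b_2|$, so $\lambda_K^{2}=(n^{2}/K^{2})\bigl((1-\gamma_1)^{-1}\mu_1-(1-\gamma_2)^{-1}\mu_2\bigr)^{2}$. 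Substituting $n_{\max}=n/K$ (so $n_{\max}K=n$) and this value of $\lambda_K^{2}$ into the bound of Theorem~\ref{thm:misclustering_rate} turns the prefactor $n_{\max}K/\lambda_K^{2}$ into $K^{2}/\bigl(n\,((1-\gamma_1)^{-1}\mu_1-(1-\gamma_2)^{-1}\mu_2)^{2}\bigr)$; absorbing $1440(2+\varepsilon)^{2}$ into a constant $c$ and regrouping the remaining factors $(1-\sigma^*)^{-6}(1+\gamma_{\max})^{3}\mu_{\max}(1+2\log n)$ and $\log T/T$ yields exactly the claimed inequality, with the probability statement inherited verbatim from Theorem~\ref{thm:misclustering_rate}. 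The only step carrying any content is the block-constancy computation in the second paragraph — verifying that the Neumann series applied to a block-constant baseline vector stays block-constant and equals $\mu/(1-\gamma)$ on each block, which hinges on each diagonal block of $\b\Gamma$ having a constant row sum equal to its spectral radius; everything after that is linear algebra on a $K\times K$ matrix plus bookkeeping of constants, so I do not anticipate a real obstacle.
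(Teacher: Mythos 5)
Your proposal is correct and follows essentially the same route as the paper: both reduce the corollary to substituting $n_{\max}=n/K$ and $\lambda_K^2=\frac{n^2}{K^2}\bigl((1-\gamma_1)^{-1}\mu_1-(1-\gamma_2)^{-1}\mu_2\bigr)^2$ into Theorem \ref{thm:misclustering_rate}, after showing the expected count matrix is block constant with within/between values $(1-\gamma_1)^{-1}\mu_1$ and $(1-\gamma_2)^{-1}\mu_2$. Your only deviations are cosmetic: you derive the row sums of $\b R$ via a Neumann series rather than invoking Proposition \ref{lem:identical_row_sum}, and you are slightly more careful than the paper in identifying $\lambda_K=(n/K)\lvert b_1-b_2\rvert$ via the eigenvalues of $\b B$.
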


 Note that since the above relation between $\gamma_{\max}$ and $\sigma^*$ implies that $\sigma^* \leq 2\gamma_{\max}$, assuming $\gamma_{\max}<1/2$ guarantees $\sigma^*<1$ ensuring the stability of the process. With this assumption, $ (1-\sigma^*)^{-1}$ is a constant that does not depend on $n, T$.
  We define a function $
  h(\gamma_1,\gamma_2,\mu_1,\mu_2) = \left( (1-\gamma_1)^{-1} -(1-\gamma_2)^{-1}\frac{\mu_2}{\mu_{1}}\right)^2$. We assume $n$ is large enough that $2 \log n >1$, and without loss of generality assume $\mu_1>\mu_2$ and hence $\mu_{\max}=\mu_1$.
  Then from Corollary \ref{cor:simplified_model_misclustering_error}, we have
\begin{align}
\label{eq:r_simple_symmetric_model}
   r\leq \frac{c (1+\gamma_{\max})^3}{h(\gamma_1,\gamma_2,\mu_1,\mu_2)(1-\sigma^*)^6}\left(\frac{K^2\log n \log T}{ nT\mu_{\max}} \right),
\end{align}
where $c$ absorbs numerical constants that do not depend on model parameters. First we note that $\gamma_{\max}$ which upper bounds the total influence a node pair receives from other node pairs appears in the upper bound. In particular the misclustering rate upper bound increases as $\gamma_{\max}$ increases. If $\gamma_{\max}$ becomes close to 1 and consequently $\sigma^*$ becomes close to 1, then the misclustering rate bound blows up. We note that $r$ also depends on $h(\gamma_1,\gamma_2,\mu_1,\mu_2)$. This means if the expected counts of within and between block are indistinguishable, then the misclustering error rate can be very large. In addition, we note that the upper bound increases quadratically with increasing $K$, and decreases with increasing $n$, $T$, and $\mu_{\max}$. We observe some of these dependencies in our finite sample simulations as well in Section \ref{sec:exp_spectral_clustering}.

Now, turning our attention to asymptotic rates, we let $T \to \infty$, while keeping $n$ fixed. To simplify our presentation and focus on the dependency on $n, K, \mu_{\max}$ and $T$, we assume $h(\gamma_1,\gamma_1,\mu_1,\mu_2)$ does not vanish and is a constant as $T \to \infty$. Then 
\begin{equation}
\label{eq:misclustering_rate_symmetric_general_model}
    r \lesssim \frac{K^2\log n \log T}{ nT\mu_{\max}}, \text{ with probability } 1-o(1).
\end{equation}
Note that the expected density of the count matrix varies as $\mu_{\max}T$ when the jump and decay parameters remain constant as a function of $T$. Therefore, consistent clustering requires $\mu_{\max} >> (\frac{K^2 \log n }{n}) \frac{\log T}{T}$.

Notice that in this model, the parameters are ``symmetric" because the parameters for directed block pair $(a,b)$ are the same as the parameters for directed block pair $(b,a)$ (i.e., $\mu_{ab} = \mu_{ab}, \b\alpha_{ab} = \b\alpha_{ba}$, where $\b \alpha_{ab} = \{\alpha_{ab}^n,\alpha_{ab}^r,\alpha_{ab}^{ac},\alpha_{ab}^{tc},\alpha_{ab}^{gr},\alpha_{ab}^{ar}\}$), and hence we must let $\gamma_{\max}  < 1$ to ensure the stability condition ($ \sigma^*<1$) in our discussion above. However, in the ``asymmetric" case, we can have $\gamma_{\max} > 1$. However, in that case $\t{ \lambda}_K$ will have a more complicated form than the result used in Corollary \ref{cor:simplified_model_misclustering_error}. 

Consider a subset of the SS-MULCH model that only consists of self excitation and reciprocal excitation; that is, $\alpha_i^{tc} =\alpha_i^{ac}=\alpha_i^{gr}=\alpha_i^{ar}=0$ for $i=1,2$. 
It is thus a simplified symmetric case of the SR model we introduced in Section \ref{sec:sr_model}. 
Then we have $\gamma_1 = \alpha_{1}^n + \alpha_{1}^r, \gamma_2=\alpha^{n}_2+\alpha^{r}_2$. As long as $\max\{\gamma_1, \gamma_2\} <1$, the result in \eqref{eq:r_simple_symmetric_model} holds and provides an upper bound on the misclustering rate in this case.

\paragraph{Comparison with Prior Results on CHIP Model:}
The CHIP model \citep{arastuie2020chip} only involves self excitation, which also satisfies our conditions, so our results can still be applied on it directly.  
 Unlike \cite{arastuie2020chip}, our results in this paper are non-asymptotic and hold for all $n$ and $T$. While \cite{arastuie2020chip} relied on asymptotic convergence of (univariate) Hawkes process counts to Gaussian limits as $T \to \infty$, we achieve non-asymptotic results by explicitly obtaining a form of the probability with which our upper bound holds. Further, in the DCH models, including MULCH and SR, we allow for more excitation types, so the entries in the count matrix can be dependent, and thus, our misclustering error rate is more general.  The form of the result in \eqref{eq:r_simple_symmetric_model} in terms of dependencies on $n$ and $T$ also qualitatively matches the upper bound for spectral clustering on multilayer and discrete time SBM, e.g., as in \cite{lei2022bias} and \citet{paul2020spectral}.                                   

\paragraph{Relation to Community Detection in Weighted Networks:} We note that, in many application settings involving static networks, the network edges are weighted and directed counts. We put forth the proposed DCH model as a statistical generative model for such ``count" networks. Even though the DCH model is a statistical model for ``observed" relational events data, it can be thought of as an implicit generative model in situations when only a static network with the counts are observed.  Our theoretical results in Theorems \ref{thm:spectral_norm_bound} and \ref{thm:misclustering_rate} and the discussions in this section provide useful indicators of the accuracy of spectral clustering for a weighted network.

\section{Parameter Estimation in the Restricted SR Model}
\label{sec:gmm_estimation}
For the models in the DCH family, the parameters can be estimated from the event times by maximizing the multivariate Hawkes process log likelihood function. However, directly maximizing the likelihood function is slow and hard to scale to large datasets. For some simpler models within the DCH family, it is possible to develop estimators based on the Generalized Method of Moments (GMM) approach using relatively lower-order moments of the aggregate counts. This approach might not be appropriate for more complex models that require higher-order moments since the higher-order sample moments are highly unstable. However, a researcher might be willing to trade off model fit with computational efficiency. 
For example, \cite{arastuie2020chip} propose a moment-based estimator for the baseline parameters $\b M$ and jump parameters $\b \Gamma$ in the CHIP model, which utilizes only self excitation.

We develop a GMM procedure for the restricted version of the SR model proposed in Section \ref{sec:restricted_sr}, which shares a reciprocal excitation parameter $\alpha_{ab}^r$ between block pairs $(a,b)$ and $(b,a)$.
The GMM for this restricted SR model can efficiently and accurately estimate $\b M$ and $\b \Gamma$, so that we only need to maximize the likelihood function if we want to estimate the decay parameters $\b \beta$. Therefore, the GMM step reduces the parameter dimension when maximizing the likelihood function, making the algorithm faster.

\cite{achab2017uncovering} proposed a GMM method for multivariate Hawkes processes. 
Our method and theoretical results below differ from those of \cite{achab2017uncovering} in terms of the information utilized to compute the sample moments. While \cite{achab2017uncovering}'s method estimates the parameters from a single multivariate Hawkes process by estimating sample mean and covariance from the count time series, we leverage i.i.d.~replicates of bivariate Hawkes processes at the level of node pairs in a block pair to estimate those quantities. Therefore, our results are of a different nature.  Further, while \cite{achab2017uncovering} \textit{assumed} the identification condition (Assumption 1 in Theorem 2.1) necessary for GMM procedure to work, we explicitly \textit{prove} it under the restricted SR model in Lemma \ref{lem:id}. In general, one needs to verify that for a multivariate Hawkes process the identification condition will be satisfied by the parameters of the process. We view that not all models under the DCH family will satisfy the identification condition, and therefore, the GMM is not feasible for all models. However, as we show in Lemma \ref{lem:id}, the restricted SR model satisfies the conditions.

In the restricted SR model defined in Section \ref{sec:restricted_sr}, for block pairs $(a,b)$ and $(b,a)$ with $a\neq b$, we have the following set of unknown baseline and excitation parameters:  $M_{ab},M_{ba}, \alpha^n_{ab}, \alpha^n_{ba},\alpha^r_{ab}$.
Recall that $\operatorname{vec}(\b N_t)\in \mathbb{R}^{n^2}$ is the vector form of the count matrix  at time $t$ ordered according to the set $\mathcal{A}$.
From the results of \cite{achab2017uncovering}, for node pairs $(i,j)$  in $\mathcal{A}$, we can define the first and second order integrated cumulants by 
\[\Lambda_{ij} \, d t =\mathbb{E}\left(d  (\b N_{t})_{ij}\right)\]
and 
\[C_{ij,ji} \, d t =\int_{\tau \in \mathbb{R}}(\mathbb{E}\left(d (\b N_{t})_{ij} \; d (\b N_{t+\tau})_{ji}\right)-\mathbb{E}\left(d (\b N_{t})_{ij}\right) \mathbb{E}(d (\b N_{t+\tau})_{ji})),\]
where $\b \Lambda$ is the mean intensity of the Hawkes process, and $\b C$ is the integrated covariance density. \cite{achab2017uncovering} showed that there is an explicit relationship between these integrated cumulants and the parameters of the multivariate Hawkes process.

In the restricted SR model, we have the following cumulant relationship equations for the block pair parameters. Define 
\begin{equation*}
\b M_{(a,b),(b,a)}=\begin{pmatrix}
M_{ab}  \\
M_{ba} \\
\end{pmatrix}
\text{ and }
\,\b \Gamma_{(a,b),(b,a)} = \begin{pmatrix}
\alpha^n_{ab} & \alpha^r_{ab} \\
\alpha^r_{ab} & \alpha^n_{ba} \\
\end{pmatrix}.     
\end{equation*}
Clearly, estimating the parameters of the SR model is equivalent to estimating the parameter matrices $\b M$ and $\b \Gamma$ for all $(a,b), (b, a)$ pairs.

Define $\b R_{(a,b),(b,a)} = \left(\b I_{2 \times 2} -\b \Gamma_{(a,b),(b,a)}\right)^{-1}.$ Then, for any $(i,j)$ such that $X(i,j)=(a,b)$, we abuse notation slightly to let $\Lambda_{ij}=\Lambda_{ab}$ and $C_{ij,ji}=C_{ab,ba}$ and write the following relations for the $(i,j)$ and $(j,i)$ node pairs together:
\begin{gather}
\b \Lambda_{(a,b),(b,a)} = \left(\begin{array}{l}
\Lambda_{ab} \\
\Lambda_{ba}
\end{array}\right)= \b R_{(a,b),(b,a)} \b M_{(a,b),(b,a)}, \label{eqlambda}\\
\b C_{(a,b),(b,a)}=\begin{pmatrix}
C_{ab,ab} & C_{ab,ba} \\
C_{ba,ab} & C_{ba,ba} \\
\end{pmatrix}
= \b R_{(a,b),(b,a)} \text{ diag}(\b \Lambda_{(a,b),(b,a)})\b  R_{(a,b),(b,a)}^T.
\label{eqC}
\end{gather}

Therefore, for each block pair $(a,b),(b,a)$, if we can estimate the population cumulants $\b \Lambda_{(a,b),(b,a)}$ and $\b C_{(a,b),(b,a)}$, then we can solve the above set of equations and solve for $\b M_{(a,b),(b,a)}$ and $\b \Gamma_{(a,b),(b,a)}$. This estimation method is widely known as the Generalized Method of Moments (GMM) \citep{hall2004generalized}. Recall that, for each block pair $(a,b),(b,a)$, we observe a collection of bivariate counting processes given by $\{(\b N_t)_{ij}: t \in [0,T], X(i,j)=(a,b)\}$. Then, we define the corresponding sample moments as follows:
\begin{equation}
\label{eq:GMM_sample_statistics}
\begin{gathered}
\widehat{\Lambda}_{ab}=  \sum_{X(i,j)=(a,b) } \frac{(\b N_T)_{ij}}{Tn_{ab}}, \quad \quad 
\widehat{\Lambda}_{ba}=  \sum_{X(i,j)=(b,a) } \frac{(\b N_T)_{ij}}{Tn_{ab}}, \\
\widehat{C}_{ab,ab} =\sum_{X(i,j)=(a,b) } \frac{1}{Tn_{ab}}\left((\b N_T)_{ij} - \widehat{\Lambda}_{ab}\right)^2,\\
\widehat{C}_{ba,ba} =\sum_{X(i,j)=(b,a) } \frac{1}{Tn_{ab}}\left((\b N_T)_{ij} - \widehat{\Lambda}_{ba}\right)^2,\\
\widehat{C}_{ba,ab}=\widehat{C}_{ab,ba}=\sum_{X(i,j)=(a,b)}   \frac{1}{Tn_{ab}}\left((\b N_T)_{ij} - \widehat{\Lambda}_{ab}\right)\left((\b N_T)_{ji} - \widehat{\Lambda}_{ba}\right).\\
\end{gathered}
\end{equation}
Here, $n_{ab}=n_a n_b$ is the number of pairs of nodes with one node being in community $a$ and the other node in community $b$. Note that, unlike the method in \cite{achab2017uncovering}, the above sample moments only uses aggregate counts at time $T$ and takes sample means over $n_{ab}$ pairs of Hawkes processes. 

Solving the cumulant relationship equations directly may be difficult, so we use a least squares method to solve it. We define the function $\b g_n(\b N, \b M_{(a,b),(b,a)}, \b \Gamma_{(a,b),(b,a)} ) \in \mathbb{R}^{5}$ such that the components are defined as
\begin{gather*}    g_{n1}(\cdot,\cdot,\cdot)   =  \Lambda_{ab} - \widehat \Lambda_{ab},  \quad 
    g_{n2}(\cdot,\cdot,\cdot) =  \Lambda_{ba} - \widehat \Lambda_{ba}, \\  
    g_{n3}(\cdot,\cdot,\cdot)   =  C_{ab,ab} - \widehat C_{ab,ab},\quad
    g_{n4}(\cdot,\cdot,\cdot)  =  C_{ba,ba} - \widehat C_{ba,ba},\quad
    g_{n5}(\cdot,\cdot,\cdot) = C_{ab,ba} - \widehat C_{ab,ba}.
\end{gather*}
Then, our GMM estimator $(\h{\b M}_{(a,b),(b,a)}, \h{\b \Gamma}_{(a,b),(b,a)})$ is the minimizer of the following optimization problem:
\begin{equation}
\label{eq:GMM_minization}
\underset{\b \Theta_{(a,b),(b,a)}}{\min}
    \b  g_n(\b N, \b M_{(a,b),(b,a)}, \b \Gamma_{(a,b),(b,a)} )^T \b g_n(\b N, \b M_{(a,b),(b,a)}, \b \Gamma_{(a,b),(b,a)} ),
\end{equation}
where $\b \Theta_{(a,b),(b,a)} $ is the feasible parameter space in the restricted SR model given by
\begin{align}
\label{eq:restricted_SR_feasible_set}
\bigg \{\b M_{(a,b),(b,a)}, \b \Gamma_{(a,b),(b,a)} : \rho({\b G}_{(a,b),(b,a)})\leq \sigma^*<1, M_{ab},M_{ba} >0,\! \text{ and }  \alpha^n_{ab}, \alpha^n_{ba},\alpha^r_{ab} \geq 0  \bigg \}.
\end{align}
Here, $\rho({\b \Gamma}_{(a,b),(b,a)})\leq \sigma^*<1$ is the stability condition as defined before. For notational convenience, henceforth we will use $\b \theta$ to denote $\b M$ and $\b \Gamma$ together.  

\subsection{Results for the Restricted SR Model}
\label{sec:sr_theory}
For the restricted SR model, we can explicitly state the stability condition in terms the parameters of the model as below:
\begin{lemma}
\label{lem:stability}
(Stability condition for the restricted SR model) The restricted SR model is stable if, for any block pair $(a,b)$, the $\b \Gamma_{(a,b),(b,a)}$ matrix has spectral radius $\rho\left(\b \Gamma_{(a,b),(b,a)}\right) < 1$, which is equivalent to $\alpha^n_{ab}  \leq \sigma^*< 1, \alpha^n_{ba} \leq \sigma^*<1$, and $\alpha^r_{ab} \leq \sigma^*<\sqrt{ {(\sigma^*-\alpha^n_{ab})(\sigma^*-\alpha^n_{ba})}}$.
\end{lemma}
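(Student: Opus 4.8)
The plan is to reduce the statement to an elementary computation of the eigenvalues of a symmetric $2\times 2$ nonnegative matrix. Recall that a multivariate Hawkes process with exponential kernels is stable exactly when the spectral radius of its matrix of total kernel masses is strictly below $1$, and that for the restricted SR model this matrix is block diagonal across unordered block pairs: the block attached to $(a,b),(b,a)$ with $a\neq b$ equals, up to a relabeling of node pairs, $\b\Gamma_{(a,b),(b,a)}\otimes\b I_{n_an_b}$, where $\b\Gamma_{(a,b),(b,a)}=\bigl(\begin{smallmatrix}\alpha^n_{ab}&\alpha^r_{ab}\\\alpha^r_{ab}&\alpha^n_{ba}\end{smallmatrix}\bigr)$ is the $2\times2$ matrix introduced in Section~\ref{sec:gmm_estimation}, while for $a=b$ the block is $\alpha^n_{aa}\b I+\alpha^r_{aa}\b P$ with $\b P$ the involutive permutation reversing each node pair. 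Hence $\rho(\b\Gamma)\le\sigma^*$ is equivalent to $\rho(\b\Gamma_{(a,b),(b,a)})\le\sigma^*$ for every block pair, and it suffices to analyze one $2\times2$ block. (For $a=b$, $\rho=\alpha^n_{aa}+\alpha^r_{aa}$ follows directly from Proposition~\ref{lem:bound_eig}, since that block has constant row sums; the $a\neq b$ block does not, which is why the explicit eigenvalue computation below is needed.)

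Next I would write the eigenvalues of the symmetric $2\times2$ block explicitly,
\[
\lambda_{\pm}=\frac{\alpha^n_{ab}+\alpha^n_{ba}}{2}\pm\sqrt{\Bigl(\frac{\alpha^n_{ab}-\alpha^n_{ba}}{2}\Bigr)^{2}+(\alpha^r_{ab})^{2}},
\]
and observe that, since all entries are nonnegative, $\lambda_+\ge 0$ and $\lambda_+-|\lambda_-|=\alpha^n_{ab}+\alpha^n_{ba}\ge 0$ (equivalently, by Perron--Frobenius), so $\rho(\b\Gamma_{(a,b),(b,a)})=\lambda_+$. Thus $\rho(\b\Gamma_{(a,b),(b,a)})\le\sigma^*$ is equivalent to
\[
\sqrt{\Bigl(\tfrac{\alpha^n_{ab}-\alpha^n_{ba}}{2}\Bigr)^{2}+(\alpha^r_{ab})^{2}}\ \le\ \sigma^{*}-\frac{\alpha^n_{ab}+\alpha^n_{ba}}{2}.
\]

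The remaining step is to show this single inequality is equivalent to the three scalar inequalities in the lemma. For the forward direction, the displayed inequality forces its right-hand side to be nonnegative; squaring and using $\bigl(\tfrac{\alpha^n_{ab}+\alpha^n_{ba}}{2}\bigr)^{2}-\bigl(\tfrac{\alpha^n_{ab}-\alpha^n_{ba}}{2}\bigr)^{2}=\alpha^n_{ab}\alpha^n_{ba}$ collapses it to $(\alpha^r_{ab})^{2}\le(\sigma^{*}-\alpha^n_{ab})(\sigma^{*}-\alpha^n_{ba})$; since the right-hand side is then nonnegative and $\alpha^n_{ab}+\alpha^n_{ba}\le2\sigma^{*}$, both factors must be nonnegative, giving $\alpha^n_{ab}\le\sigma^{*}$, $\alpha^n_{ba}\le\sigma^{*}$, and hence $\alpha^r_{ab}\le\sqrt{(\sigma^{*}-\alpha^n_{ab})(\sigma^{*}-\alpha^n_{ba})}$. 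For the converse, given these three inequalities, AM--GM yields $\alpha^r_{ab}\le\sqrt{(\sigma^{*}-\alpha^n_{ab})(\sigma^{*}-\alpha^n_{ba})}\le\sigma^{*}-\tfrac{\alpha^n_{ab}+\alpha^n_{ba}}{2}$, so the right-hand side of the displayed inequality is nonnegative, squaring is reversible, and $\lambda_+\le\sigma^{*}$ is recovered. Combining over all block pairs and using $\sigma^{*}<1$ gives $\rho(\b\Gamma)<1$, i.e., stability.

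I do not expect a genuine obstacle here; the only points needing care are (i) identifying $\rho$ with the \emph{larger} eigenvalue $\lambda_+$ rather than with $|\lambda_-|$, which is exactly where nonnegativity (Perron--Frobenius) enters, and (ii) verifying that the squaring step is an equivalence, which is where the sign condition $\sigma^{*}-\tfrac{\alpha^n_{ab}+\alpha^n_{ba}}{2}\ge 0$ and the AM--GM bound are used. The same argument with $\sigma^{*}$ replaced by $1$ and the inequalities made strict gives the analogous characterization of $\rho(\b\Gamma_{(a,b),(b,a)})<1$ directly, and setting $b=a$ recovers $\alpha^n_{aa}+\alpha^r_{aa}\le\sigma^*$ as a consistency check.
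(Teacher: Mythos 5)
Your proposal is correct and follows essentially the same route as the paper: reduce to the symmetric nonnegative $2\times 2$ block, compute its (real) eigenvalues from the characteristic equation, and translate the bound on the spectral radius into the three scalar inequalities on $\alpha^n_{ab},\alpha^n_{ba},\alpha^r_{ab}$. Your added care in identifying $\rho$ with $\lambda_+$ via Perron--Frobenius, justifying the squaring step with the sign condition and AM--GM, and spelling out the reduction from the full block-diagonal $\b\Gamma$ (including the $a=b$ case) only makes explicit what the paper's shorter argument takes for granted.
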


Let $\b \theta_0 = \{\b M_0, \b \Gamma_0\} \in \b \Theta$ be the true parameters. Further, let  $\b g_0 (\b \Theta)$ be the population version of the GMM function defined in \eqref{eq:GMM_sample_statistics} obtained by replacing $\hat{\b \Lambda}$ with $\b \Lambda_0$ and $\hat{\b C}$ with $\b C_0$, where $\b \Lambda_0$ and $\b C_0$ are in turn obtained from \eqref{eqlambda} and \eqref{eqC} with $\b M_0$ and $\b \Gamma_0$. The next lemma shows that the true parameter can be identified from this population function $\b g_0$.
\begin{lemma}
\label{lem:id}
    (Identification result) For the restricted SR model, $\b g_0(\b \theta)=\b 0$ if and only if $\b \theta = \b \theta_0$  for all block pairs $(a,b)$.
\end{lemma}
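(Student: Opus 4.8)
The plan is to show that, for each fixed block pair $(a,b)$, the population moment map sending $\b\theta=(\b M_{(a,b),(b,a)},\b\Gamma_{(a,b),(b,a)})$ to the cumulants $(\b\Lambda_{(a,b),(b,a)},\b C_{(a,b),(b,a)})$ defined by \eqref{eqlambda}--\eqref{eqC} is injective on the feasible set \eqref{eq:restricted_SR_feasible_set}. The ``if'' direction is immediate since $\b g_0(\b\theta_0)=\b 0$ by construction. For the ``only if'' direction, fix a block pair, and to lighten notation write $\b M,\b\Gamma,\b R=(\b I_{2\times2}-\b\Gamma)^{-1},\b\Lambda,\b C$ for the block-pair quantities at a feasible $\b\theta$, and $\b M_0,\b\Gamma_0,\b R_0,\b\Lambda_0,\b C_0$ for their values at $\b\theta_0$. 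If $\b g_0(\b\theta)=\b 0$, then the vanishing of the first two components gives $\b\Lambda=\b\Lambda_0$, and the vanishing of the last three gives the symmetric $2\times2$ identity $\b C=\b C_0$. It therefore suffices to show $(\b\Lambda_0,\b C_0)$ determines $\b\Gamma$, since then $\b M=\b R^{-1}\b\Lambda_0=(\b I_{2\times2}-\b\Gamma)\b\Lambda_0$ is determined as well.

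The crucial structural point, which is exactly what the \emph{restricted} model buys us, is that the constraint $\alpha^r_{ab}=\alpha^r_{ba}$ makes $\b\Gamma_{(a,b),(b,a)}$ a \emph{symmetric} nonnegative $2\times2$ matrix: it carries $\alpha^n_{ab},\alpha^n_{ba}$ on the diagonal and $\alpha^r_{ab}$ in both off-diagonal positions; for $a=b$ this holds automatically (with equal diagonal entries and equal baselines), so that case is subsumed. Consequently $\b\Gamma$ has real eigenvalues, and $\rho(\b\Gamma)<1$ forces $\b I_{2\times2}-\b\Gamma$ to be symmetric with strictly positive eigenvalues, hence positive definite, so $\b R$ is symmetric positive definite. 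Moreover $\b R=\sum_{k\ge0}\b\Gamma^{k}$ has nonnegative entries with diagonal entries at least $1$, so $\b\Lambda=\b R\b M$ has strictly positive entries (using $M_{ab},M_{ba}>0$ from the feasible set); thus $D:=\diag(\b\Lambda_0)$ is a positive definite diagonal matrix, and $\b C=\b R D\b R^{T}=\b R D\b R$ is positive definite, as is $\b C_0$.

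The identification now follows from the uniqueness of symmetric positive definite square roots. Since $\b\Lambda=\b\Lambda_0$, the same $D=\diag(\b\Lambda_0)$ applies at $\b\theta$ and at $\b\theta_0$, and by symmetry of $\b R$,
\[
D^{1/2}\b C_0\,D^{1/2}=D^{1/2}\b R\,D\,\b R\,D^{1/2}=\bigl(D^{1/2}\b R\,D^{1/2}\bigr)^{2}.
\]
The matrix $D^{1/2}\b R\,D^{1/2}$ is symmetric positive definite (a congruence of the positive definite $\b R$), and so is $D^{1/2}\b C_0\,D^{1/2}$; since a symmetric positive definite matrix has a unique symmetric positive definite square root, $D^{1/2}\b R\,D^{1/2}$ is determined by $D^{1/2}\b C_0\,D^{1/2}$ and hence equals $D^{1/2}\b R_0\,D^{1/2}$. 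Multiplying on both sides by $D^{-1/2}$ gives $\b R=\b R_0$, so $\b\Gamma=\b I_{2\times2}-\b R^{-1}=\b\Gamma_0$, and then $\b M=(\b I_{2\times2}-\b\Gamma)\b\Lambda_0=\b M_0$. Thus $\b\theta=\b\theta_0$, which proves the claim for this block pair, and the same argument applies verbatim to every block pair.

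The only step that needs care --- there is no heavy computation --- is establishing the three positive-definiteness facts that validate the square-root argument: the stability condition $\rho(\b\Gamma_{(a,b),(b,a)})<1$ gives positivity of the eigenvalues of $\b I_{2\times2}-\b\Gamma_{(a,b),(b,a)}$; nonnegativity of the excitation parameters together with $\b M>0$ gives $D\succ 0$; and, most importantly, the symmetry of $\b\Gamma_{(a,b),(b,a)}$ \emph{specific to the restricted SR model} is what makes $\b R$ symmetric, so that $\b C=\b R D\b R^{T}$ becomes a perfect square after conjugation by $D^{1/2}$. In an unrestricted SR model or a general DCH block, $\b\Gamma_{(a,b),(b,a)}$ need not be symmetric, the relation $\b C=\b R D\b R^{T}$ determines $\b R D^{1/2}$ only up to a right orthogonal factor, and this route fails --- which is precisely why the identification is asserted only for the restricted model. (One could instead expand \eqref{eqC} into the scalar equations in $R_{11},R_{12},R_{22}$ and argue that it has a unique admissible root, but that needs casework; the square-root viewpoint is cleaner.)
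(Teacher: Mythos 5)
Your proof is correct, and it reaches the conclusion by a genuinely different route than the paper. Both arguments start the same way (the vanishing of $\b g_0$ forces $\b\Lambda=\b\Lambda_0$ and $\b C=\b C_0$, reducing the problem to recovering $\b R$ from $\b R\,\diag(\b\Lambda_0)\,\b R^T=\b R_0\,\diag(\b\Lambda_0)\,\b R_0^T$), and both ultimately rest on the same structural fact that the restriction $\alpha^r_{ab}=\alpha^r_{ba}$ makes $\b\Gamma_{(a,b),(b,a)}$, hence $\b R_{(a,b),(b,a)}$, symmetric. Where you diverge is in how the ambiguity in the factorization is killed: the paper writes $\b R_0\diag(\b\Lambda_0)^{1/2}=\t{\b R}\diag(\b\Lambda_0)^{1/2}\b O$ with $\b O$ orthogonal, then runs an explicit $2\times 2$ analysis --- rotation versus reflection, a determinant-sign argument to exclude the reflection, and elementwise comparisons (the relation between the $(1,2)$ and $(2,1)$ entries, together with positivity of $\b M$, $1-\alpha^n>0$, and $\det\b R>0$ from the feasible set) to force $\sin\theta=0$ and $\cos\theta=1$, i.e.\ $\b O=\b I$. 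You instead conjugate by $\diag(\b\Lambda_0)^{1/2}$ so that $\b C_0$ becomes a perfect square of the symmetric positive definite matrix $\diag(\b\Lambda_0)^{1/2}\b R\,\diag(\b\Lambda_0)^{1/2}$ and invoke uniqueness of the symmetric positive definite square root, which removes the orthogonal factor without any casework. Your route is shorter and cleaner, needs only the eigenvalue consequences of symmetry plus $\rho(\b\Gamma)<1$ and positivity of $\b M$ (it never needs $\det\b R>0$ or $1-\alpha^n>0$ separately), and it makes transparent why the argument breaks for the unrestricted SR or general DCH blocks; the paper's elementary parametrization, by contrast, keeps the computation entirely at the level of $2\times2$ entries and shows explicitly which sign constraints of the feasible parameter space enter. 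Your treatment of the diagonal block pairs $a=b$ (symmetry holds automatically) and the final recovery $\b M=(\b I-\b\Gamma)\b\Lambda_0$ are also fine.
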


Next, we show that the GMM estimator will converge in probability to the true parameters under an asymptotic regime where $T\rightarrow \infty$ and $n_{ab}\rightarrow \infty$ for any block pair $(a,b)$. Note that our procedure leverages the availability of event counts from $n_{ab}$ bivariate Hawkes processes to construct the sample moments, and hence, our asymptotic framework is in terms of both increasing $T$ and $n_{ab}$. However, we emphasize that we do not require the Hawkes process counts to converge to a limiting Gaussian distribution, which may not hold for growing dimension Hawkes processes or simultaneously for infinitely many bivariate Hawkes processes unless $T$ grows much faster than the dimension or number of Hawkes processes.  Since the parameters are estimated by block pair, we prove the result for any generic block pair $(a,b)$. 
(We switch notation to use superscripts to denote block pairs when we also have the subscript $0$ to denote the true parameter, e.g., ${\b M}_{0}^{(a,b),(b,a)}$.)
The theorem is proved in Section \ref{sec:app_gmm_proofs} by verifying the sufficient conditions laid out in \cite{newey1994large} for the GMM estimator to be consistent.

\begin{theorem}
\label{thm:gmm}
Consider any block pair $(a,b)$ in the restricted SR model. Let the parameter space $\b \Theta_{(a,b),(b,a)}$ defined in \eqref{eq:restricted_SR_feasible_set} be compact and contain the true parameters ${\b M}_{0}^{(a,b),(b,a)}$, $\b \Gamma_{0}^{(a,b),(b,a)}$. Then the estimator $(\h{\b M}_{(a,b),(b,a)}, \h{\b \Gamma}_{(a,b),(b,a)})$ defined in \eqref{eq:GMM_minization} will converge to the true parameters
in probability as $T\rightarrow \infty$ and $n_{ab}\rightarrow \infty$.
\end{theorem}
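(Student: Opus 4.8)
\medskip
\noindent\textbf{Proof proposal.}
Since the parameters are estimated one block pair at a time, I would fix a generic block pair $(a,b)$ and write $\b\theta = (\b M_{(a,b),(b,a)}, \b\Gamma_{(a,b),(b,a)})$, $\h{\b\theta} = (\h{\b M}_{(a,b),(b,a)}, \h{\b\Gamma}_{(a,b),(b,a)})$, and abbreviate the GMM function as $\b g_n(\b N,\b\theta)$. The plan is to verify the four sufficient conditions for consistency of an extremum/GMM estimator given in \cite{newey1994large}: (i) the population moment function identifies $\b\theta_0$; (ii) the parameter space $\b\Theta_{(a,b),(b,a)}$ is compact and contains $\b\theta_0$; (iii) the population objective $Q_0(\b\theta)$ is continuous in $\b\theta$; and (iv) the sample objective converges uniformly in probability to $Q_0$ over $\b\Theta_{(a,b),(b,a)}$. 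Together with the fact that $Q_0$ then has a well-separated minimum at $\b\theta_0$, these give $\h{\b\theta}\overset{p}{\to}\b\theta_0$.

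Conditions (i) and (ii) come for free: identification is precisely Lemma \ref{lem:id}, so $Q_0(\b\theta)=\b g_0(\b\theta)^T\b g_0(\b\theta)$ is nonnegative and vanishes only at $\b\theta_0$, and compactness of $\b\Theta_{(a,b),(b,a)}$ with $\b\theta_0$ inside is assumed. For (iii), write $\b g_n(\b N,\b\theta)=\b c(\b\theta)-\widehat{\b c}$, where $\b c(\b\theta)$ stacks the model-implied cumulants $\b\Lambda_{(a,b),(b,a)}$, $\b C_{(a,b),(b,a)}$ from \eqref{eqlambda}--\eqref{eqC} and $\widehat{\b c}$ stacks the sample cumulants from \eqref{eq:GMM_sample_statistics}. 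On the feasible set $\rho(\b\Gamma_{(a,b),(b,a)})<1$, so $\b I_{2\times 2}-\b\Gamma_{(a,b),(b,a)}$ is invertible and, matrix inversion being continuous on invertible matrices, $\b R_{(a,b),(b,a)}=(\b I_{2\times 2}-\b\Gamma_{(a,b),(b,a)})^{-1}$ and hence $\b c(\cdot)$ is continuous (indeed smooth) on $\b\Theta_{(a,b),(b,a)}$; therefore $\b g_0(\b\theta)=\b c(\b\theta)-\b c(\b\theta_0)$ and $Q_0$ are continuous.

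The crux is (iv). The key observation is that $\b g_n(\b N,\b\theta)-\b g_0(\b\theta)=\b c(\b\theta_0)-\widehat{\b c}$ does not depend on $\b\theta$, and since $\|\b c(\b\theta)\|$ is bounded on the compact $\b\Theta_{(a,b),(b,a)}$, uniform convergence of the objective reduces to showing $\widehat{\b c}\overset{p}{\to}\b c(\b\theta_0)$, i.e., that each sample cumulant in \eqref{eq:GMM_sample_statistics} converges in probability to its true value as $T\to\infty$ and $n_{ab}\to\infty$. I would establish this by bounding $\E\|\widehat{\b c}-\b c(\b\theta_0)\|^2$ by a squared-bias term that vanishes as $T\to\infty$ plus a variance term that vanishes as $n_{ab}\to\infty$. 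The enabling structure is that, in the restricted SR model, the bivariate counting processes $\{((\b N_T)_{ij},(\b N_T)_{ji}):X(i,j)=(a,b)\}$ are i.i.d.\ across the $n_{ab}$ ordered pairs: each pair $\{i,j\}$ generates its own bivariate Hawkes process independent of all others, and the shared block-pair parameters make them identically distributed. Hence, after expanding the sample (co)variances into empirical first and second moments, $\widehat\Lambda_{ab}$, $\widehat C_{ab,ab}$, $\widehat C_{ab,ba}$, etc., are i.i.d.\ averages. The bias vanishes: $\widehat\Lambda_{ab}$ is unbiased via $\E\b N_T=\b R\operatorname{vec}(\b\mu)T$ (Proposition \ref{prop:normal}) and \eqref{eqlambda}, and the means of the empirical second moments match the model-implied ones up to transient $O(1/T)$ corrections via \eqref{eqC}. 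The variance vanishes because, the process being subcritical with an exponential kernel, all cumulants of the counts over $[0,T]$ grow at most linearly in $T$, so in particular $\E[((\b N_T)_{ij}-\E(\b N_T)_{ij})^4]=O(T^2)$; dividing the (co)variance terms by $T^2$ and averaging over $n_{ab}$ independent pairs then gives variance $O(1/n_{ab})$. Combining, $\widehat{\b c}\overset{p}{\to}\b c(\b\theta_0)$ along any sequence with both $T,n_{ab}\to\infty$, and the consistency theorem of \cite{newey1994large} yields $\h{\b\theta}\overset{p}{\to}\b\theta_0$.

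The hard part will be the variance control in step (iv): I need that the fourth centered moment of a bivariate subcritical Hawkes count over $[0,T]$ is $O(T^2)$ --- equivalently that its second-, third-, and fourth-order cumulants are $O(T)$ --- with constants depending only on $\rho(\b\Gamma_{(a,b),(b,a)})$ and $\|\b M_{(a,b),(b,a)}\|$, so that the normalized sample variances and covariances, which are fourth-moment functionals after the division by $T$, concentrate at rate $1/n_{ab}$ around the integrated cumulants $\b C$. I would obtain this from existing moment/cumulant bounds for stable Hawkes processes, or directly from the immigration--birth (cluster) representation of the process. I would also organize the argument so that the bias is killed by $T\to\infty$ and the variance by $n_{ab}\to\infty$, which gives convergence in probability along any joint sequence and matches the (rate-free) statement of the theorem.
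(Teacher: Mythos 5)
Your overall route is the same as the paper's: both verify the Newey--McFadden sufficient conditions for GMM consistency, with compactness assumed, continuity of the moment map following from invertibility of $\b I-\b \Gamma_{(a,b),(b,a)}$ on the feasible set, and identification supplied by Lemma \ref{lem:id}. Your reduction of uniform convergence is in fact cleaner than the paper's: observing that $\h{\b g}_n(\b\theta)-\b g_0(\b\theta)$ is free of $\b\theta$, so that uniform convergence collapses to convergence of the sample cumulants, is a nice simplification (the paper instead invokes a uniform law of large numbers over the compact parameter space, using that the processes for distinct node pairs in a block pair are independent and identically distributed and that the moment map is polynomial in $\b\theta$).

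The genuine gap is in your treatment of the bias part of step (iv). You assert that the expectations of the sample covariance statistics match the model-implied integrated cumulants ``up to transient $O(1/T)$ corrections,'' but this is exactly where the paper's proof does its real work: while $\h{\b\Lambda}$ is exactly unbiased by stationarity, $\E[\h{\b C}]$ at finite $T$ equals $\frac{1}{T}\operatorname{Cov}(\b N_{(i,j),T},\b N_{(j,i),T})$, and showing this converges to $\b C=\int_{\mathbb R}\Phi(\tau)\,d\tau$ requires an argument --- the paper writes the covariance as a double integral of the covariance density $\Phi$, splits it at a window $H=\sqrt{T}$, and bounds four boundary terms using integrability of $\Phi$. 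Your proposal needs this (or an equivalent transient-decay estimate for the exponential-kernel, subcritical process) to be carried out, not just asserted. Separately, your variance control via fourth centered moments being $O(T^2)$ uniformly in $T$ is a legitimate route (and would yield rates), but it is stronger than necessary and is itself left to citation: for the qualitative statement, a weak law of large numbers across the $n_{ab}$ i.i.d.\ pairs only needs finite means of the squared normalized counts, i.e., second moments of the counts, which is what the paper's ULLN step implicitly uses. If you do pursue the $L^2$/bias--variance version, it actually handles the joint limit $T,n_{ab}\to\infty$ more transparently than the paper's sequential argument, but then both the $O(T^2)$ fourth-moment bound (with constants uniform in $T$) and the finite-$T$ bias estimate must be proved, not deferred.
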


Notice that the dimension of parameters in each block is 5, which is equal to the dimension of $\b g$, so it is possible that $\b g_0(\b \theta_0)=0$ has a unique solution. 
In Lemma \ref{lem:id}, we show that to be the case for the restricted SR model. For the unrestricted SR model and the MULCH model, which have more parameters, this estimating procedure with just the first two moments cannot ensure a unique solution. For those models, an alternative is to consider higher order cumulants as in \cite{achab2017uncovering}. However, the estimators of higher order cumulants may have large variance, and thus, the final estimation from the GMM procedure might be less accurate. Further, it is not immediately clear if the identification condition similar to our Lemma \ref{lem:id}, which the results of \cite{achab2017uncovering} require to hold for a given multivariate Hawkes process, will hold for unrestricted SR or MULCH models.


\subsection[Estimating beta and Local Likelihood Refinement]{Estimating $\b \beta$ and Local Likelihood Refinement}
\label{sec:refinement}
After the $\b \Gamma$ parameters are estimated from the GMM procedure described above, we can estimate $\b \beta$ by maximum likelihood, if desired. 
(An alternative to estimating $\b \beta$ is to assume fixed $\b \beta$ \citep{bacry2015hawkes} or a weighted sum of multiple $\b \beta$ values, as in MULCH and other similar temporal network models \citep{soliman2022multivariate, yang2017decoupling, huang2022mutually}.) 
If we are estimating $\b \beta$, we plug in the GMM estimates of $\b \Gamma$ into the likelihood equation. The likelihood now becomes a function only of $\b \beta$, which is then estimated through maximum likelihood.

We also propose a local likelihood refinement algorithm for the SR model to further improve the community detection and parameter estimation accuracy given the initial estimates of the community assignments and the parameters. Similar procedures are used in the SBMs \citep{gao2017achieving,chen2022global} and Hawkes process network models \citep{junuthula2019block,soliman2022multivariate} literature for obtaining an improved community assignment after the spectral clustering. However, in densely dependent settings, e.g., the MULCH model \citep{soliman2022multivariate}, it is nearly impossible to implement the refinement algorithm on a large dataset due the computational limitation. In contrast, in the SR model, we are able to write the change in log likelihood due to one refinement step in a computationally efficient manner, and consequently, the refinement algorithm can be scaled to large datasets.

The refinement procedure for node $i$ utilizes the initial community assignments \textit{for all other nodes} and Hawkes process parameter estimates to compute the likelihood of node $i$ belonging to the different blocks. Then we assign the node to the block which maximizes the likelihood. We start with the first node (arbitrary order) and repeat this procedure until community assignment of all nodes have been refined. Finally, we re-estimate the parameters using the new community assignment. The full refinement procedure is summarized in Algorithm \ref{algrefine}.
In the SR model, computing the likelihood for node $i$ given the community assignment of all other nodes only involves computing Hawkes process likelihood for events from $i$ and to $i$. Therefore, this computation includes a very small amount of events and thus it is computationally efficient and practical. 

%
\begin{algorithm}[t]
\caption{Local refinement procedure to update community assignments in the SR model. For the restricted SR model, set $\alpha_{ba}^r = \alpha_{ab}^r$ and $\beta_{ba}^r = \beta_{ab}^r$.}
\label{algrefine}
\textbf{Input:} Events time data $\b{E}$; number of blocks $K$; initial Hawkes process parameters $\boldsymbol{\Theta} = (\boldsymbol{M}, \boldsymbol{\alpha}, \boldsymbol{\beta})$; initial community assignment $\boldsymbol{z}$

\textbf{Output:} New membership vector $\boldsymbol{z}$; new Hawkes process parameters $\boldsymbol{\Theta}$

\begin{algorithmic}[1]
\For{each node $i$}
    \State Update membership $z_i$ by:
    \begin{align*}
    z_i &= \argmax_{a \in \{1,\dots,K\}} \sum_{b=1}^K \sum_{\substack{j:z_j=b \\ j\neq i}} \bigg\{
    - M_{ab}T - M_{ba}T \\
    &- \sum_{t_s \in T_{ij}} \left[\alpha_{ab}^{n}\left(1-e^{-\beta^n_{ab}(T-t_s)}\right) + \alpha_{ba}^{r}\left(1-e^{-\beta^r_{ba}(T-t_s)}\right) \right] \\
    &- \sum_{t_s \in T_{ji}} \left[\alpha_{ab}^{r}\left(1-e^{-\beta^r_{ab}(T-t_s)}\right) + \alpha_{ba}^{n}\left(1-e^{-\beta^n_{ba}(T-t_s)}\right)\right] \\
    &+ \sum_{t_s \in T_{ij}} \ln\left[M_{ab} + \alpha_{ab}^{n}\beta^n_{ab} R^{ij \rightarrow ij}_{ab,n}(t_s)
    + \alpha_{ab}^{r}\beta^r_{ab} R^{ji \rightarrow ij}_{ab,r}(t_s)\right] \\
    &+ \sum_{t_s \in T_{ji}} \ln\left[\mu_{ba} + \alpha_{ba}^{n}\beta^n_{ba} R^{ji \rightarrow ji}_{ba,n}(t_s)
    + \alpha_{ba}^{r}\beta^r_{ba} R^{ij \rightarrow ji}_{ba,r}(t_s)\right]
    \bigg\}
    \end{align*}
where $ R^{ij \rightarrow ij}_{ab,n}(t_s) = \sum_{\substack{t_r \in T_{ij}\\ t_r < t_s}} e^{-\beta^n_{ab}(t_s - t_r)}, \quad 
        R^{ji \rightarrow ij}_{ab,r}(t_s) = \sum_{\substack{t_r \in T_{ji}\\ t_r < t_s}} e^{-\beta^r_{ab}(t_s - t_r)}$, and $R^{ji \rightarrow ji}_{ba,n}(t_s)$, $R^{ij \rightarrow ji}_{ba,r}(t_s)$ are defined similarly.
\EndFor
\State Use updated $\boldsymbol{z}$ to re-estimate Hawkes parameters $\boldsymbol{\Theta}$ via GMM and MLE.
\State \Return updated $\boldsymbol{z}$ and $\boldsymbol{\Theta}$
\end{algorithmic}
\end{algorithm}

\section{Simulation Experiments}

\subsection{Community Detection using Spectral Clustering}
\label{sec:exp_spectral_clustering}
We present simulation experiments to analyze the effects of different parameters of the DCH model on the accuracy of spectral clustering to recover the true memberships of the nodes. 
An additional simulation experiment examining sensitivity of Hawkes process parameters on community detection is presented in Appendix \ref{sec:app_sim_community_detection} in the supplementary materials. 
For all experiments, we simulate several relational events datasets, run the spectral clustering method in Algorithm \ref{alg:spectral_clustering} on the count matrix, and then compute the adjusted Rand index (ARI) \cite{hubert1985comparing} between estimated and true node membership vectors. An ARI of 1 indicates perfect community detection, while the ARI has an expectation of 0 for a random assignment. 

\paragraph{Community Detection while Varying $n$, $K$, and $T$: }
We simulate relational events data from the \textit{simplified symmetric MULCH model} (presented in Section \ref{sec:simple_mulch}) while varying two out of the three quantities: number of nodes $n$, number of blocks $K$, and data duration $T$. 
We let the intra-block parameters be 
\[(\mu_{1}, \alpha_{1}^{n}, \alpha_{1}^{r}, \alpha_{1}^{tc}, \alpha_{1}^{ac}, \alpha_{1}^{gr}, \alpha_{1}^{ar}) =(0.005, 0.2, 0.2, 0.05/s_1, 0.05/s_1, 0.05/s_1,  0.05/s_1),\]
and let the inter-block parameters be 
\[
(\mu_{2}, \alpha_{2}^{n}, \alpha_{2}^{r}, \alpha_{2}^{tc}, \alpha_{2}^{ac}, \alpha_{2}^{gr}, \alpha_{2}^{ar}) =  (0.003, 0.1, 0.1, 0.025/s_2, 0.025/s_2, 0.025/s_2,  0.025/s_2),\]
where the parameters are as defined in (\ref{eq:symmetric_general_model}), $s_1 = n/K -2$, and $s_2 = n/K - 1$. We let the decay parameter $\beta=1$ in all kernel functions when simulating the event table. Then, we can easily compute that $\gamma_1 = 0.6$ and $\gamma_2 = 0.3$. Therefore in this setting, the quantities $h(\gamma_1,\gamma_2,\mu_1,\mu_2)$ and $\mu_{\max}$ remain constant as we vary $n,K,T$.

The community detection accuracy averaged over 15 simulations is presented in Figure \ref{RI_matrix}.
As shown in Figure \ref{fig:RI_matrix_n_T}, the adjusted Rand index increases as both $n$ and $T$ increase while fixing $K=4$. That is what we expect from our non-asymptotic analysis.  Intuitively, increasing $T$ can reduce the variance of the count matrix while increasing $n$ improves the spectral clustering accuracy. Similarly, when fixing $n=60$, and varying $K$ and $T$, we can see the negative association between number of blocks $K$ and adjusted Rand index in Figure \ref{fig:RI_matrix_K_T}, while increasing $T$ improves the accuracy. Finally, in Figure \ref{fig:RI_matrix_K_n}, we verify that the adjusted Rand index increases by increasing $n$ and decreasing $K$ while fixing $T$. All these results align with the prediction in Corollary \ref{cor:simplified_model_misclustering_error} and equation (\ref{eq:misclustering_rate_symmetric_general_model}) which states the misclustering error rate varies as $\frac{K^2\log n \log T}{ nT\mu_{max}}$.

\begin{figure}[t]
    \newcommand{\figwidth}{0.325\textwidth}
    \centering
    \hfill
    \begin{subfigure}[c]{\figwidth}
        \centering
        \includegraphics[width=\textwidth]{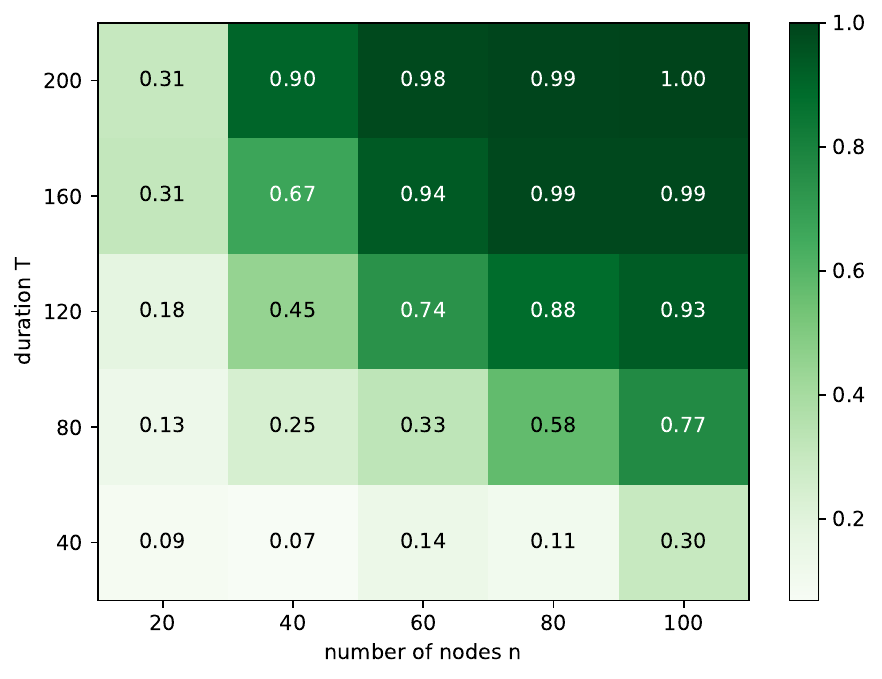}
        \caption{Fixed $K=4$}
        \label{fig:RI_matrix_n_T}
    \end{subfigure}
    \hfill
    \begin{subfigure}[c]{\figwidth}
        \centering
        \includegraphics[width=\textwidth]{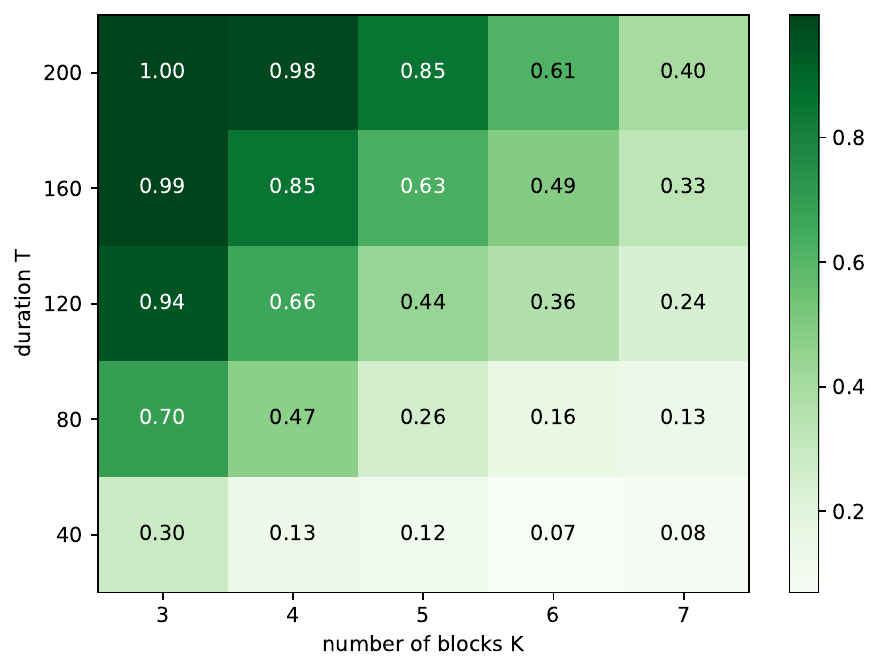}
        \caption{Fixed $n=60$}
        \label{fig:RI_matrix_K_T}
    \end{subfigure}
    \hfill
    \begin{subfigure}[c]{\figwidth}
        \centering
        \includegraphics[width=\textwidth]{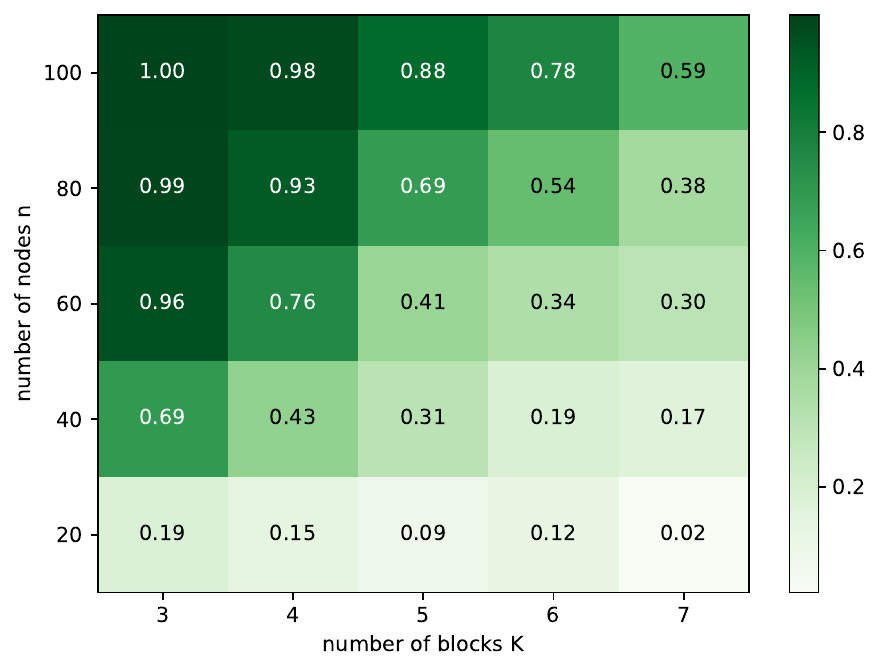}
        \caption{Fixed $T=120$}
        \label{fig:RI_matrix_K_n}
    \end{subfigure}
    \hfill
    \caption{Heat map of adjusted Rand index of spectral clustering with varying $n$, $T$, and $K$, averaged over 15 simulated networks.}
    \label{RI_matrix}
\end{figure}

\paragraph{Effect of $\gamma_{\max}$ on Community Detection Accuracy:}
\label{sec:experiment_gamma_max}
 Theorem \ref{thm:spectral_norm_bound} showed that the spectral norm of the fluctuation of the count matrix from its expectation, $\| \b N_T - \E \b N_T\|$, depends on the sum of excitations $\gamma_{\max}$, and consequently, the spectral clustering error rate in Theorem \ref{thm:misclustering_rate} also depends on $\gamma_{\max}$ in the general model. To numerically evaluate the role of $\gamma_{\max}$, we design a simulation with the self and reciprocal excitation (SR) Hawkes process model (\ref{eq:SR_model}). We let $K=2$ with equal block sizes, and set parameters as 
\begin{equation*}
\b \mu = \left(\begin{array}{cc}
0.002&0.001 - s\\
0.0001 &0.002\\
\end{array}\right),
\quad
\b \alpha^n = \left(\begin{array}{cc}
0&0\\
0&0\\
\end{array}\right),
\quad
\b \alpha^r = \left(\begin{array}{cc}
0&s\\
0&0\\
\end{array}\right),    
\end{equation*}
where $s$ is a scalar. We let all decay parameters $\b \beta=1$. Note that there is no self excitation since $\b \alpha^n$ is a $0$ matrix. The reciprocal excitation $\b \alpha^r$ is controlled by the parameter $s$.

From our definition of $\gamma_{\max}$ in Theorem \ref{thm:spectral_norm_bound}, we know $\gamma_{\max}=s$ in the above setup. When $s=0$, we know all events are based on the base intensity $\b\mu$ since both $\b \alpha^n$ and $\b \alpha^r$ are 0. When $s>0$, then some events are generated by reciprocity. When $T$ is large enough, we can also derive the expectation of the count matrix $\E \b N_T$ (see Section \ref{sec:appendix_gamma_max}) to find that it does not depend on $s$ and has a block structure. 
In this setting, we will only change $s$ and fix $n=40, T=300$, so we know all other parameters ($\sigma^*, \mu_{max},  \lambda_{K}^2,K$) that enter in the expression of Theorems \ref{thm:spectral_norm_bound} and \ref{thm:misclustering_rate} will stay unchanged. 

We show the spectral norm of the difference between sample count matrix and its expectation, and the spectral clustering accuracy over 100 simulations in Figure \ref{fig:gamma_max}. As we see, when we increase $\gamma_{\max}$ by increasing $s$, the spectral norm of error $\|\b N_T - \E \b N_T\|$ increases while the clustering accuracy decreases. These results confirm that our upper bounds in Theorem \ref{thm:spectral_norm_bound} and Theorem \ref{thm:misclustering_rate} are meaningful, and we find that $\gamma_{\max}$ controls the variance of the count matrix. Therefore, increasing $\gamma_{\max}$ will introduce more dependence in the count matrix, which in turn will increase the variance and decrease spectral clustering accuracy.
\begin{figure}[t]
    \newcommand{\figwidth}{0.49\textwidth}
    \centering
    \hfill
    \begin{subfigure}[c]{\figwidth}
        \centering
        \includegraphics[width=\textwidth]{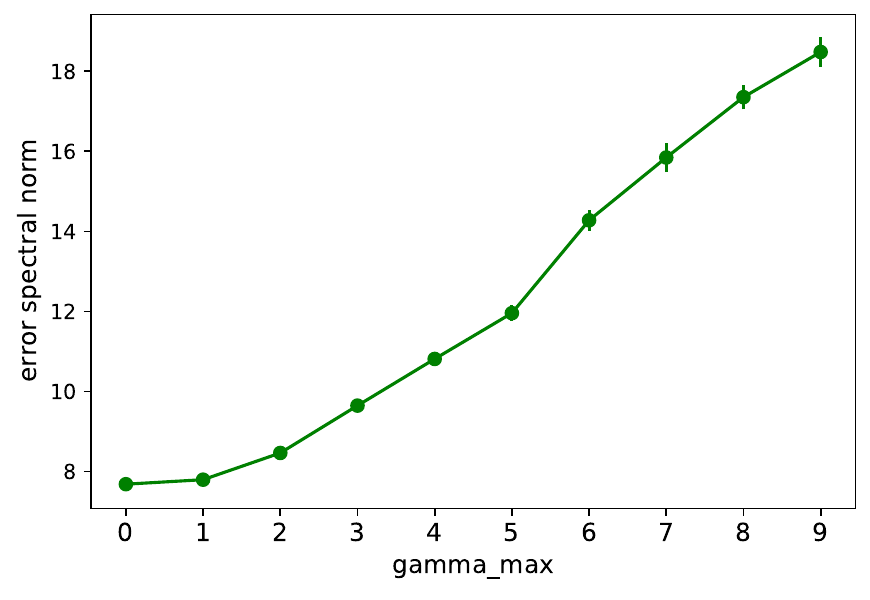}
        \caption{Spectral norm of error: $\|\b N_T - \E \b N_T\|$}
        \label{fig:gamma_max_err}
    \end{subfigure}
    \hfill
    \begin{subfigure}[c]{\figwidth}
        \centering
        \includegraphics[width=\textwidth]{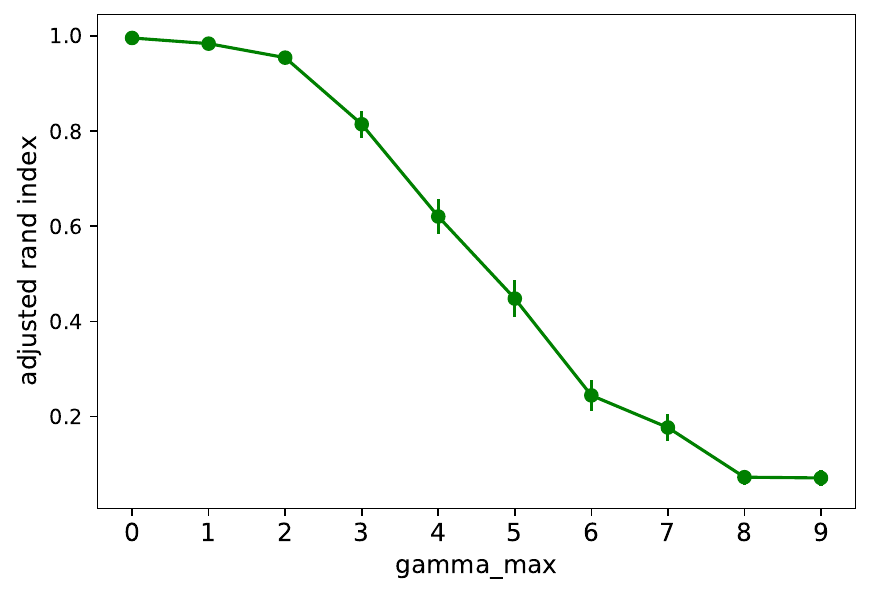}
        \caption{Spectral clustering accuracy}
        \label{fig:gamma_max_sc}
    \end{subfigure}
    \hfill
    \caption {The spectral norm of error and the spectral clustering accuracy with different $\gamma_{\max}$ ($\pm$ standard error over 100 simulated networks). As $\gamma_{\max}$ increases, the spectral norm of the error increases superlinearly while the clustering accuracy decreases.}
    \label{fig:gamma_max}
\end{figure}

\subsection{Accuracy of GMM Estimators}

Next, we examine the parameter estimation accuracy of the GMM procedure for the restricted SR models. We simulate networks from an SR model \eqref{eq:SR_model} with $K=4$, equal block sizes, and the following structured parameters:
for any $1\leq a,b\leq K$ and $a\neq b$, we have $\mu_{aa} = 0.002, ~\alpha^{n}_{aa} =0.2,~\alpha^{r}_{aa} =0.2, ~\beta^{n}_{aa} = 1,~\beta^{r}_{aa} =1$ and $
\mu_{ab} = 0.001, ~\alpha^{n}_{ab} =0.1,~\alpha^{r}_{ab} =0.1,~\beta^{n}_{ab} = 0.5,~\beta^{r}_{ab} =0.5$.
We then run the spectral clustering algorithm followed by the GMM estimation method. We showed in Theorem \ref{thm:gmm} that the GMM estimators will converge to the true parameters as both $n$ and $ T$ go to infinity, and we should see this phenomena in the experiments. 

Figures \ref{fig:MSE_T_mu}-\ref{fig:MSE_T_a_r} show mean squared errors (MSEs) of GMM estimators for $\b \mu$, $\b \alpha^n$ and $\b \alpha^r$ when fixing $n=90$ and varying the observation duration $T$. We observe the MSEs drop very fast when $T$ is increased from 200 to 500, and the clustering error rate reaches close to 0 when $T$ is larger than 500. However, when $T$ is larger than 500, the MSEs drop very slowly. Figures \ref{fig:MSE_N_mu}-\ref{fig:MSE_N_a_r} shows the MSEs when fixing $T=600$ and varying the number of nodes $n$. Also, when $n$ is increased from 40 to 70, the spectral clustering error rate decreases quickly towards 0, and the MSEs also drop fast. But we observe that the MSEs keep dropping as $n$ increases even when $n$ is greater than 70. This is in contrast to the behavior with increasing $T$. Theorem \ref{thm:gmm} requires both $T$ and $n$ go to infinity to ensure the consistency of the estimators, but from these experiments, we conjecture that if both $T, n$ are large enough and the clustering is perfect, increasing $T$ has little effect on improving the GMM estimators accuracy, but increasing $n$ can still reduce the error.

Figures \ref{fig:MSE_T_b_n}-\ref{fig:MSE_T_b_r} (fixed $n=90$ and varying $T$) and Figures \ref{fig:MSE_N_b_n}-\ref{fig:MSE_N_b_r} (fixed $T=600$ and varying $n$) show the MSEs for the kernel parameters estimations $\b \beta^n, \b \beta^r$, which are estimated by the maximum likelihood method. Although we have no theoretical guarantees, we can still see that $\b \beta^n, \b \beta^r$ can also be accurately estimated as $n$ and $T$ both increase.
\begin{figure}[tp]
    \newcommand{\figwidth}{0.192\textwidth}
    \centering
    \hfill
    \centering
    \begin{subfigure}[c]{\figwidth}
        \centering
        \includegraphics[width=\textwidth]{ 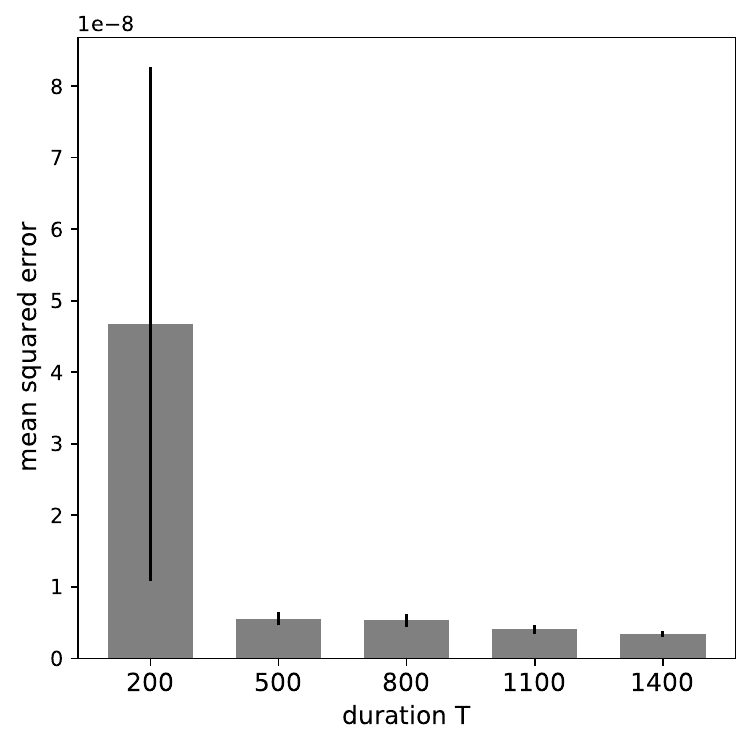}
        \caption{$\b \mu$}
        \label{fig:MSE_T_mu}
    \end{subfigure}
    \hfill
    \centering
    \begin{subfigure}[c]{\figwidth}
        \centering
        \includegraphics[width=\textwidth]{ 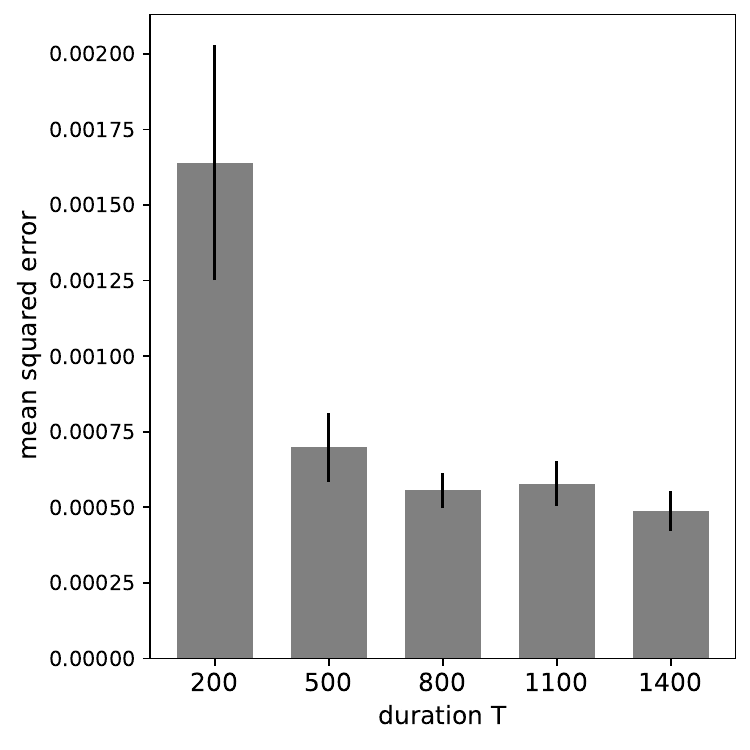}
        \caption{$\b \alpha^{n}$}
    \end{subfigure}
    \hfill
    \centering
    \begin{subfigure}[c]{\figwidth}
        \centering
        \includegraphics[width=\textwidth]{ 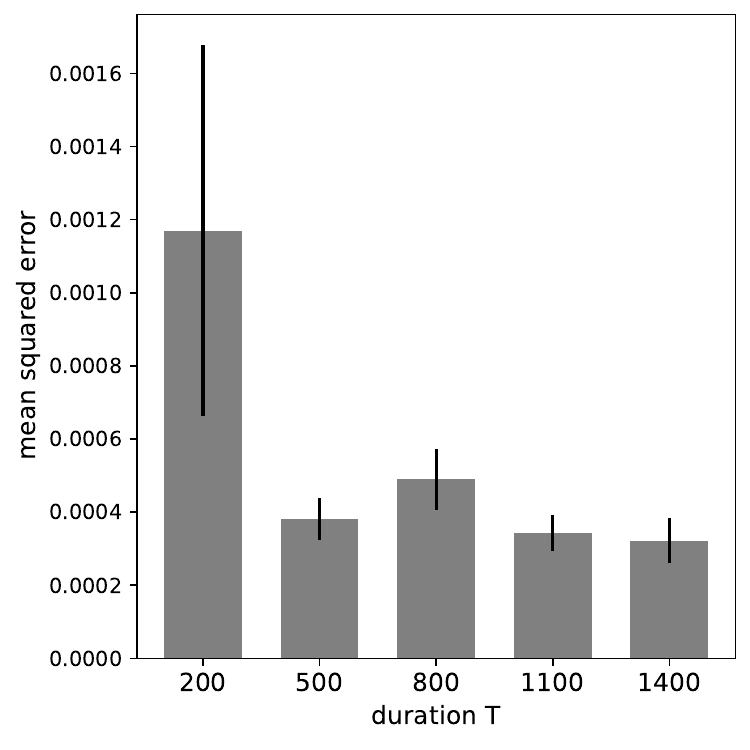}
        \caption{$\b \alpha^{r}$}
        \label{fig:MSE_T_a_r}
    \end{subfigure}
    \hfill
    \begin{subfigure}[c]{\figwidth}
        \centering
        \includegraphics[width=\textwidth]{ 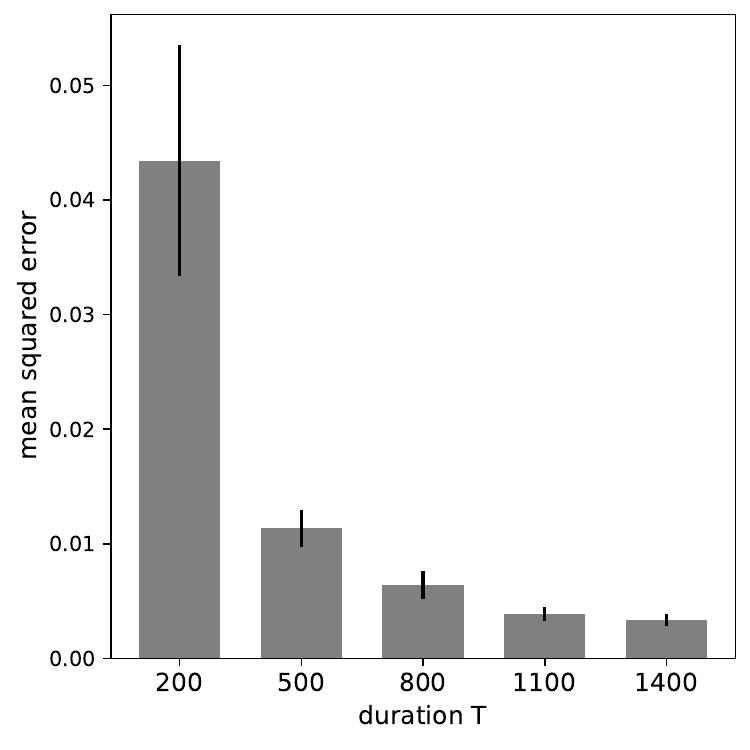}
        \caption{$\b \beta^n$}
        \label{fig:MSE_T_b_n}
    \end{subfigure}
        \hfill
    \centering
    \begin{subfigure}[c]{\figwidth}
        \centering
        \includegraphics[width=\textwidth]{ 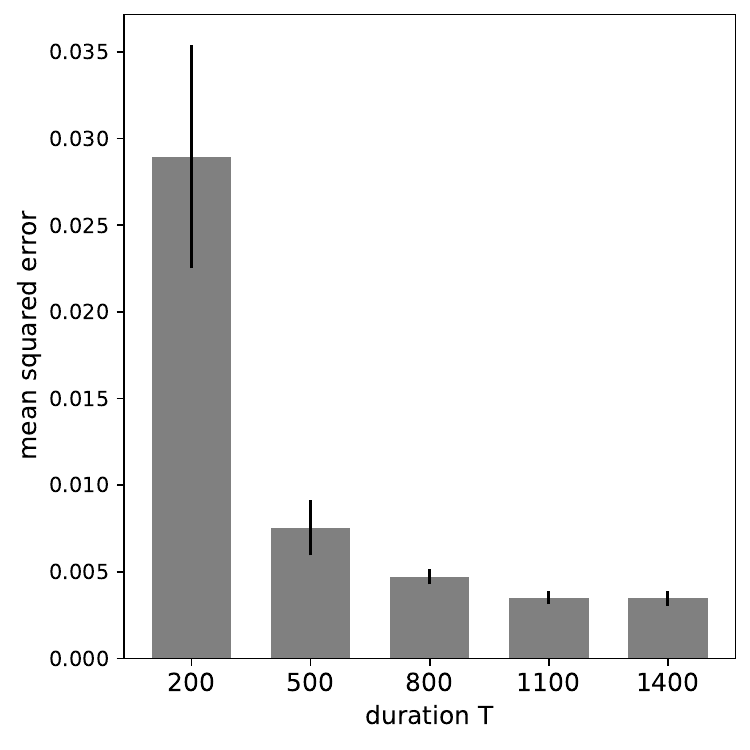}
        \caption{$\b \beta^r$}
        \label{fig:MSE_T_b_r}
    \end{subfigure}
    \hfill
    \\[12pt]
    \centering
    \begin{subfigure}[c]{\figwidth}
        \centering
        \includegraphics[width=\textwidth]{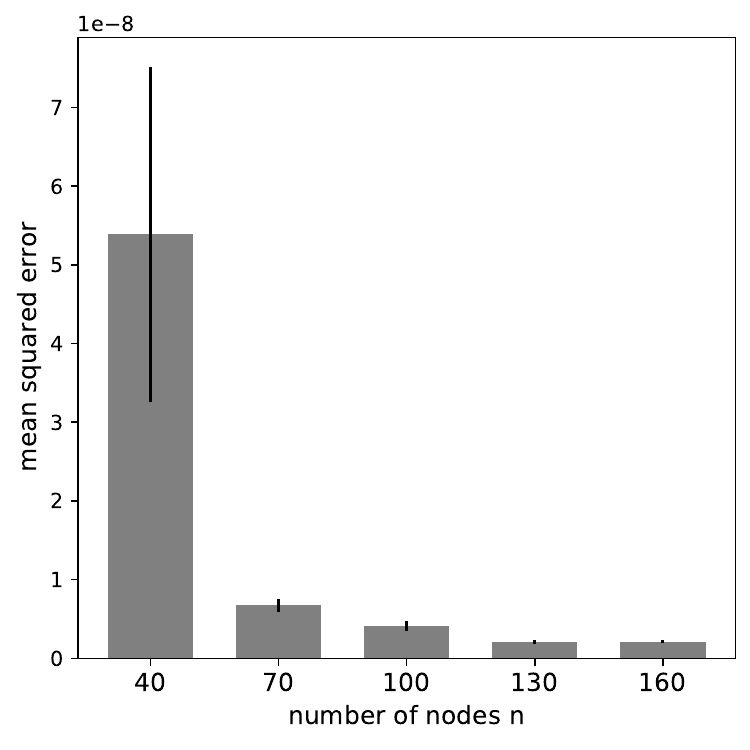}
        \caption{$\b \mu$}
        \label{fig:MSE_N_mu}
    \end{subfigure}
    \hfill
    \begin{subfigure}[c]{\figwidth}
        \centering
        \includegraphics[width=\textwidth]{ 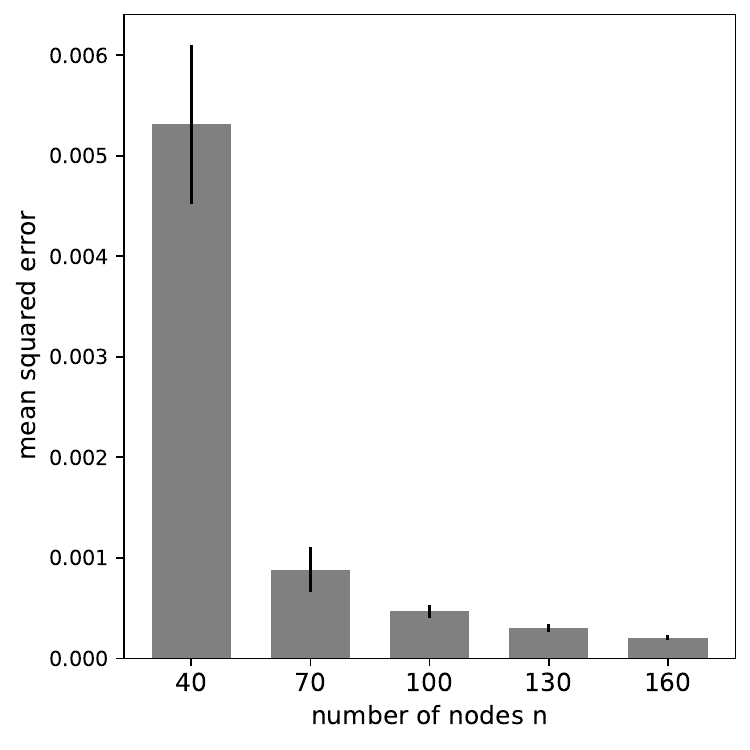}
        \caption{$\b \alpha^n$}
    \end{subfigure}
    \hfill
    \begin{subfigure}[c]{\figwidth}
        \centering
        \includegraphics[width=\textwidth]{ 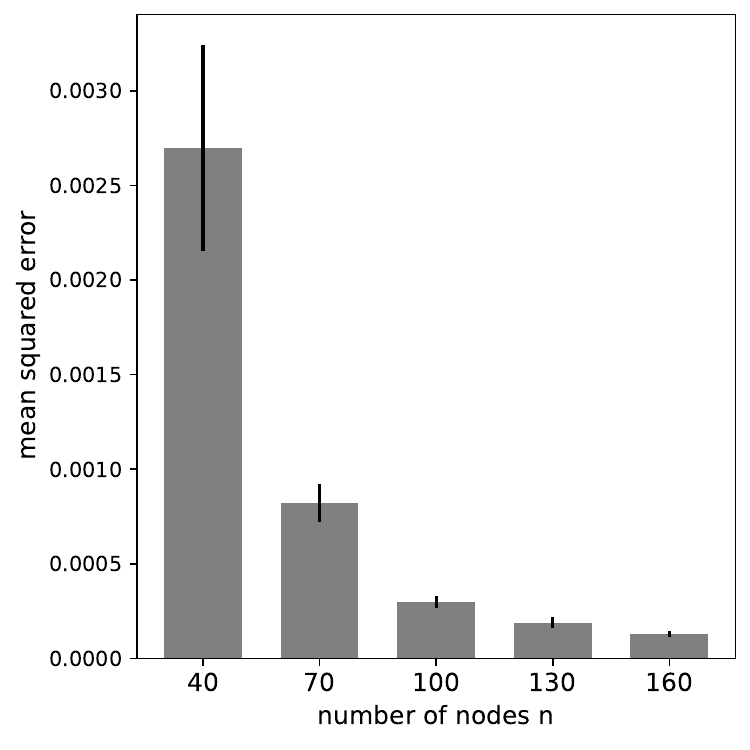}
        \caption{$\b \alpha^r$}
        \label{fig:MSE_N_a_r}
    \end{subfigure}
    \hfill
    \begin{subfigure}[c]{\figwidth}
        \centering
        \includegraphics[width=\textwidth]{ 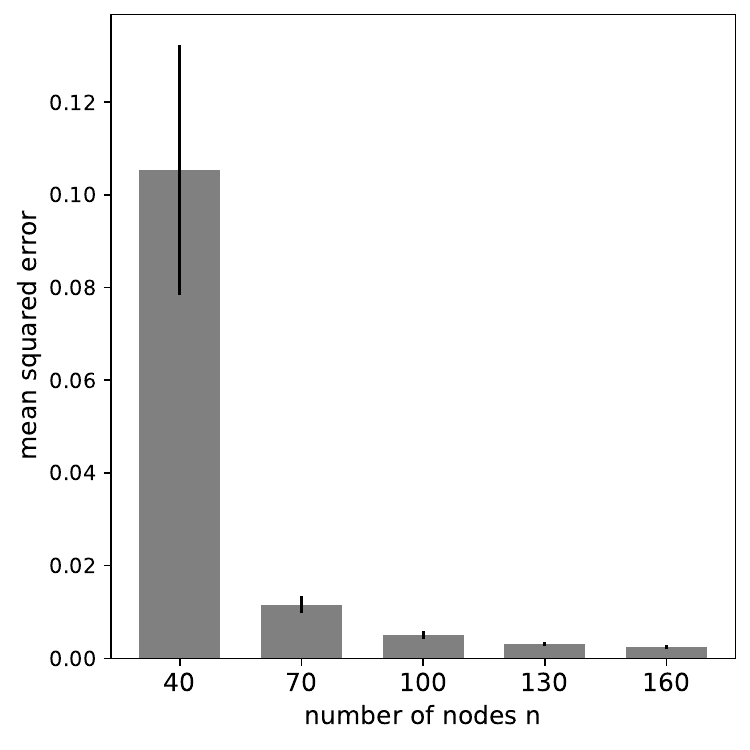}
        \caption{$\b \beta^n$}
        \label{fig:MSE_N_b_n}
    \end{subfigure}
    \hfill
    \begin{subfigure}[c]{\figwidth}
        \centering
        \includegraphics[width=\textwidth]{ 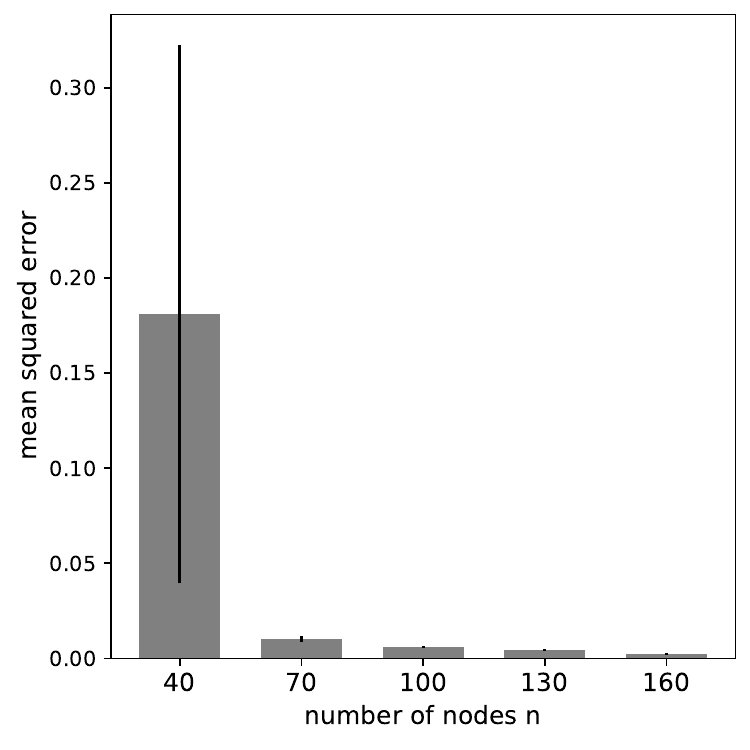}
        \caption{$\b \beta^r$}
        \label{fig:MSE_N_b_r}
    \end{subfigure}
    \hfill
    \caption{Averaged mean squared errors (MSEs) of GMM estimator for $\b \mu$, $\b \alpha^n$, $\b \alpha^r$, and averaged MSEs of maximum likelihood estimator for $\b \beta^n$, $\b \beta^r$ ($\pm$ standard error over 10 runs). (\subref{fig:MSE_T_mu})-(\subref{fig:MSE_T_b_r}) Fixed $n=90$ while varying duration $T$. (\subref{fig:MSE_N_mu})-(\subref{fig:MSE_N_b_r}) Fixed $T=600$ while varying number of nodes $n$. The MSEs for all parameters decrease as $n$ or $T$ decreases.
    }
    \label{fig:MSEs}
\end{figure}

\subsection{Refinement Procedure in the SR Model}

\begin{figure}[t]
    \newcommand{\figwidth}{0.49\textwidth}
    \centering
    \hfill
    \begin{subfigure}[c]{\figwidth}
        \centering
        \includegraphics[width=\textwidth]{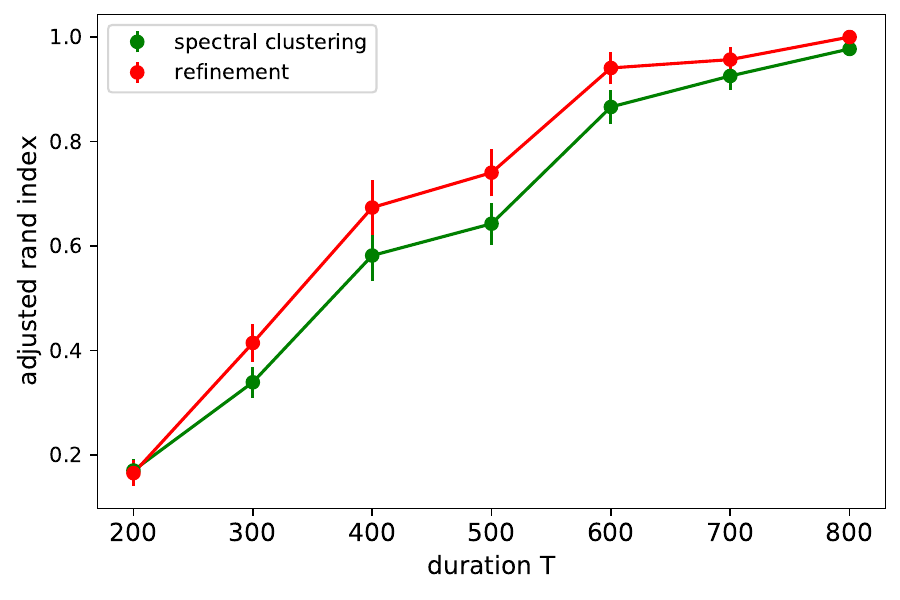}
        \caption{Fixed $n=40$}
        \label{fig:ref_T}
    \end{subfigure}
    \hfill
    \begin{subfigure}[c]{\figwidth}
        \centering
        \includegraphics[width=\textwidth]{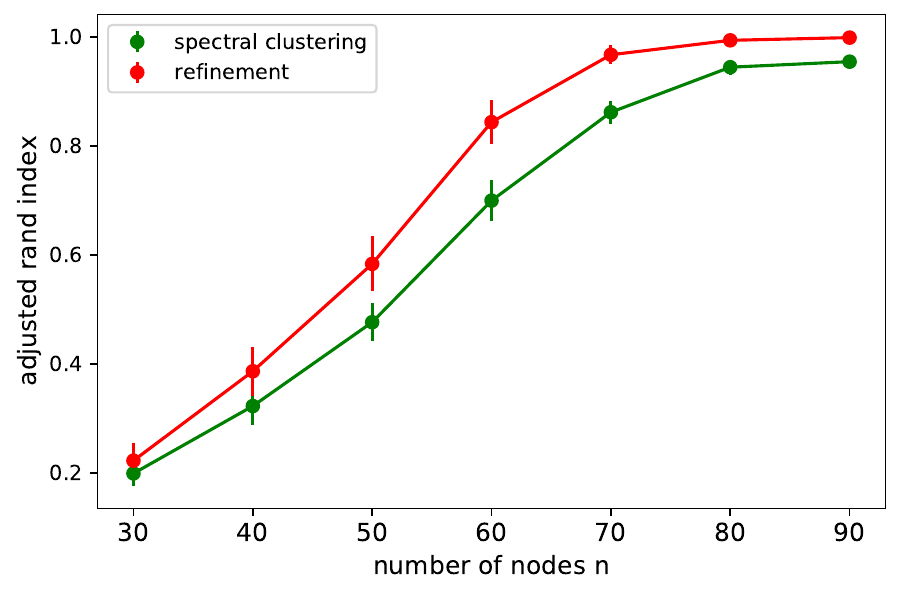}
        \caption{Fixed $T=300$}
        \label{fig:ref_n}
    \end{subfigure}
    \hfill
    \\[12pt]
     \centering
     \hfill
    \begin{subfigure}[c]{\figwidth}
        \centering
        \includegraphics[width=\textwidth]{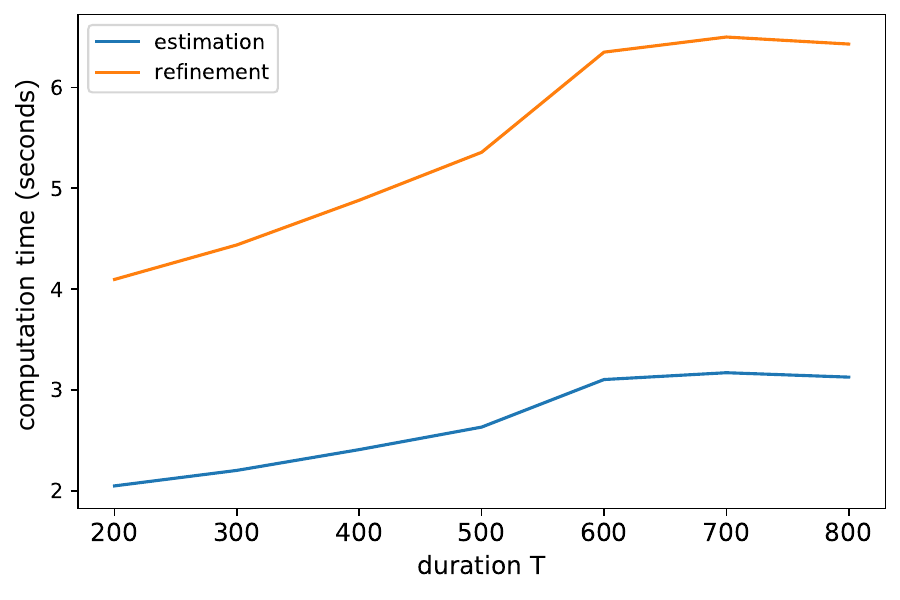}
        \caption{Fixed $n=40$}
        \label{fig:ref_T_time}
    \end{subfigure}
    \hfill
    \begin{subfigure}[c]{\figwidth}
        \centering
        \includegraphics[width=\textwidth]{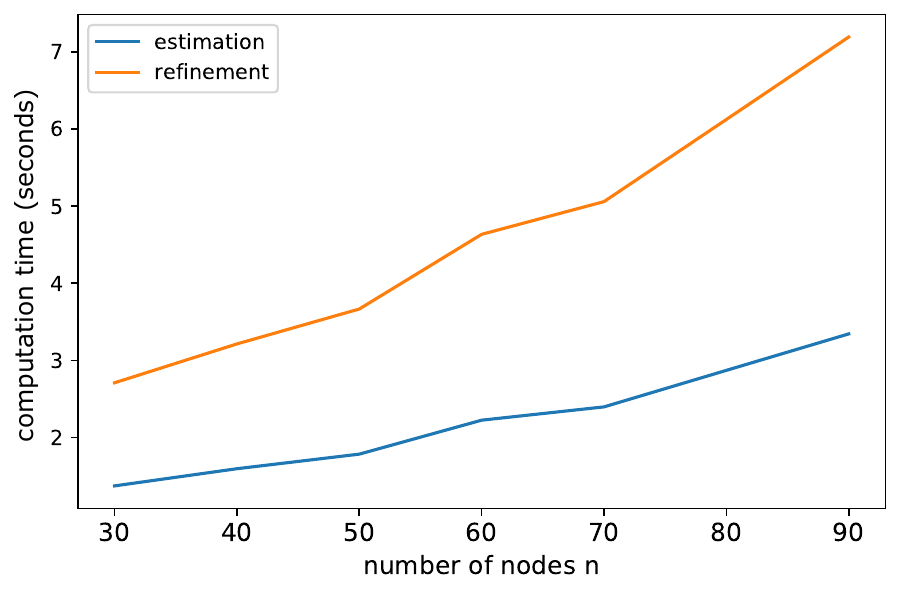}
        \caption{Fixed $T=300$}
        \label{fig:ref_n_time}
    \end{subfigure}
    \hfill
    \caption {(\subref{fig:ref_T})-(\subref{fig:ref_n}) Adjusted Rand index of spectral clustering and the refinement algorithm with varying $T$ and $n$, respectively ($\pm$ standard error over 10 simulated networks). (\subref{fig:ref_T_time})-(\subref{fig:ref_n_time}) Computation time of the spectral clustering + estimation time with and without refinement while varying $T$ and $n$, respectively.}
    \label{fig:refinement}
\end{figure}

We compare the accuracy of community detection and computation time of the spectral clustering algorithm and the refinement algorithm (Algorithm \ref{algrefine}) in Figure \ref{fig:refinement}. For this purpose, we simulate 10 relational event datasets by fixing $n$ and varying $T$ or fixing $T$ and varying $n$. We keep the $\b \mu$ and $\b \alpha^n, \b \alpha^r$ parameters the same as in the previous section. However, we decrease the decay parameters that correspond to the between community excitations by letting $\beta^{n}_{ab} = 0.1,\beta^{r}_{ab} =0.1$, while the intra community decay parameters are still kept at $\beta^{n}_{aa} = 1,\beta^{r}_{aa} =1$. Therefore, we have a substantial difference between the intra block and inter block decay parameters. In our model, the decay parameters do not influence the expected count matrix, so it should not have any significant effect on the performance of spectral clustering. However, the timestamp of the events should bring us more information, and we expect that the refinement procedure which utilizes the likelihood function can improve the community detection results. 

Figures \ref{fig:ref_T}-\ref{fig:ref_n} show we can generally obtain a higher adjusted Rand index after applying the refinement algorithm. We further notice that the improvement is more significant towards the middle of the curves, when the initial clustering has reasonable result but is not perfect. On one hand, when the initial clustering result is very bad, the parameters estimation will be inaccurate, so the likelihood refinement algorithm will also perform poorly. On the other hand, when the initial clustering result is already good enough, there are not too many misclustered nodes left, so the improvement is also limited. 

We further compare the computation time of spectral clustering followed by parameter estimation with the refinement algorithm, which includes the initial spectral clustering, parameter estimation, local refinement, and parameter re-estimation. We can see in Figures \ref{fig:ref_T_time} and \ref{fig:ref_n_time} that 
the refinement process requires approximately double the time compared to the initial spectral clustering and parameter estimation. This shows that our refinement algorithm is practical, and even if we need to re-estimate the parameters, the time complexity is only a constant multiple of the original one.

\section{Real Data Experiments}

We analyze 5 real relational events datasets to evaluate the restricted SR model's predictive ability and computational efficiency. Each dataset consists of a list of events where each event consists of a sender, a receiver and a timestamp. 
Summary statistics for the datasets are shown in Table \ref{tab:dataStats}.
For all datasets, the timestamps are scaled to be in the range $[0,1000]$, following the same set up as \citet{soliman2022multivariate}. The datasets are divided into training and test data as noted in Table \ref{tab:dataStats}, with the test events occurring at the end of the dataset. We briefly describe the datasets below.

\begin{table}[t]
    \centering
    \caption{Summary statistics of real network datasets. 
    Test events are held out and used only for evaluating predictive accuracy.}
    \label{tab:dataStats}
    \begin{tabular}{cccc}
    \hline
    Dataset  & Nodes    & Total Events & Test Events \\
    \hline
    Reality  & $70$     & $2,161$      & $661$ \\
    Enron    & $142$    & $4,000$      & $1,000$ \\
    MID      & $147$    & $5,117$      & $1,078$\\
    email-Eu & $888$    & $264,360$    & $51,667$ \\
    Facebook & $43,953$ & $852,833$    & $170,567$ \\
    \hline
    \end{tabular}
\end{table}

\begin{itemize}
\item \textbf{MIT Reality Mining} \citep{eagle2009inferring}: We analyze a dataset consisting of 2,161 phone calls among core 70 callers and recipients. We use the start time of each call as the event timestamp.

\item \textbf{Enron Emails} \citep{klimt2004enron}: We consider a subset of the Enron email corpus as in \citet{Dubois2013} and \citet{soliman2022multivariate}, which includes 4,000 emails exchanged among 142 individuals.

\item \textbf{Militarized Interstate Disputes (MIDs)} from the Correlates of War project \citep{palmer2021mid5}:
    We consider a total of 5,117 events between 147 (sovereign) states where each event is an act of hostility from one state to another state. We remove 8 nodes from the dataset which are disconnected from the largest connected component.

\item \textbf{Email-Eu-core temporal network} \citep{paranjape2017motifs}: We consider a subset of the Email-Eu-core temporal network dataset that includes 264,360 email communications between 888 members of an European research institution in 452 days. All these nodes are part of the largest connected component.

 \item \textbf{Facebook Wall Posts} \citep{viswanath2009evolution}: 
    We consider a total of 852,833 Facebook wall posts from September 2004 to January 2009 among 43,953 users, 
    We only consider posts from a user to another user so that there are no self-edges.
    We remove the nodes that are not connected to the largest connected component.
\end{itemize}

We compare our models with several other temporal point process models: MULCH \citep{soliman2022multivariate}, CHIP \citep{arastuie2020chip}, BHM \citep{junuthula2019block}, and DLS \citep{yang2017decoupling}.
The MULCH, CHIP, and BHM models are described in Section \ref{sec:dch_examples} and are also part of the DCH models. 
For these DCH models, we assign nodes that are in the test set but not the training set to the largest block, consistent with \citet{arastuie2020chip} and \citet{soliman2022multivariate}. 
The Dual Latent Space (DLS) model uses continuous latent spaces to model a reciprocal excitation network. 
For the DLS model, we randomly sample the latent positions form multivariate Gaussian for those new nodes.

\subsection{Predictive Ability}
\label{sec:realDataPredictive}
\paragraph{Test Log-likelihood:}
To evaluate the performance of the different models, we compute the test data log-likelihood per event, which has been used in many prior studies \citep{soliman2022multivariate,arastuie2020chip,Dubois2013}.
We use the data in the training set to fit the models by estimating the community assignments of the nodes and the Hawkes process parameters, and then we evaluate predictive accuracy using the log-likelihood per event of the data in the testing set. 

From Table \ref{tab:testLogLik}, we find that our restricted SR model can achieve high test log-likelihood on all datasets, either the highest or second highest among all models. 
Its test log-likelihood on most datasets is only slightly worse than the more complex and much slower MULCH model.

\begin{table}[tp]
	\centering
    \caption{Mean test log-likelihood per event for 5 real network datasets across all models. Larger values indicate higher predictive ability. Bold entry denotes highest log-likelihood for each dataset, and underline denotes the second highest one. The DLS model cannot scale to the email-Eu and Facebook datasets.}
    \label{tab:testLogLik}
    \begin{tabular}{cccccc}
    \hline
    Model & Reality      & Enron      & MID      & email-Eu   & Facebook   \\
    \hline
    Restricted SR & $\underline{-4.49}$ & $\underline{-5.41}$ & $\b{-3.52}$  &$\underline{-3.64}$& $\underline{-7.30}$ \\
    MULCH & $\b{-3.82}$ & $\b{-5.13}$ & $\underline{-3.53}$  &$-3.76$ & $\b{-6.82}$\\
    CHIP  & $-4.83$      & $-5.61$      & $-3.67$    & $-4.26$  & $-9.46$  \\
    BHM   & $-5.37$      & $-7.49$      & $-5.33$     & $\b{-3.54}$ & $-14.4$  \\
    DLS   & $-5.65$      & $-7.57$      & $-4.52$      & &\\
    \hline
    \end{tabular}
\end{table}

\paragraph{Dynamic Link Prediction:}
Next we compare the models in terms of their dynamic link prediction ability by randomly sampling $100$ time intervals $[t, t +\delta]$ in the test set and using the models to compute the probability that a event will occur between each node pair in the time intervals. The probability that a event occur between node pair $(i,j)$ in $[t, t +\delta]$ is given by $1-\exp\{-\int_{t}^{t+\delta}\lambda_{ij}(s)ds\}$ \citep{yang2017decoupling}. 
For each time interval, we calculate the area under the receiver operating characteristic curve (AUC) for each model on each dataset. 
This experiment set-up has been used in several prior studies \citep{soliman2022multivariate,yang2017decoupling}. 

We choose the same $\delta$ as in \citet{soliman2022multivariate} for Reality, Enron, MID and Facebook datasets. For the email-EU dataset, we choose $\delta$ to be 1 month. 
For the Facebook dataset, we randomly sample $1,000$ sender nodes and $1,000$ receiver nodes, and only make prediction for the node pairs among them, as the network is too large to consider all sender and receiver pairs.
For each model, the $K$ is chosen to be the one that maximize the test log-likelihood.

From Table \ref{tab:dlpAUC}, we can see our restricted SR model achieves the highest AUC on the Facebook dataset and second highest on email-Eu. 
The more complex MULCH and DLS models perform better than the simpler models in this experiment, but they cannot scale to even the downsampled Facebook data (and email-Eu in the case of DLS).

\begin{table}[tp]
    \setlength{\tabcolsep}{3pt}
	\centering
    \caption{Dynamic link prediction AUC for 5 real network datasets across all models. Mean (standard deviation) of AUC over 100 random short time intervals. 
    Bold entry denotes highest mean link prediction AUC for a dataset, and underline denotes the second highest one. 
    MULCH does not scale to the Facebook dataset, and DLS does not scale to the email-Eu or Facebook datasets.
    be scaled to this dataset.}
    \label{tab:dlpAUC}
    \begin{tabular}{cccccc}
    \hline
    Model & Reality         & Enron               & MID& email-Eu&Facebook \\
    \hline
    Restricted SR & $0.921(.041)$ & $0.810(.004)$      & $0.968(.026)$& $\underline{0.958(.006)}$ & $\b{0.763(.097)}$\\
    MULCH & $\bm{0.954 (.036)}$ & $\underline{0.852 (.006)}$      & $0.968 (.023)$& $\b{0.959(.008)}$&N/A\\
    CHIP  & $0.931 (.033)$      & $0.792 (.005)$      & $0.966 (.030)$ &$0.926(.009)$&$0.756(.093)$\\
    BHM   & $\underline{0.951 (.035)}$      & $0.846 (.005)$      & $\underline{0.973 (.022)}$ & $0.889(.013)$ &$0.661(.089)$\\
    DLS   & $0.935 (.034)$      & $\bm{0.872 (.001)}$ & $\bm{0.981 (.013)}$ &N/A&N/A\\
    \hline
    \end{tabular}
\end{table}

\subsection{Computational Efficiency}
\begin{table}[t]
	\centering
    \caption{Wall clock time to fit each model on the 3 largest real network datasets. For each model, the $K$ is chosen to be the one that maximize the test log-likelihood. The DLS model does not scale to email-Eu or Facebook.}
    \label{tab:time}
    \begin{tabular}{cccc}
    \hline
    Model &  MID & email-Eu & Facebook \\
    \hline
    Restricted SR & 8.4 seconds & 12 minutes& 50 minutes\\
    MULCH & 31 seconds & 28 minutes & 16 hours\\
    CHIP  & 0.48 seconds & 26 seconds & 3.0 minutes\\
    BHM   & 3.5 seconds & 50 seconds & 3.5 minutes\\
    DLS   & 90 minutes & & \\
    \hline
    \end{tabular}
\end{table}

We compare the computational efficiency of our restricted SR model against the MULCH, CHIP, BHM and DLS models by measuring the wall clock time to fit a dataset. We fit each model separately on the 3 largset datasets: MID, eu-Email, and Facebook datasets.
The wall clock times are shown in Table \ref{tab:time}. Our restricted SR model is much faster than MULCH and DLS, especially on a large dataset like Facebook. This shows the potential that our restricted SR model can be scaled to larger datasets. 
It is nearly impossible to apply refinement on MULCH when the dataset is large because it is too slow. The univariate Hawkes process models, CHIP and BHM, are faster than our model on all datasets. However, our models have better predictive ability, which is indicated by the higher test log-likelihood and better dynamic link prediction results in Section \ref{sec:realDataPredictive} for our restricted SR model compared to CHIP and BHM.

In the dynamic link prediction experiments, we also notice that the more complex dependencies in the MULCH model can significantly increase computation time. 
For example, on the email-Eu dataset, the dynamic link prediction experiment required 24 minutes for MULCH.  In comparison, our restricted SR model required only 1.9 minutes, which is comparable to the simpler CHIP model that required 2.7 minutes. 
The BHM is the fastest at the dynamic link prediction task, requiring only 1.9 seconds. 
This is because all nodes in the same block are equally excited, so the node pair which are in the same block pair will have the same intensity function, and we just need to compute it once for each block pair. 
The restricted SR, CHIP, and models MULCH allow each node pair to have a different intensity function, and thus, dynamic link prediction is slower for these models.

\subsection{Ablation Studies}
\label{sec:ablation}
Recall that the SR model we introduced in Section \ref{sec:sr_model} had 6 parameters for block pairs $(a,b)$ and $(b,a)$ with $a\neq b$: $M_{ab},M_{ba}, \alpha^n_{ab}, \alpha^n_{ba},\alpha^r_{ab},\alpha^r_{ba}$.
We then introduced the restricted SR model in Section \ref{sec:restricted_sr} by assuming that $\alpha^r_{ab} = \alpha^r_{ba}$ (equal reciprocal excitation) in order to reduce the number of parameters to 5, which led to the GMM results in Section \ref{sec:gmm_estimation}. 
One could instead assume equal self excitation so that $\alpha^n_{ab} = \alpha^n_{ba}$, providing an alternative restriction to 5 parameters. 

We perform experimental comparisons on four different variants of our proposed restricted SR model. RES-SR-r denotes the restricted SR model with $\alpha^r_{ab} = \alpha^r_{ba}$ so that the reciprocal excitation parameter is the same within a block pair. 
RES-SR-n denotes the restricted SR model with $\alpha^n_{ab} = \alpha^n_{ba}$ so that the self excitation parameter is the same within a block pair. 
In both cases, we consider versions both with and without our local refinement procedure from Section \ref{sec:refinement}.

The predictive accuracy of the different variants is shown in Tables \ref{tab:testLogLikAblation} and \ref{tab:dlpAUC_Ablation} for test log-likelihood and dynamic link prediction AUC, respectively. 
The model labeled RES-SR-r + refinement is the variant we labeled as the restricted SR model in Tables \ref{tab:testLogLik} to \ref{tab:time}. 
For the MID dataset, all variants choose $K=1$, so we have only a single diagonal block pair $a=b=1$. 
Thus, the RES-SR-r and RES-SR-n models are the same, and there is no refinement necessary, so the log-likelihoods and AUCs are the same for all variants.
We find that the refinement procedure can improve the predictive ability in most cases, especially on the large datasets like email-Eu, and Facebook datasets. 
However, improvement in predictive ability is not guaranteed, as the refinement only increases the train data log-likelihood and not necessarily the test data, which could lead to overfitting. 

\begin{table}[tp]
	\centering
    \caption{Mean test log-likelihood per event for 5 real network datasets across all variants of the restricted SR model. Larger values indicate higher predictive ability. Bold entry denotes highest log-likelihood for each dataset, and underline denotes the second highest one.}
    \label{tab:testLogLikAblation}
    \begin{tabular}{cccccc}
    \hline
    Model & Reality      & Enron      & MID      & email-Eu   & Facebook   \\
    \hline
    RES-SR-r & $\underline{-4.48}$ & $\b{-5.39}$ & $\b{-3.52}$  & $-3.77$ & $-7.37$ \\
    RES-SR-r + refinement & $-4.49$ & $\underline{-5.41}$ & $\b{-3.52}$  &$\b{-3.64}$& $\b{-7.30}$ \\
    RES-SR-n  & $-4.61$ & $-5.48$  & $\b{-3.52}$&$-3.80 $ & $-7.41$ \\
    RES-SR-n + refinement &$\b{-4.33}$   & $-5.48$  & $\b{-3.52}$ & $\underline{-3.74}$&$ \underline{-7.33}$ \\
    \hline
    \end{tabular}
\end{table}

\begin{table}[tp]
    \setlength{\tabcolsep}{3pt}
	\centering
    \caption{Dynamic link prediction AUC for 5 real network datasets across all variants of the restricted SR model. Mean (standard deviation) of AUC over 100 random short time intervals. Bold entry denotes highest mean link prediction AUC for a dataset, and underline denotes the second highest one.}
    \label{tab:dlpAUC_Ablation}
    \begin{tabular}{cccccc}
    \hline
    Model & Reality         & Enron               & MID& email-Eu&Facebook \\
    \hline
    RES-SR-r & $0.913(.051)$ & $\underline{0.801(.007)}$      & $\b{0.968(.026)}$ &$0.943(.007)$& $0.754 (.093)$\\
    RES-SR-r + ref. & $0.921(.041)$ & $\b{0.810(.004)}$      & $\b{0.968(.026)}$& $\b{0.958(.006)}$ & $\underline{0.763(.097)}$\\
    RES-SR-n & $\underline{0.943(.040)}$ & $0.794	(.007)$      & $\b{0.968(.026)}$ &$0.943(.006)$&$0.759 (.087)$\\
    RES-SR-n + ref. & $\b{0.947(.035)}$ & $0.794	(.007)$      & $\b{0.968(.026)}$&$\underline{0.955(.007)}$ &$\b{0.765(.105)}$\\
    \hline
    \end{tabular}
\end{table}

The wall clock time to fit each different variant to each of the 3 largest datasets is shown in Table \ref{tab:timeAblation}. 
We find that the refinement procedure typically takes 2 to 3x the time of the estimation procedure without refinement, similar to what we observed with the simulated networks. 
Even with the refinement procedure, the restricted SR model is still highly scalable, as it fits the large Facebook data with over 40,000 nodes in under 1 hour.

\begin{table}[tp]
	\centering
    \caption{Wall clock time to fit each different variant of the restricted SR model on the 3 largest real network datasets. For each model, the $K$ is chosen to be the one that maximize the test log-likelihood.}
    \label{tab:timeAblation}
    \begin{tabular}{cccc}
    \hline
    Model &  \multicolumn{1}{c}{MID}& \multicolumn{1}{c}{email-Eu}&\multicolumn{1}{c}{Facebook} \\
    \hline
    RES-SR-r & 4.4 seconds & 2.8 minutes  &15 minutes\\
    RES-SR-r + refinement & 8.4 seconds & 12 minutes& 50 minutes\\
    RES-SR-n & 4.8 seconds & 2.5 minutes & 14 minutes\\
    RES-SR-n + refinement & 8.5 seconds & 9.8 minutes  & 43 minutes\\
    \hline
    \end{tabular}
\end{table}

\section{Conclusion}
In this paper we have theoretically analyzed a spectral clustering algorithm applied to the directed weighted count matrix for community detection in continuous time temporal networks constructed from relational events data. We introduced the Dependent Community Hawkes (DCH) models, a general class of block models allowing for dependencies across node pairs within two block pairs through a mutually exciting Hawkes process. 
The DCH models generalized the recently proposed MULCH model by \cite{soliman2022multivariate} as well as several other models.

Our upper bound brings out the relationship between the accuracy of spectral clustering and several model quantities including the time interval $T$, the number of nodes $n$, the number of communities $K$, the Hawkes process parameters, and a quantity $\gamma_{\max}$ that quantifies the amount of dependence induced by the mutually exciting Hawkes processes. Extensive simulation results verified our theoretical insights. 

We then proposed a new model from the DCH class of models, which we call the Self and Reciprocal excitation (SR) model. 
It is more flexible than other simpler DCH models from the literature \citep{junuthula2019block,arastuie2020chip} but much simpler than MULCH, which enabled us to develop a computationally efficient and statistically consistent GMM estimator for the parameters. We demonstrate that the proposed SR model with the proposed estimators is computationally almost as attractive as the CHIP model of \cite{arastuie2020chip}, while providing empirical data fits competitive with MULCH. 

While there are several results available on the accuracy of spectral clustering for community detection in network data, not much is known about how dependencies across the edges affect spectral clustering or how the method performs for weighted graphs. Our results in this paper provide insights into both of those questions using a plausible model for network data generation. This is our contribution to the literature on spectral clustering for static networks. On the other hand, our results provide estimation methods with theoretical guarantees and computational efficiency for a broad class of models for temporal networks or relational events data.

\acks{This material is based upon work supported by the National Science Foundation grants DMS-1830547, DMS-1830412, IIS-1755824, and IIS-2318751.}



\newpage

\appendix

\section{Proof of Main Results}
\subsection{Proof of Theorem \ref{thm:spectral_norm_bound}}
\label{sec11}

\begin{proof}
Our proof technique for bounding the spectral norm of the deviation of the count matrix from its expectation involves multiple steps. First, we obtain upper bounds on the quantities necessary to bound the spectral norm of the deviation of a Gaussian random matrix with dependent entries and having the same mean and covariance as the count matrix elements. Then, we combine the result on the rate of convergence of the count matrix to the Gaussian vector with this result to obtain the statement of the theorem. 

Accordingly, we obtain an upper bound on the max row sum of $\boldsymbol R=(\b I - \b \Gamma)^{-1}$ as a function of the DCH parameters using several arguments from linear algebra and properties of special matrices. Let $\boldsymbol G = \b I -\boldsymbol\Gamma$. Since $\b \Gamma$ is block-diagonal, we know $\b G$ is also a block diagonal matrix, with the blocks $\b G_{(a,b),(b,a)}= \b I - \b \Gamma_{(a,b),(b,a)}$, where $\b I$ is the identity matrix of appropriate dimension.
By our assumption (1) in the statement of the theorem,  $\rho(\boldsymbol \Gamma_{(a,b),(b,a)})\leq\sigma^*<1$ for any $1\leq a\leq b\leq K$ by the properties of the block diagonal matrix. Thus each of $\boldsymbol G_{(a,b),(b,a)}$ is invertible. Now, by the properties of block diagonal matrix, we know $\b R = \b G^{-1}$ is also a block diagonal matrix with the blocks given by $\b R_{(a,b),(b,a)} = \b G^{-1}_{(a,b),(b,a)}$.
We notice that for each block pair $(a,b),a\neq b$, we can also write $\boldsymbol G_{(a,b)}$ as a block matrix, i.e.,
\begin{equation*}
\label{eq:bm_Gamma_G}
\boldsymbol{\Gamma}_{(a, b),(b,a)}=\left(\begin{array}{ll}
\boldsymbol{\Gamma}_{a b \rightarrow a b} & \boldsymbol{\Gamma}_{ba \rightarrow ab} \\
\boldsymbol{\Gamma}_{ab \rightarrow ba} & \boldsymbol{\Gamma}_{b a \rightarrow b a}
\end{array}\right) \quad \Rightarrow \quad 
\boldsymbol{G}_{(a, b),(b,a)}=\left(\begin{array}{ll}
\boldsymbol{G}_{a b \rightarrow a b} & \boldsymbol{G}_{ba \rightarrow ab} \\
\boldsymbol{G}_{ab \rightarrow ba} & \boldsymbol{G}_{b a \rightarrow b a}
\end{array}\right).
\end{equation*}
From our assumption (2) in the statement of the theorem, for each sub-block matrix in $\b \Gamma_{(a, b),(b,a)}$, the row sums are identical, and thus we can use $\gamma_{\cdot\cdot\rightarrow\cdot\cdot}$ to denote these row sums (e.g., $\mGamma_{ab\rightarrow ab}\boldsymbol 1 = \gamma_{ab\rightarrow ab}\boldsymbol 1$, where $\b 1$ is a column vector containing all 1s). 
Also, since $\mGamma_{(a, b),(b,a)}$ is a non-negative matrix, we know the minimum row sum of $\b \Gamma_{(a, b),(b,a)}$ is greater than $\min\{\mgamma_{ab\rightarrow ab}, \mgamma_{ba\rightarrow ba} \}$. 

Then by Proposition \ref{lem:bound_eig}, we know that $\rho(\bmGamma)\geq \min\{\mgamma_{ab\rightarrow ab}, \mgamma_{ba\rightarrow ba} \}$.
Without loss of generality, we assume $\mgamma_{ab\rightarrow ab}$ is the minimum of the two and therefore, 
\[\mgamma_{ab\rightarrow ab} \leq \rho(\bmGamma)\leq \sigma^*<1.
\]
But since $\mgamma_{ab\rightarrow ab}$ is also the maximum row sum of the sub-block matrix  $\mGamma_{ab\rightarrow ab}$ (in fact all the rows have identical sums), by Proposition \ref{lem:bound_eig},
\[\rho(\mGamma_{ab\rightarrow ab})\leq \mgamma_{ab\rightarrow ab}\leq \sigma^*<1.
\]
So $(\sigma^*+\epsilon) \boldsymbol I -\mGamma_{ab\rightarrow ab}$ is invertible for any $\epsilon>0$ . Consider the following block matrix: 
\begin{equation*}
(\sigma^*+\epsilon) \boldsymbol I -\bmGamma = 
\left(\begin{array}{cc}
(\sigma^*+\epsilon) \boldsymbol I -\boldsymbol{\Gamma}_{a b \rightarrow a b} &- \boldsymbol{\Gamma}_{ba \rightarrow ab} \\
-\boldsymbol{\Gamma}_{ab \rightarrow ba} & (\sigma^*+\epsilon) \boldsymbol I-\boldsymbol{\Gamma}_{b a \rightarrow b a}
\end{array}\right).
\end{equation*}
Clearly this matrix is invertible, and we can define the Schur complement as below:
\begin{equation}
\label{eq:schur_complement}
\begin{split}
\left[(\sigma^*+\epsilon) \boldsymbol I -\bmGamma\right]&/\left[(\sigma^*+\epsilon) \boldsymbol I -\mGamma_{ab\rightarrow ab}\right] \\
 :=&(\sigma^*+\epsilon) \boldsymbol I -\mGamma_{ba\rightarrow ba} - \mGamma_{ab\rightarrow ba} \left[(\sigma^*+\epsilon) \boldsymbol I -\mGamma_{ab\rightarrow ab}\right]^{-1}\mGamma_{ba\rightarrow ab}.
\end{split}
\end{equation}
 Notice that all sub-matrices involved in \eqref{eq:schur_complement} above satisfy the conditions in Proposition \ref{lem:identical_row_sum} since each of them has identical row sums. Therefore, using the result in Proposition \ref{lem:identical_row_sum}, we have 
\begin{align*}
    \big(\left[(\sigma^*+\epsilon) \boldsymbol I -\bmGamma\right] & /\left[(\sigma^*+\epsilon) \boldsymbol I -\mGamma_{ab\rightarrow ab}\right]\big)\boldsymbol 1 \\
    & = \left(\sigma^*+\epsilon-\mgamma_{ba\rightarrow ba} - \frac{\mgamma_{ab\rightarrow ba} \mgamma_{ba\rightarrow ab}}{\sigma^*+\epsilon-\mgamma_{ab \rightarrow ab}}\right)\boldsymbol 1. 
\end{align*}
Next note that for any $\epsilon>0$, $(\sigma^*+\epsilon) \boldsymbol I -\bmGamma$ is a M-matrix (\cite{pena1995m}). This implies its inverse and the Schur complement above are non-negative matrices. Then we must have for any $\epsilon>0$,
\begin{equation}
\label{eq:bound_gamma}
\sigma^*+\epsilon-\mgamma_{ba\rightarrow ba} - \frac{\mgamma_{ab\rightarrow ba} \mgamma_{ba\rightarrow ab}}{\sigma^*+\epsilon-\mgamma_{ab \rightarrow ab}}\geq 0.
\end{equation}
Further, since $\mGamma$ is a non-negative matrix, so $\gamma_{ab\rightarrow ba},\gamma_{ba\rightarrow ab}$ are non-negative. Also, since $\sigma^*\geq \gamma_{ab \rightarrow ab}$, we know
\[\frac{\mgamma_{ab\rightarrow ba} \mgamma_{ba\rightarrow ab}}{\sigma^*+\epsilon-\mgamma_{ab \rightarrow ab}}\geq 0.
\]
Thus we can derive $\mgamma_{ba\rightarrow ba}\leq \sigma^*$ from the inequality (\ref{eq:bound_gamma}). Then from Proposition \ref{lem:bound_eig}, we know $\rho(\mGamma_{ba \rightarrow ba})\leq \mgamma_{ba\rightarrow ba} = \sigma^*<1$. Thus $\boldsymbol{G}_{a b \rightarrow a b} = \boldsymbol I - \boldsymbol{\Gamma}_{a b \rightarrow a b}$ and $\boldsymbol{G}_{ba \rightarrow ba} = \boldsymbol I - \boldsymbol{\Gamma}_{ba \rightarrow ba}$ are invertible. The matrix $\bmG$ can be inverted blockwise as follows:
\begin{equation}
\label{eq:G_inverse}
\begin{split}
& \bmG^{-1}\\
&=\left(\begin{array}{cc}
\left({{\boldsymbol{G}_{a b \rightarrow a b}}}-{{\boldsymbol{G}_{ba \rightarrow a b}} {\boldsymbol{G}^{-1}_{ba \rightarrow ba}}} {{\boldsymbol{G}_{ab \rightarrow ba}}}\right)^{-1} & {0} \\
{0} & \left({{\boldsymbol{G}_{ba \rightarrow ba}}}-{{\boldsymbol{G}_{ab \rightarrow ba}} {\boldsymbol{G}^{-1}_{a b \rightarrow a b}}} {{\boldsymbol{G}_{ba \rightarrow a b}}}\right)^{-1}
\end{array}\right)\\
& \qquad \times \left(\begin{array}{cc}
\boldsymbol{I} & -{{\boldsymbol{G}_{ba \rightarrow a b}} {\boldsymbol{G}^{-1}_{ba \rightarrow ba}}} \\
-{{\boldsymbol{G}_{ab \rightarrow ba}} {\boldsymbol{G}^{-1}_{a b \rightarrow a b}}} & \boldsymbol{I}
\end{array}\right),
\end{split}
\end{equation}
which is a product of two block matrices \citep{lu2002inverses}. Since $\b G_{ab\rightarrow ab} = \b I- \b \Gamma_{ab\rightarrow ab}$, we know it also has identical row sum $1- \gamma_{ab\rightarrow ab}$, and similarly, $\b G_{ba\rightarrow ba}$ has identical row sum $1- \gamma_{ba\rightarrow ba}$. Also, since $\b G_{ba\rightarrow ab} = -\b \Gamma_{ba\rightarrow ab}$ and $\b G_{ab\rightarrow ba} = -\b \Gamma_{ab\rightarrow ba}$, we know they have identical row sum $-\gamma_{ba\rightarrow ab}$ and $-\gamma_{ab\rightarrow ba}$ respectively. Using Proposition \ref{lem:identical_row_sum} and inequality (\ref{eq:bound_gamma}), we know that 
\begin{equation*}
\begin{split}
     \left({{\boldsymbol{G}_{ba \rightarrow ba}}}-{{\boldsymbol{G}_{ab \rightarrow ba}} {\boldsymbol{G}^{-1}_{a b \rightarrow a b}}} {{\boldsymbol{G}_{ba \rightarrow a b}}}\right)^{-1} \boldsymbol 1 &= \left(1-\mgamma_{ba\rightarrow ba} - \frac{\mgamma_{ba\rightarrow ab} \mgamma_{ab\rightarrow ba}}{1-\mgamma_{ab \rightarrow ab}}\right)^{-1}\boldsymbol 1\\
    &\leq (1-\sigma^*)^{-1}\boldsymbol 1.
\end{split}
\end{equation*}
Similarly, we can get 
$$\left({{\boldsymbol{G}_{a b \rightarrow a b}}}-{{\boldsymbol{G}_{ba \rightarrow a b}} {\boldsymbol{G}^{-1}_{ba \rightarrow ba}}} {{\boldsymbol{G}_{ab \rightarrow ba}}}\right)^{-1} \boldsymbol 1\leq (1-\sigma^*)^{-1}\boldsymbol 1.$$
Using assumption (2) from the theorem and Proposition \ref{lem:identical_row_sum}, along with the fact that $\mgamma_{ba \rightarrow ba}\leq \sigma^*$,  we  have 
\begin{equation*}
-{{\boldsymbol{G}_{ba \rightarrow a b}} {\boldsymbol{G}^{-1}_{ba \rightarrow ba}}}\boldsymbol 1 = \mgamma_{ba \rightarrow a b}(1-\mgamma_{ba \rightarrow ba})^{-1}\boldsymbol 1\leq \gamma_{\max}(1-\sigma^*)^{-1}\boldsymbol 1.
\end{equation*}
Similarly we can have $-{{\boldsymbol{G}_{ba \rightarrow ab}} {\boldsymbol{G}^{-1}_{ab \rightarrow ab}}}\boldsymbol 1\leq \gamma_{\max}(1-\sigma^*)^{-1}\boldsymbol 1$. Plug in these upper bounds to (\ref{eq:G_inverse}), and we can compute the row sum bound for the $\boldsymbol R_{(a,b),(b,a)}$ now:
\begin{equation}
\label{eq:row_sum_R}
\begin{split}
    & \boldsymbol R_{(a,b),(b,a)} \boldsymbol 1 = \bmG^{-1} \boldsymbol 1\\ 
    &= \left(\begin{array}{cc}
\left({{\boldsymbol{G}_{a b \rightarrow a b}}}-{{\boldsymbol{G}_{ba \rightarrow a b}} {\boldsymbol{G}^{-1}_{ba \rightarrow ba}}} {{\boldsymbol{G}_{ab \rightarrow ba}}}\right)^{-1} & {0} \\
{0} & \left({{\boldsymbol{G}_{ba \rightarrow ba}}}-{{\boldsymbol{G}_{ab \rightarrow ba}} {\boldsymbol{G}^{-1}_{a b \rightarrow a b}}} {{\boldsymbol{G}_{ba \rightarrow a b}}}\right)^{-1}\end{array}\right)\\
& \qquad \times \left(\begin{array}{l}
 (1+\mgamma_{ba \rightarrow a b}(1-\mgamma_{ba \rightarrow ba})^{-1}) \boldsymbol 1 \\
 (1+\mgamma_{ab \rightarrow ba}(1-\mgamma_{ab \rightarrow ab})^{-1}) \boldsymbol 1
\end{array}\right)\\
&= \left(\begin{array}{l}
 \left(1-\mgamma_{ab\rightarrow ab} - \frac{\mgamma_{ba\rightarrow ab} \mgamma_{ab\rightarrow ba}}{1-\mgamma_{ba \rightarrow ba}}\right)^{-1}(1+\mgamma_{ba \rightarrow a b}(1-\mgamma_{ba \rightarrow ba})^{-1}) \boldsymbol 1 \\
 \left(1-\mgamma_{ba\rightarrow ba} - \frac{\mgamma_{ba\rightarrow ab} \mgamma_{ab\rightarrow ba}}{1-\mgamma_{ab \rightarrow ab}}\right)^{-1}(1+\mgamma_{ab \rightarrow ba}(1-\mgamma_{ab \rightarrow ab})^{-1}) \boldsymbol 1
\end{array}\right)\\
&\leq(1-\sigma^*)^{-1}\left(1 + \gamma_{\max}(1-\sigma^*)^{-1}\right) \boldsymbol 1.
\end{split}
\end{equation}
Since $\bmG$ is a M-matrix, we know $\bmR$ is a non-negative matrix, so 
$$\|\bmR\|_{\infty} \leq (1-\sigma^*)^{-1}\left(1 + \gamma_{\max}(1-\sigma^*)^{-1}\right)\leq (1-\sigma^*)^{-2}\left(1 + \gamma_{\max}\right).$$
For $a=b$, $\boldsymbol \Gamma_{(a,a)}$ has identical row sums.
Thus, $\mR_{(a,a)}$ has identical row sums smaller than $(1- \sigma^*)^{-1}$ by Proposition \ref{lem:identical_row_sum}. Therefore, if we let $C_1 = (1-\sigma^*)^{-2}\left(1 + \gamma_{\max}\right)$, then we can have $\|\boldsymbol R\|_{\infty}\leq C_1$. We can use the same argument to prove that the max column sum of $\boldsymbol R$, $\|\boldsymbol R^T\|_{\infty}\leq C_1$.

Next using the result in Proposition \ref{prop:normal}, we have
\[\sqrt{T}\left(\frac{\operatorname{vec}\left(\boldsymbol{N}_{T}\right)}{T}- \boldsymbol{R} \operatorname{vec}(\boldsymbol{\mu})\right) \stackrel{d}{\Rightarrow} \mathcal{N}\left( \b 0, \boldsymbol{R} \operatorname{diag}(\boldsymbol{R} \operatorname{vec}(\boldsymbol{\mu})) \boldsymbol{R}^{T}\right)\]
as $T\rightarrow \infty$, and the speed of convergence can be characterized by the upper bound on the $d_2$ distance given in Proposition \ref{prop:normal}. Suppose $\b M$ is a random matrix such that $\operatorname{vec}(\b M)$ follows a $\mathcal{N}(\b 0,\boldsymbol{R} \operatorname{diag}(\b R\operatorname{vec}(\b \mu)) \boldsymbol{R}^{T})$ distribution.
The relationship between the $d_2$ distance and the Kolmogorov distance in Proposition \ref{Kdist} can be used to conclude that, for any $x \in \mathbb{R}^{n^2}$,
\[
\Bigg |P\left(\sqrt{T}\left(\frac{\operatorname{vec}\left(\boldsymbol{N}_{T}\right)}{T}- \boldsymbol{R} \operatorname{vec}(\boldsymbol{\mu})\right)>x\right) -  P\left(\operatorname{vec}(\b M) >x \right) \Bigg | < \frac{\kappa}{T^{1/6}},\]
for a constant $\kappa(n)$ which does not depend on $T$ but may depend on $n$. In the above statement, the notation $\b x\geq \b y$ for two vectors $\b x,\b y$ means $x_i \geq y_i$ for all co-ordinates $i$. 

By the assumption of the theorem and the fact $\|\boldsymbol R\|_{\infty}\leq C_1$, we know 
\begin{equation}
\label{eq:bound_mean}
    \boldsymbol R \operatorname{vec} (\boldsymbol \mu) \leq C_1\mu_{\max}\boldsymbol 1.
\end{equation}
 Thus, using the sub-multiplicative property of $\|\cdot\|_{\infty}$ norm, we have 
\begin{equation*}
    \|\mR \operatorname{diag}(\boldsymbol R \operatorname{vec}(\boldsymbol \mu)) \boldsymbol R^T\|_{\infty}\leq \|\mR\|_{\infty} \|\operatorname{diag}(\boldsymbol R \operatorname{vec}(\boldsymbol\mu))\|_{\infty}  \|\boldsymbol R^T\|_{\infty}\leq C_1^3\mu_{\max}.
\end{equation*}

Provided the result above, we are ready to calculate the upper bound of  $\E\| \b M\|$ using Proposition \ref{lem:khintchine_inequality}. Note that $\b M$ has jointly Gaussian entries and $\E \b M = \b 0$. We first compute
\begin{equation*}
\begin{split}
    \left\|\mathbb{E}  \b M^{T} \b M\right\|_{\infty} &= \max_{i} \sum_{1\leq k\leq n} \left(\sum_{1\leq j\leq n} |\E {M}_{ki} {M}_{kj}|\right)\\
    &\leq \sum_{1\leq k\leq n}   \|\mR \operatorname{diag}(\boldsymbol R \operatorname{vec}(\boldsymbol\mu)) \boldsymbol R^T\|_{\infty}\\
    &\leq n C_1^3\mu_{\max},
\end{split}
\end{equation*}
where the first inequality is due to $\sum_{k,l}|\E {M}_{ij} {M}_{kl}| \leq \| \mR \operatorname{diag}(\boldsymbol R \operatorname{vec}(\b \mu)) \boldsymbol R^T\|_{\infty}$ for any $(i,j)$.
Since $\mathbb{E}  \b M^{T} \b M$ is a symmetric matrix, we know $\left\| \mathbb{E} \b M^{T} \b M \right\|\leq \left\| \mathbb{E} \b M^{T} \b M \right\|_{\infty}\leq  C_1^3 n\mu_{\max} $.
Similarly, we can also show $\left\|\mathbb{E} \b M \b M^{T} \right\|\leq  C_1^3 n\mu_{\max} $. Thus by Proposition \ref{lem:khintchine_inequality}, we have 
\begin{equation}
\begin{split}
\label{eq:expect_spectral_bound}
    \E\left\| \b M\right\| &\leq 2\sqrt{(1+2 \log n)} \max\left\{ \left\|\E \b M^{T} \b M\right\|^{1/2}, \left\|\E {\b M \b M^{T}}\right \|^{1/2}  \right\}\\
    &\leq 2\sqrt{n C_1^3\mu_{\max}(1+2 \log n)} 
\end{split}.
\end{equation}

We can also compute a tail bound from  (\ref{eq:expect_spectral_bound}). Note that $\|{\b M}\|=\sup_{\|\boldsymbol v\|=\|\boldsymbol w\|=1}|\boldsymbol v^T{\b M}\boldsymbol w|$, and we have
\begin{equation*}
\begin{split}
   \E |\boldsymbol v^T{\b M}\boldsymbol w|^2&= \E \left(\sum_{ij} v_{i} w_{j} {M}_{i j}\right)^{2}\\
   &=\E\left(\sum_{i j k l} v_{i} w_{j} v_{k} w_l {M}_{i j} {M}_{k l}\right)\\
   &=\sum_{ij} v_{i}w_j \sum_{k l} v_{k} w_{l} \E{M}_{i j} {M}_{kl}\\
   &\leq \sum_{ij} v_{i} w_{j}\left(\sum_{kl}\left(v_{k} w_{l}\right)^{2} \sum_{k l}\left(\E {M}_{i j} {M}_{k l}\right)^{2}\right)^{\frac{1}{2}}\\
   &\leq \sum_{ij} v_{i} w_{j}\left(1 \left(\sum_{k l}\E  {{M}_{i j} {M}_{k l}}\right)^{2}\right)^{\frac{1}{2}}\\
   &\leq \sum_{ij} v_{i} w_{j} \left\|\boldsymbol{R} \operatorname{diag}(\boldsymbol{R} \operatorname{vec}(\boldsymbol{\mu})) \boldsymbol{R}^{T}\right\|_{\infty} \\
   &\leq n C_1^3\mu_{\max},
\end{split}
\end{equation*}
Thus, from Gaussian concentration (Theorem 5.8 in \cite{boucheron2013concentration}), for any $n>0$ and $a>0$ we have,
\begin{equation*}
    \boldsymbol{P}\left(\left\| {\b M} \right\|\geq \E \left\| {\b M} \right\| +a\right) \leq e^{-a^{2} /\left(2 n C_1^3\mu_{\max}\right)}.
\end{equation*}
If we choose $a = \sqrt{2nC_1^3\mu_{\max}n \log n \log T}$ and use the result from (\ref{eq:expect_spectral_bound}), then we can have
$$
\boldsymbol{P}\left(\left\|{\b M}\right\|\geq  3\sqrt{n C_1^3\mu_{\max}(1+2 \log n) \log T}\right) \leq e^{-\log n \log T}.
$$
Then for any $T>1$, we have with probability at least $1- \exp( -\log n \log T) -\kappa T^{-1/6}$:
\[
\sqrt{\frac{T}{\log T}} \left\|\frac{\boldsymbol{N}_{T} -E[\b N_T]}{T} \right\| \leq \b 3(1-\sigma^*)^{-3}\sqrt{n (1 + \gamma_{\max})^3\mu_{\max}(1+2 \log n)}.
\]

\end{proof}

\subsection{Proof of Theorem \ref{thm:misclustering_rate}}
\label{t5proof}
\begin{proof}
Let $\mathcal{I}=\{1\leq i\leq n:\|\b X_{i\cdot}\| =0\}$ represent indices of all 0 rows in $\b X$. Then we can bound the number of nodes in $\mathcal{I}$ as:
\begin{equation*}
\begin{split}
    |\mathcal{I}| &\leq \sum_{i=1}^n \|\b X_{i\cdot} - \t{\b X}_{i\cdot}\b Q\|^2 /\|\t{\b X}_{i\cdot}\|^2\\
    &\leq  \sum_{i=1}^n \|\b X_{i\cdot} - \t{\b X}_{i\cdot}\b Q\|^2/ \left(2\min_{1\leq j\leq K} n_j^{-1}\right)\\
    &\leq 0.5 n_{\max}\|\b X - \t{\b X}\b Q\|_F^2,
\end{split}
\end{equation*}
 where the first inequality is because, for any node $i$ in $\mathcal{I}$, we have $\|\b X_{i\cdot} - \t{\b X}_{i\cdot}\b Q\|^2 =\|\t{\b X}_{i\cdot}\|^2$, and  the second inequality is from Lemma \ref{lem:row_length_X}. For any $1\leq i\leq n$ and $i\notin \mathcal{I}$, let $\b X^*_{i\cdot} = \b X_{i\cdot}/\|\b X_{i\cdot}\|$ denote the row normalization of $\b X_i$. Then we have 
\begin{equation*}
    \begin{split}
        \sum_{i=1,i\notin \mathcal{I}}^n \|\b X^*_{i\cdot} - \t{\b X}^*_{i\cdot}\b Q\|^2  &= \sum_{i=1,i\notin \mathcal{I}}^n \left\| \frac{\b X_{i\cdot}}{\|{\b X}_{i\cdot}\|} - \frac{\t{\b X}_{i\cdot}\b Q}{\|\t{\b X}_{i\cdot}\|}\right\|^2\\
        &\leq 4 \sum_{i=1}^n \|\b X_{i\cdot} - \t{\b X}_{i\cdot}\b Q\|^2/\|\t{\b X}_{i\cdot}\|^2\\
        &\leq 2 n_{\max}\|\b X - \t{\b X}\b Q\|_F^2 ,\\
    \end{split}
\end{equation*}
where the first inequality comes from Lemma D.2 in \cite{rohe2016co}.
Now, we can slightly modify the proof of Theorem 3.1 in \cite{rohe2016co} and get the misclustering error rate:
\begin{equation*}
\begin{split}
r &\leq \frac{1}{n} \left(|\mathcal{I}| + 2(2+\varepsilon)^2\sum_{i=1, i\notin \mathcal{I}}^{n} \|\b X^*_{i\cdot} - \t{\b X}^*_{i\cdot}\b Q\|^2\right)\\
&\leq \frac{5(2+\varepsilon)^2 n_{\max}\|\b X - \t{\b X}\b Q\|_F^2}{n}.
\end{split}
\end{equation*}
We use the result from Proposition \ref{lem:eigenvectors_perturbation} to get $\|\b X - \t{\b X}\b Q\|_F^2$, and then we have 
\begin{equation}
\label{eq:misclustering_error}
\begin{split}
r&\leq \frac{80(2+\varepsilon)^2 n_{\max} K\|\frac{1}{T}({\b N_T}-\E \boldsymbol N_T)\|^{2}}{n  \lambda_{K}^{2}}
\end{split}.
\end{equation}
Under the same assumptions in Theorem \ref{thm:spectral_norm_bound}, we have, with probability $1-\exp (-\log n \log T)-\frac{\kappa(n)}{T^{1/6}}$,
$$\sqrt{\frac{T}{\log T}}\left\|\frac{1}{T}\left(\boldsymbol N_T - \E \boldsymbol N_T\right)\right\| \leq 3(1-\sigma^*)^{-3}\sqrt{n (1 + \gamma_{\max})^3\mu_{\max}(1+2 \log n)}. $$

Now, we apply this result to (\ref{eq:misclustering_error}). 
Then
\begin{equation*}
    \begin{split}
        r&\leq \frac{80(2+\varepsilon)^2 n_{\max} K\left\|{\b N_T}-\E \boldsymbol {N_T}\right\|^{2}}{n  \lambda_{K}^{2}}\\
        &\leq \frac{80(2+\varepsilon)^2 n_{\max} K}{n \lambda_{K}^{2}}2\left( 
        9(1-\sigma^*)^{-6} {\frac{n \log T}{T} (1 + \gamma_{\max})^3\mu_{\max}(1+2 \log n)}\right).
    \end{split}
\end{equation*}
with probability at least $1-\exp(\log n \log T)- \frac{\kappa(n)}{T^{1/6}}$ for any $n>1$ and  $T >1$.
\end{proof}

\subsection{Proof of Corollary \ref{cor:simplified_model_misclustering_error}}
\begin{proof}
    Since $\b \Gamma$ is a block diagonal matrix, we know $\sigma^* = \rho(\b\Gamma) = \max_{1\leq a\leq b\leq K}\rho(\b \Gamma_{(a,b)})$. Then using Proposition \ref{lem:bound_eig} (in Appendix \ref{sec:props_lemmas}), we can further show $ \sigma^*= \max\{ \gamma_1, \gamma_2\}$. By the definition of the $\gamma_{\max}$ in Theorem \ref{thm:spectral_norm_bound}, we note $\max\{ \gamma_1, \gamma_2/2\} \leq \gamma_{\max} \leq \max\{ \gamma_1, \gamma_2\}$, so $\sigma^*/2 \leq \gamma_{\max} \leq \sigma^*<1$. In particular this implies $\sigma* \leq 2 \gamma_{\max}$.
    By Proposition \ref{lem:identical_row_sum} (in Appendix \ref{sec:props_lemmas}) and the definition of $\b R$, we know that $\b R_{(a,a)} = (\b I - \b \Gamma_{(a,a)})^{-1}$ has row sums equal to $(1- \gamma_{1})^{-1}$, and $\b R_{(a,b)} = (\b I - \b \Gamma_{(a,b)})^{-1}$ has row sums equal to $(1- \gamma_{2})^{-1}$. Then from Proposition \ref{prop:normal}, we can derive the following form for the expected count matrix. For $i \neq j$, we have $\E ( {\b N_T})_{i j}= v_1T,  \text { if } z_i = z_j$ and  $\E ({\b N_T})_{i j}=v_2T, \text { if } z_i \neq z_j $, while $\E ({\b N_T})_{i j}= 0$ if $i=j$.
Here $v_1 = (1 -   \gamma_1)^{-1}\mu_1$ and $ v_2 =  (1 -   \gamma_2)^{-1}\mu_2$.

By the definition of $\E \boldsymbol {N_T}$, we can write
$$\frac{\E \boldsymbol {N_T}}{T}=\b Z\left(\left(v_{1}-v_{2}\right) \b I_{K}+v_{2} \b 1_{K} \b 1_{K}^{T}\right) \b Z^{T}$$
for some $\b Z\in \mathbb{R}^{n\times K}$.
The $K$th largest singular values of $\frac{\E \boldsymbol {N_T}}{T}$ is $\frac{n}{K}(v_1-v_2)$, so $ \lambda_K^2 = \frac{n^2}{K^2}(v_1 - v_2)^2$. Further $n_{\max} = \frac{n}{k}$. Then, from Theorem \ref{thm:misclustering_rate} we have the following:
\begin{align*}
   r\leq \frac{ c K^2\mu_{\max}^2\left(1+\gamma_{\max }\right)^3}{(v_1-v_2)^2 \left(1-\sigma^*\right)^{6}}\left( 
        \frac{\log T(1+2 \log n)}{nT\mu_{\max}} 
        \right),
\end{align*}
with probability at least $1-\exp(\log n \log T)- \frac{\kappa(n)}{T^{1/6}}$.
\end{proof}

\subsection{Proof of Lemma \ref{lem:id} and Theorem \ref{thm:gmm}}
\label{sec:app_gmm_proofs}
Let $\b \theta_0 = \{\b M_0, \b G_0\} \in \b \Theta$ be the true parameters. Further let $\h{\b g}_n (\b \Theta)$ be the sample version and $\b g_0 (\b \Theta)$ the population version of the GMM function. We use Theorem 2.6 in \cite{newey1994large} with $\hat{W}=W=I$.
\begin{proposition}
\label{prop:gmm}
\cite{newey1994large} Consider functions $\h{\b g}_n(\b \theta), \b g_0(\b \theta)$ defined on $\b\Theta\subset \mathbb{R}^{k}$ that satisfies
\begin{enumerate}
    \item The feasible parameters space $\b \Theta$ is compact.
    \item $\b g_0(\b \theta)$ is continuous on $\b\Theta$.
    \item $\b g_0(\b \theta)=\b 0$ if and only if $\b \theta = \b \theta_{0}$.
    \item $ \h{\b g}_n$ coverages to $\b g_0$ uniformly in probability.
\end{enumerate}
Let $\hat{\theta}_n$ be the minimizer of $\h{\b g}_n(\b\theta)^T \h{\b g}_n(\b\theta)$. Then
\[
\hat{\b \theta}_n \overset{p}{\to } \b \theta_0.
\]
\end{proposition}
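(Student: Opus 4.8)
The plan is to run the classical consistency argument for extremum estimators exactly as in \cite{newey1994large}. I would first recast the problem in terms of criterion functions: define the sample and population objectives
\[
\hat{Q}_n(\b\theta) = \h{\b g}_n(\b\theta)^T \h{\b g}_n(\b\theta) = \|\h{\b g}_n(\b\theta)\|^2, \qquad Q_0(\b\theta) = \b g_0(\b\theta)^T \b g_0(\b\theta) = \|\b g_0(\b\theta)\|^2,
\]
so that by hypothesis $\hat{\b \theta}_n = \argmin_{\b\theta \in \b\Theta}\hat{Q}_n(\b\theta)$. Condition 3 immediately gives that $Q_0(\b\theta)\geq 0$ with equality if and only if $\b\theta = \b\theta_0$; hence $\b\theta_0$ is the \emph{unique} global minimizer of $Q_0$, and this is the target of the argument.

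First I would upgrade the uniform convergence of the vector $\h{\b g}_n$ (condition 4) into uniform convergence of the scalar criterion, namely $\Delta_n := \sup_{\b\theta \in \b\Theta}|\hat{Q}_n(\b\theta) - Q_0(\b\theta)| \overset{p}{\to} 0$. This rests on the elementary bound $\big|\|\b a\|^2 - \|\b b\|^2\big| \leq \|\b a - \b b\|\,(\|\b a\| + \|\b b\|)$: by condition 2, $\|\b g_0\|$ is continuous on the compact set $\b\Theta$ (condition 1) and therefore bounded by some finite $B$, while condition 4 makes $\sup_{\b\theta}\|\h{\b g}_n - \b g_0\|$ vanish in probability, so that $\sup_{\b\theta}\|\h{\b g}_n\| \leq B + o_p(1)$. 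The product of these two factors is then $o_p(1)$, uniformly over $\b\Theta$.

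Next I would exploit the well-separation of the minimizer together with a sandwich inequality. Fix any $\varepsilon > 0$ and set $N_\varepsilon = \{\b\theta \in \b\Theta : \|\b\theta - \b\theta_0\| \geq \varepsilon\}$, a closed subset of the compact $\b\Theta$ and hence itself compact. Since $Q_0$ is continuous with its unique zero at $\b\theta_0 \notin N_\varepsilon$, the minimum $\delta := \min_{\b\theta \in N_\varepsilon} Q_0(\b\theta)$ is strictly positive. Using optimality of $\hat{\b\theta}_n$ and applying uniform convergence twice yields
\[
Q_0(\hat{\b\theta}_n) \leq \hat{Q}_n(\hat{\b\theta}_n) + \Delta_n \leq \hat{Q}_n(\b\theta_0) + \Delta_n \leq Q_0(\b\theta_0) + 2\Delta_n = 2\Delta_n.
\]
On the event $\{\Delta_n < \delta/2\}$ this forces $Q_0(\hat{\b\theta}_n) < \delta$, which is incompatible with $\hat{\b\theta}_n \in N_\varepsilon$, so $\|\hat{\b\theta}_n - \b\theta_0\| < \varepsilon$. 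Because $\Delta_n \overset{p}{\to} 0$, the probability of that event tends to $1$; hence $P(\|\hat{\b\theta}_n - \b\theta_0\| < \varepsilon) \to 1$, and since $\varepsilon$ was arbitrary we conclude $\hat{\b\theta}_n \overset{p}{\to} \b\theta_0$.

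This argument is entirely standard, so I do not anticipate a genuine obstacle; the only step requiring care is the first one, where uniform convergence of the vector-valued $\h{\b g}_n$ must be converted into uniform convergence of its squared norm, and one must verify the uniform (not merely pointwise) boundedness of $\|\h{\b g}_n\|$ over $\b\Theta$ with probability tending to one. It is worth emphasizing that compactness of $\b\Theta$ (condition 1) is doing double duty: it is what makes $Q_0$ bounded in the first step and what guarantees the strict positivity $\delta > 0$ in the sandwich step, so all four hypotheses are genuinely used.
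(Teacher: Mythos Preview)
Your argument is correct and is precisely the standard extremum-estimator consistency proof. The paper itself does not supply a proof of this proposition at all: it simply states the result and attributes it to Theorem~2.6 of \cite{newey1994large} (specialized to the identity weight matrix $\hat W = W = I$), so there is nothing to compare against beyond noting that you have written out the classical argument underlying that citation.
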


We apply this proposition in the restricted SR model to show the consistency of our GMM estimator.
We first show a detailed proof of Lemma \ref{lem:id}, which corresponds to condition 3 in Proposition \ref{prop:gmm}. 

\begin{proof}[Lemma \ref{lem:id}]
We need to show the existence and uniqueness of the solution of $\b g_0(\b \theta_0)=\b 0$. The existence comes from the assumption that our model has true parameter $\b \theta_0$ in the feasible space, so we only need to prove the uniqueness. Suppose there is another $\t {\b \theta}=({\t{\b {M}}}_{(a,b), (b,a)}, \t{\b G}_{(a,b), (b,a)}) \in\b \Theta_{(a,b), (b,a)}$ such that $ \t{\b \theta} \neq \b \theta_0$ and $\b g_0(\t{\b \theta}) = \b 0$.
Then we can similarly define  $
 \t{ \b R} = (\b I - \t{\b G})^{-1},\quad
\t{\b \Lambda} =\t {\b R} \operatorname{vec}(\t{\b \mu}), \quad
\t{\b C} = \b   \t{ \b R}\operatorname{diag}(\t {\b \Lambda})  \t{ \b R}^T$.
Since the components of $\b g_0$ are linear functions of $\Tilde {\b \Lambda}, \Tilde {\b C}$; therefore, $\b g_0(\t {\b\theta}) =\b 0$  implies we must have $\b \Lambda_0 = \t{\b \Lambda}$ and $\b C_0 = \t{\b C}$. Consequently,
$${\b R^{(a,b),(b,a)}_0} \text{ diag}( {\b \Lambda}^{(a,b),(b,a)}_0) \b R^{(a,b),(b,a)T}_{0}= \t {\b R}^{(a,b),(b,a)} \operatorname{diag}\left( {\b \Lambda}^{(a,b),(b,a)}_0 \right) \t{\b R}^{(a,b),(b,a)T}.$$
Since the elements of $\Lambda_0$ are all non-negative, the solution of the above equation implies
\begin{equation}
\label{eq:GMM_C_equal}
\b R^{(a,b),(b,a)}_0 \operatorname{diag}\left( {\b \Lambda}^{(a,b),(b,a)}_0 \right)^{\frac{1}{2}} = \t {\b R}^{(a,b),(b,a)} \operatorname{diag}\left( {\b \Lambda}^{(a,b),(b,a)}_0 \right)^{\frac{1}{2}}\b O,
\end{equation}
 where $\b O$ is an orthogonal matrix. We write $\b G^{(a,b),(b,a)}_0, \t {\b G}^{(a,b),(b,a)}$ as 
 $$
 \b G^{(a,b),(b,a)}_0 = \begin{pmatrix}
\alpha^n_{0,ab} & \alpha^r_{0,ab} \\
\alpha^r_{0,ab} & \alpha^n_{0,ba} \\
\end{pmatrix},\quad
\t {\b G}^{(a,b),(b,a)} = \begin{pmatrix}
\t \alpha^n_{ab} &  \t \alpha^r_{ab} \\
 \t \alpha^r_{ab} & \t \alpha^n_{ba} \\
\end{pmatrix},
 $$
 and by the definition of ${\b R}^{(a,b),(b,a)}$ and the $2\times 2$ matrix inverse formula, we can show that
\begin{equation*}
{\b R}^{(a,b),(b,a)}_0 = 
\det\left({\b R}^{(a,b),(b,a)}_0\right)
\left(\begin{array}{cc}
1-\alpha^n_{0,ba} & \alpha^r_{0,ab} \\
\alpha^r_{0,ab} & 1-\alpha^n_{0,ab}
\end{array}\right).\\ 
\end{equation*}
Since the true parameter is in our parameter space $\b \Theta_{ab}$, we note that $\b M^{(a,b),(b,a)}_0= \begin{pmatrix}
M^{ab}_0 \\
M^{ba}_0\\
\end{pmatrix}$ has all positive elements. Further, because $\rho({\b G}^{(a,b),(b,a)}_0)<1$, and the two eigenvalues $\lambda_1, \lambda_2$ of it are real numbers, we have $\det({\b R_0^{(a,b),(b,a)}}) = \det(\b I-  {\b G}^{(a,b),(b,a)}_0) = (1-\lambda_1)(1-\lambda_2)>0$. Also, we know $1-\alpha^n_{0,ba}$ and $1-\alpha^n_{0,ab}$ are also positive in our parameters space and $\alpha^r_{0,ab}$ is non-negative. 

Using these results, we know the elements in $\b \Lambda^{(a,b),(b,a)}_0 = {\b R}^{(a,b),(b,a)}_0 \b M^{(a,b),(b,a)}_0$ are all positive, and we let $l_1 = \sqrt{\Lambda^{ab}_0}>0, l_2=\sqrt{\Lambda^{ba}_0}>0$. We can obtain similar results for $\t{\b R}$. Therefore we can write the elements in (\ref{eq:GMM_C_equal}) as below:
\begin{equation}
\label{eq:GMM_R_L}
\begin{aligned}
{\b R}^{(a,b),(b,a)}_0 \operatorname{diag}\left({\b \Lambda}^{(a,b),(b,a)}_0\right)^{\frac{1}{2}} &= 
\det\left({\b R}^{(a,b),(b,a)}_0\right)
\left(\begin{array}{cc}
l_1(1-\alpha^n_{0,ba}) & l_2\alpha^r_{0,ab} \\
l_1\alpha^r_{0,ab} & l_2(1-\alpha^n_{0,ab})
\end{array}\right),\\
\t{\b R}^{(a,b),(b,a)} \operatorname{diag}\left({\b \Lambda}^{(a,b),(b,a)}_0\right)^{\frac{1}{2}} &= 
\det\left(\t{\b R}^{(a,b),(b,a)}\right)
\left(\begin{array}{cc}
l_1(1-\t \alpha^n_{ba}) & l_2 \t \alpha^r_{ab} \\
l_1 \t \alpha^r_{ba} & l_2(1-\t \alpha^n_{ab})
\end{array}\right).
\end{aligned}
\end{equation}
 For the $2\times 2$ matrix, the orthogonal matrix is either the rotation matrix or the reflection matrix, i.e.,
$$\b O = \left(\begin{array}{cc}
\cos \theta & -\sin \theta \\
\sin \theta & \cos \theta
\end{array}\right) \text{(rotation) or } 
\b O = \left(\begin{array}{cc}
\cos \theta & \sin \theta \\
\sin \theta & -\cos \theta
\end{array}\right)\text{(reflection)}.
$$
We showed the determinant of ${\b R}^{(a,b),(b,a)}$ is positive for a $\b R$ which is in the parameter space. Using the formula $\operatorname{det}(\b A\b B) = \operatorname{det}(\b A)\operatorname{det}(\b B)$ for any square matrix $\b A, \b B$, we can conclude that $\det(\t {\b R}^{(a,b),(b,a)})\det(\b O) = \det({\b R}^{(a,b),(b,a)}_0)$ from (\ref{eq:GMM_C_equal}). Thus we must have $\det(\b O) > 0$ to ensure both sides have the same sign, and that means $\b O$ can only be the rotation matrix (the reflection matrix has determinant -1), so $\det(\b O) = 1$ (the rotation matrix has determinant 1), and it implies $\det(\t {\b R}^{(a,b),(b,a)}) = \det({\b R}^{(a,b),(b,a)}_0)$. We use this result and (\ref{eq:GMM_R_L}) and then plug them into (\ref{eq:GMM_C_equal}) to obtain
\begin{equation}
\label{eq:GMM_consistency_1}
    \left(\begin{array}{ll}
l(1-\t\alpha^n_{ba})\cos \theta + \t\alpha^r_{ab}\sin \theta & - l(1-\t\alpha^n_{ba})\sin \theta+  \t\alpha^r_{ab} \cos \theta\\
l \t\alpha^r_{ab}\cos \theta+(1-\t\alpha^n_{ab})\sin \theta & - l \t\alpha^r_{ab}\sin \theta+(1-\t\alpha^n_{ab})\cos \theta
\end{array}\right) =
\left(\begin{array}{cc}
l(1-\alpha^n_{0,ba}) &  \alpha^r_{0,ab} \\
l\alpha^r_{0,ab} & 1-\alpha^n_{0,ab}
\end{array}\right)
\end{equation}
for some $\theta$ and $l$ is defined as $l = \frac{l_1}{l_2}>0$. For the matrix equation (\ref{eq:GMM_consistency_1}), at the right hand side, we notice that the $(1,2)$-th element multiplied by $l$ equals the $(2,1)$-th element. Then for the left hand side, we can get the following equation,
\begin{equation*}
    (1-\t\alpha^n_{ab})\sin \theta =- l^2(1-\t\alpha^n_{ba})\sin \theta.
\end{equation*}
Since we know $l>0$, and $1-\t\alpha^n_{ab} >0, 1-\t\alpha^n_{ba}>0$ from Lemma \textcolor{red}{2}, the equation above holds if and only if $\sin \theta = 0$. Therefore $\cos \theta$ is $\pm 1$.  We plug this in (\ref{eq:GMM_consistency_1}), the $(1,1)$-th element of the matrix equation is $l(1-\t\alpha^n_{ba})\cos \theta = l(1-\alpha^n_{ba})$. Since $l, 1-\t\alpha^n_{ba}, 1-\alpha^n_{ba}$ are all positive in the parameter space (from Lemma \textcolor{red}{2}), we know $\cos \theta>0$, and as a result, $\cos \theta=1$. Therefore, $\b O$ must be the identity matrix.  Since $\operatorname{diag}( {\b \Lambda}^{(a,b),(b,a)}_0)^{\frac{1}{2}}$ is invertible, this implies ${\b R}^{(a,b),(b,a)}_0 = \t {\b R}^{(a,b),(b,a)}$ from (\ref{eq:GMM_C_equal}) . Because ${\b R}_{(a,b)} $ is also invertible, we conclude that $\b \Gamma_{(a,b)}, vec (\b \mu_{(a,b)}$ can be uniquely determined. Thus, condition 3 in Proposition \ref{prop:gmm} and Lemma \ref{lem:id} is proved. 

\end{proof}

Now, we use the above result to prove Theorem \ref{thm:gmm}.
\begin{proof}[Theorem \ref{thm:gmm}]
For any block pair $(a,b)$, define the function $\b g_0(\b M_{(a,b), (b,a)}, \b G_{(a,b), (b,a)} )$ on $\b \Theta_{(a,b), (b,a)}$ as follows.  
\begin{equation*}
\begin{gathered}
    g_{01}( .,.)  = (\Lambda_{ab})_0 -  \Lambda_{ab},  \quad  \quad 
   g_{02}(.,. )  =  (\Lambda_{ba})_0 -  \Lambda_{ab},\\
    g_{03}(.,.)  =  (C_{ab,ab})_0 -  C_{ab,ab},  \quad \quad  
    g_{04}(.,. )  =  (C_{ba,ba})_0 -  C_{ba,ba},\\
    g_{05}(.,. )  =  (C_{ab,ba})_0 -  C_{ab,ba}.
\end{gathered}
\end{equation*}
The sample version of the function $\hat{\b g}_n$ is defined by replacing $\b {\Lambda}_0$ and $\b { C}_0$ with $\b {\h\Lambda}$ and $\b {\h C}$ respectively. Now we need to verify the conditions in the Proposition \ref{prop:gmm} are satisfied. Condition 1 is satisfied by the definition of $\b \Theta_{(a,b), (b,a)}$ and stability condition in Lemma \ref{lem:stability}. Condition 2 can also be easily checked since $\b g_0$ is a vectorized composite function of some basic matrix operations, which are continuous in our parameter space. 
Condition 3 is the identification condition of GMM stated in Lemma \ref{lem:id}. 

To verify condition 4, we need to show $\b{\h\Lambda}$ and $\b {\h C}$ converge to the population statistics, that is, $\b{\h\Lambda} \stackrel{p}{\rightarrow} \b{\Lambda}$ and $\b {\h C} \stackrel{p}{\rightarrow} \b{C}$ uniformly for all $\b \theta \in \b \Theta$.  
Since our estimators depend on $T$, we let 
\[\b{\Lambda}_{(a,b),(b,a),T} = \mathbb{E}(\b{\h\Lambda}_{(a,b),(b,a)}), \quad \quad \b { C}_{(a,b),(b,a),T}=\E(\b {\h C}_{(a,b),(b,a)}),\]
 denote the expectations of the sample moments computed at time $T$ for any block pair $a,b$. We note  $\hat{\b\Lambda}_{(a,b),(b,a)}$ and $\hat{\b C}_{(a,b),(b,a)}$ are sample means of $n_{ab}$ random functions, and  $\hat{g}_n$ is polynomial function of $\theta$. Therefore, the uniform law of large  numbers (ULLN) for functions is applicable, and we have 
 \[\b{\h\Lambda}_{(a,b),(b,a)} \stackrel{p}{\rightarrow}\b{\Lambda}_{(a,b),(b,a),T}, \quad \b {\h C}_{(a,b),(b,a)} \stackrel{p}{\rightarrow} \b { C}_{(a,b),(b,a),T}, \] as $n_{ab}\rightarrow \infty$.
 
 Therefore we only need to show $\b{\Lambda}_{(a,b),(b,a),T} \rightarrow \b{\Lambda}_{(a,b),(b,a)}$ and ${ \b C}_{(a,b),(b,a),T}\rightarrow { \b C}_{(a,b),(b,a)}$ as $T\rightarrow \infty$. Assuming the Hawkes process is stationary, from \cite{bacry2015hawkes} and \cite{hawkes1971spectra}, we know $\mathbb{E}\left[d\b N_{(i,j), t}\right]$ is fixed, and from the definition we know that
\begin{equation*}
    {\Lambda}^{(a,b)}_T = \frac{1}{T}\int_{0}^T \mathbb{E}\left[d\b N_{(i,j), t}\right]={\Lambda}^{(a,b)} , \quad     {\Lambda}^{(b,a)}_T = \frac{1}{T}\int_{0}^T \mathbb{E}\left[d\b N_{(j,i), t}\right]={\Lambda}^{(b,a)}.
\end{equation*}
So $\b{\Lambda}_{(a,b),(b,a),T} = \b{\Lambda}_{(a,b),(b,a)}$. Therefore,  $\b{\Lambda}_{(a,b),(b,a),T}$ is an unbiased estimator of  $\b{\Lambda}_{(a,b),(b,a)}$ \citep{achab2017uncovering}. However, that is not the case for the estimator of the covariance matrix.

Let us denote the covariance density for the bivariate Hawkes process of the $(i,j)$ pair as
$$\Phi_{ij,ji}(\tau) = \frac{\mathbb{E}\left(d \b N_{(i,j), t} d \b N_{(j,i), t+\tau}\right)-\mathbb{E}\left(d \b N_{(i,j), t}\right)\mathbb{E}\left(d \b N_{(j,i), t+\tau}\right)}{(dt)^2}, $$
which does not depend on $t$, and $\Phi(\tau) = \Phi(-\tau)$ has non-negative elements in our parameter space \citep{gao2018functional}. In \cite{bacry2015hawkes} and \cite{achab2017uncovering}, it has been shown that $\int_{\tau\in \mathbb{R}}\Phi(\tau)d\tau = \b  R\operatorname{diag}(\b \Lambda) \b R^T={ \b C}$. From \cite{gao2018functional}, we know that the covariance of the count at time $T$ can be computed by
\begin{equation}
\label{eq:C_T}
\begin{split}
    &{ \b C}^{(a,b),(b,a)}_T \\
    &= \frac{1}{T}\operatorname{Cov}(\b N_{(i,j), T}, \b N_{(j,i), T})\\
    &= \frac{1}{T}\int_0^T\int_0^T \Phi(t_2 - t_1)dt_1dt_2\\
    &= \frac{1}{T}\left[\int_0^{T}\int_{-H}^H \Phi(\tau)d\tau dt - \underbrace{\int_0^{H}\int_{t_1-H}^0 \Phi(t_2-t_1)dt_2dt_1}_{\epsilon_{T,H,1}} -\underbrace{\int_{T-H}^{T}\int_{T}^{t_1+H} \Phi(t_2-t_1)dt_2dt_1}_{\epsilon_{T,H,2}}\right.\\
    &~~~~~~~~~~\left.+ \underbrace{\int_H^{T}\int_{0}^{t_1-H} \Phi(t_2-t_1)dt_2dt_1}_{\epsilon_{T,H,3}}+ \underbrace{\int_0^{T-H}\int_{t_1+H}^{T} \Phi(t_2-t_1)dt_2dt_1}_{\epsilon_{T,H,4}}\right]\\
    &=\int_{-H}^H\Phi(\tau)d\tau + \frac{1}{T}\left( \epsilon_{T,H,1} +\epsilon_{T,H,2}+\epsilon_{T,H,3}+\epsilon_{T,H,4}\right),
\end{split}
\nonumber
\end{equation}
where we choose $H= \sqrt T$. For the 1st term, we have $\int_{-H}^H\Phi(\tau)d\tau \rightarrow { \b C}_{(a,b)}$ as $H\rightarrow \infty$ since it is integrable, so we only need to show $\frac{1}{T}\epsilon_{T,H,i} \rightarrow 0$ for $i=1,2,3,4$. Actually, we have
\begin{equation*}
\begin{split}
\frac{1}{T}\epsilon_{T,H,1} &= \frac{1}{T}\int_{0}^{H}\int_{t_1-H}^{0} \Phi(t_2-t_1)dt_2dt_1\\
&\leq \frac{1}{T} \int_{0}^{H}\int_{t_1-H}^{t_1+H} \Phi(t_2-t_1)dt_2dt_1\\
&= \frac{H}{T}\int_{-H}^{H} \Phi(\tau)d\tau \\
\frac{1}{T}\epsilon_{T,H,3} &=\frac{1}{T}\int_H^{T}\int_{0}^{t_1-H} \Phi(t_2-t_1)dt_1dt_2\\
&\leq \frac{1}{T}\int_H^{T}\int_{t_1-T}^{t_1-H} \Phi(t_2-t_1)dt_1dt_2\\
&\leq \int_{-T}^{-H} \Phi(\tau)d\tau\\
\end{split}
\end{equation*}
Similarly, we can get $\frac{1}{T}\epsilon_{T,H,2}\leq \frac{H}{T}\int_{-H}^{H} \Phi(\tau)d\tau$ and $\frac{1}{T}\epsilon_{T,H,4}\leq  \int_{H}^{T} \Phi(\tau)d\tau$. We can see $\frac{H}{T} =  \frac{1}{\sqrt T}\rightarrow 0$ and $H = \sqrt T\rightarrow \infty$, then $\frac{1}{T}\epsilon_{T,H,i}$ will all converge to 0 for $i=1,2,3,4$. Therefore, we can get $\b{C}_{(a,b),(b,a),T} \rightarrow \b{C}_{(a,b),(b,a)}$, and condition 4 is proved.
\end{proof}

\section{Proofs of Other Results}
\label{sec:props_lemmas}

\subsection{Additional Propositions}

The following proposition from \cite{soliman2022multivariate} is based on a few observations regarding matrices with identical row sums. Let $\boldsymbol{1}$ denote the column vector of all $1$'s.

\begin{proposition}
\label{lem:identical_row_sum}
[Proposition A.1 in \cite{soliman2022multivariate}] For any matrix $\boldsymbol A$, if $\boldsymbol A\boldsymbol{1} = a\boldsymbol{1}$, i.e., the row sums of $\boldsymbol{A}$ are identical, then the following results hold,
\begin{enumerate}
    \item If $\boldsymbol A^{-1}$ exists, then $\boldsymbol A^{-1}\boldsymbol{1} = a^{-1} \boldsymbol{1}$.
    \item If $\boldsymbol B \boldsymbol 1 = b \boldsymbol 1$ for some matrix $ \boldsymbol B$, then $\boldsymbol A \boldsymbol B \boldsymbol 1 = ab \boldsymbol 1$
    \item If $\boldsymbol B \boldsymbol 1 = b\boldsymbol 1$ for some matrix $ \boldsymbol B$, then $(\boldsymbol A + \boldsymbol B)\boldsymbol 1  = (a+b)\boldsymbol 1$
\end{enumerate}
\end{proposition}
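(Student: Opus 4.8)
The plan is to treat all three claims as direct consequences of the single observation that $\boldsymbol{1}$ is an eigenvector of $\boldsymbol{A}$ (and of $\boldsymbol{B}$) with the stated eigenvalue, combined with elementary linear algebra. I would handle the claims in order of increasing (very mild) subtlety: the additive statement first, then the product, then the inverse.

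For claim (3), linearity of matrix--vector multiplication gives $(\boldsymbol{A}+\boldsymbol{B})\boldsymbol{1} = \boldsymbol{A}\boldsymbol{1} + \boldsymbol{B}\boldsymbol{1} = a\boldsymbol{1} + b\boldsymbol{1} = (a+b)\boldsymbol{1}$, which is immediate. For claim (2), I would use associativity of matrix multiplication together with the fact that scalars commute past matrices: $\boldsymbol{A}\boldsymbol{B}\boldsymbol{1} = \boldsymbol{A}(\boldsymbol{B}\boldsymbol{1}) = \boldsymbol{A}(b\boldsymbol{1}) = b(\boldsymbol{A}\boldsymbol{1}) = b(a\boldsymbol{1}) = ab\,\boldsymbol{1}$.

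For claim (1), the only point requiring a moment's care is to check that $a \neq 0$ before writing $a^{-1}$: if $a = 0$, then $\boldsymbol{A}\boldsymbol{1} = \boldsymbol{0}$ with $\boldsymbol{1} \neq \boldsymbol{0}$, which contradicts the assumed invertibility of $\boldsymbol{A}$; hence $a^{-1}$ is well defined. Left-multiplying the identity $\boldsymbol{A}\boldsymbol{1} = a\boldsymbol{1}$ by $\boldsymbol{A}^{-1}$ yields $\boldsymbol{1} = a\,\boldsymbol{A}^{-1}\boldsymbol{1}$, and dividing by $a$ gives $\boldsymbol{A}^{-1}\boldsymbol{1} = a^{-1}\boldsymbol{1}$. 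There is essentially no obstacle in this proof; the ``hard part,'' such as it is, amounts only to remembering to rule out $a = 0$ in the invertible case before inverting the scalar.
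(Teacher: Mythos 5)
Your proof is correct and follows essentially the same elementary argument as the paper: linearity for the sum, associativity for the product, and left-multiplication by $\boldsymbol A^{-1}$ for the inverse. The only difference is your explicit check that $a \neq 0$ when $\boldsymbol A$ is invertible, a detail the paper leaves implicit but which is a welcome bit of extra care.
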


\begin{proof}
First, we have $\boldsymbol A^{-1}\boldsymbol A \boldsymbol 1 = a\boldsymbol A^{-1} \boldsymbol  1$, so $\boldsymbol A^{-1}\boldsymbol{1} = a^{-1} \boldsymbol{1}$. Second, $\boldsymbol A \boldsymbol B \boldsymbol 1 = b\boldsymbol A  \boldsymbol 1 = ab \boldsymbol 1$. Last, $(\boldsymbol A + \boldsymbol B)\boldsymbol 1  = \boldsymbol A \boldsymbol 1+ \boldsymbol B \boldsymbol 1 =(a+b)\boldsymbol 1$.
\end{proof}

Next, we re-state a result from \cite{minc1974nonnegative} regarding the relationship between spectral radius, and minimum and maximum row sum of a  non-negative matrix.
\begin{proposition}
\label{lem:bound_eig}
[\cite{minc1974nonnegative} Theorem 4.2, p14; Theorem 1.1, p24] If $\boldsymbol A \in \mathbb{R}^{n\times n}$ is a non-negative matrix, then 
$$\min _{1 \leq i \leq n} \sum_{j=1}^n \b A_{ij} \leq \rho(\b A) \leq \max _{1 \leq i \leq n} \sum_{j=1}^n\b A_{ij}$$
\end{proposition}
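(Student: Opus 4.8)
The plan is to establish the two inequalities separately, in each case exploiting only the non-negativity of $\boldsymbol A$. For the upper bound, I would invoke the general fact that the spectral radius is dominated by any operator norm, $\rho(\boldsymbol A)\le \|\boldsymbol A\|_\infty$, where $\|\cdot\|_\infty$ is the operator norm induced by the vector $\infty$-norm; since this operator norm is precisely the maximum absolute row sum and $\boldsymbol A\ge 0$ makes all entries equal to their absolute values, $\|\boldsymbol A\|_\infty = \max_{1\le i\le n}\sum_{j=1}^n A_{ij}$, which is the claimed bound. If a self-contained argument is preferred, take an eigenvalue $\lambda$ with $|\lambda|=\rho(\boldsymbol A)$ and eigenvector $\boldsymbol v\neq \boldsymbol 0$, pick a coordinate $i$ with $|v_i|=\max_k|v_k|>0$, and read off $|\lambda|\,|v_i| = \bigl|\sum_j A_{ij}v_j\bigr|\le \sum_j A_{ij}|v_j|\le\bigl(\sum_j A_{ij}\bigr)|v_i|$, giving $\rho(\boldsymbol A)\le\max_i\sum_j A_{ij}$.

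For the lower bound, set $r=\min_{1\le i\le n}\sum_{j=1}^n A_{ij}\ge 0$; if $r=0$ the inequality $\rho(\boldsymbol A)\ge 0$ is trivial, so assume $r>0$. Letting $\boldsymbol 1$ be the all-ones vector, the definition of $r$ gives $\boldsymbol A\boldsymbol 1\ge r\boldsymbol 1$ entrywise, and because $\boldsymbol A\ge 0$ preserves entrywise inequalities under left-multiplication, a simple induction yields $\boldsymbol A^k\boldsymbol 1\ge r^k\boldsymbol 1$ for every $k\ge 1$. Hence every entry of $\boldsymbol A^k\boldsymbol 1$ is at least $r^k$, so $\|\boldsymbol A^k\|_\infty\ge \|\boldsymbol A^k\boldsymbol 1\|_\infty/\|\boldsymbol 1\|_\infty\ge r^k$. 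Taking $k$-th roots and passing to the limit, Gelfand's formula $\rho(\boldsymbol A)=\lim_{k\to\infty}\|\boldsymbol A^k\|_\infty^{1/k}$ delivers $\rho(\boldsymbol A)\ge r$. Alternatively, one may invoke the Collatz--Wielandt characterization from Perron--Frobenius theory, $\rho(\boldsymbol A)=\max_{\boldsymbol x\ge\boldsymbol 0,\ \boldsymbol x\neq\boldsymbol 0}\ \min_{i:\,x_i>0}(\boldsymbol A\boldsymbol x)_i/x_i$, and substitute $\boldsymbol x=\boldsymbol 1$ to get $\rho(\boldsymbol A)\ge\min_i(\boldsymbol A\boldsymbol 1)_i=r$ directly.

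Combining the two parts gives the stated sandwich $\min_i\sum_j A_{ij}\le\rho(\boldsymbol A)\le\max_i\sum_j A_{ij}$. This is a classical result, so there is no genuine obstacle; the only step that needs a little care is the lower bound, where a naive attempt to argue directly with a single eigenvalue does not work, and one must instead pass through the norm growth estimate $\|\boldsymbol A^k\|_\infty\ge r^k$ together with Gelfand's formula (or appeal to Perron--Frobenius theory).
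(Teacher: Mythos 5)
Your proof is correct. Note that the paper does not actually prove this proposition at all: it is imported verbatim from Minc's monograph on nonnegative matrices (the bracketed citation to Theorem 4.2, p.~14 and Theorem 1.1, p.~24 is the entire justification), so what you have written is a self-contained replacement for a quoted classical fact rather than an alternative to an argument in the paper. Both halves of your argument are sound: the upper bound via $\rho(\boldsymbol A)\le\|\boldsymbol A\|_\infty$ (or the equivalent coordinate argument at a maximal-modulus entry of an eigenvector) is the standard route, and for the lower bound the induction $\boldsymbol A^k\boldsymbol 1\ge r^k\boldsymbol 1$, which uses only that multiplication by a nonnegative matrix preserves entrywise inequalities, combined with Gelfand's formula $\rho(\boldsymbol A)=\lim_k\|\boldsymbol A^k\|_\infty^{1/k}$, is exactly the right fix; your closing remark about why the naive eigenvector argument fails is also on point, since for a reducible nonnegative matrix the Perron eigenvector need not be strictly positive, so bounding below at a minimal coordinate can degenerate when that coordinate is zero. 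The Collatz--Wielandt alternative you mention with $\boldsymbol x=\boldsymbol 1$ is equally valid and is essentially how Minc's Theorem 1.1 is obtained. In short: the paper buys the result by citation, you buy it with a short elementary proof; either is acceptable, and your version has the minor advantage of making the appendix self-contained.
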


Finally, we state a slight variation of the matrix noncommutative Khintchine inequality. 

\begin{proposition}
\label{lem:khintchine_inequality}[\cite{oliveira2010sums} Matrix noncommutative Khintchine inequality] Let $\b A\in \mathbb{R}^{n\times n}$ be a random matrix with jointly Gaussian entries and $\E \b A = \b 0$, then 
\begin{equation*}
    \E \|\b A\| \leq 2\sqrt{1+2 \log n} \max\left\{ \|\E \b A^T\b A\|^{1/2}, \|\E \b A\b A^T\|^{1/2}  \right\}.
\end{equation*}
\end{proposition}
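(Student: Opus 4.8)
The claimed bound is, up to the explicit constant, the matrix noncommutative Khintchine inequality of \cite{oliveira2010sums} (in the line of Lust-Piquard--Pisier, Rudelson, and the matrix Gaussian series bounds of Tropp); the ``slight variation'' is only that we allow $\b A$ to be an $n\times n$ random matrix whose entries are jointly Gaussian with an \emph{arbitrary}, possibly singular, covariance, rather than a Gaussian series with prescribed coefficient matrices. The plan is therefore to reduce to that standard setting by two routine devices---a square-root factorization of the covariance of $\operatorname{vec}(\b A)$, and a Hermitian dilation---and then invoke the cited inequality.

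First I would rewrite $\b A$ as a Gaussian series with i.i.d.\ standard normal coefficients. Let $\b \Sigma=\operatorname{Cov}(\operatorname{vec}(\b A))\in\mathbb{R}^{n^2\times n^2}$, which is positive semidefinite, and let $\b \Sigma^{1/2}$ be its PSD square root. Then $\operatorname{vec}(\b A)\overset{d}{=}\b \Sigma^{1/2}\b g$ for $\b g\sim N(\b 0,\b I_{n^2})$, so with $\b B_k:=\operatorname{vec}^{-1}\big((\b \Sigma^{1/2})_{\cdot k}\big)$ the matrix read off the $k$-th column of $\b \Sigma^{1/2}$, we have $\b A\overset{d}{=}\sum_{k=1}^{n^2}g_k\b B_k$ with deterministic $\b B_k$ and i.i.d.\ $g_k\sim N(0,1)$. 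Since $\E[g_kg_l]$ equals $1$ if $k=l$ and $0$ otherwise, a direct computation gives $\E[\b A\b A^T]=\sum_k\b B_k\b B_k^T$ and $\E[\b A^T\b A]=\sum_k\b B_k^T\b B_k$, so the two norms on the right-hand side of the claimed inequality are precisely the matrix-variance quantities of the series $\sum_kg_k\b B_k$.

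Next I would symmetrize with the Hermitian dilation $\widetilde{\b A}=\bigl(\begin{smallmatrix}\b 0&\b A\\ \b A^T&\b 0\end{smallmatrix}\bigr)\in\mathbb{R}^{2n\times2n}$, which satisfies $\|\widetilde{\b A}\|=\|\b A\|$ and is itself a Gaussian series of symmetric matrices $\widetilde{\b A}=\sum_kg_k\widetilde{\b B}_k$. Because $\widetilde{\b A}^2=\operatorname{diag}(\b A\b A^T,\b A^T\b A)$, we get $\|\E\widetilde{\b A}^2\|=\max\{\|\E\b A\b A^T\|,\|\E\b A^T\b A\|\}$. Applying the Hermitian form of the matrix Khintchine / Gaussian-series inequality from \cite{oliveira2010sums} to the $2n\times2n$ matrix $\widetilde{\b A}$ gives $\E\|\widetilde{\b A}\|\leq C\sqrt{\log(2n)}\,\|\E\widetilde{\b A}^2\|^{1/2}$; tracking the numerical constant in that reference and bounding $\log(2n)$ crudely then yields the stated inequality with the explicit constant $2\sqrt{1+2\log n}$.

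The only nonroutine point, and the step I would watch most carefully, is the first reduction: one must verify that the right-hand side is unchanged under the square-root factorization of $\b \Sigma$, i.e.\ that $\sum_k\b B_k\b B_k^T=\E[\b A\b A^T]$ and $\sum_k\b B_k^T\b B_k=\E[\b A^T\b A]$ independently of which square root is used---this is exactly the orthonormality computation above. Everything else is the (standard) dilation identity reducing the rectangular/non-Hermitian case to the Hermitian one, plus bookkeeping of the absolute constants in the cited bound so that they collapse into the factor $2\sqrt{1+2\log n}$.
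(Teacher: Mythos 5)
Your proposal is correct and follows essentially the same route as the paper's proof: Hermitian dilation of $\b A$, representation of the dilated matrix as a Gaussian series of fixed symmetric matrices via the covariance square root, and an appeal to the cited matrix Khintchine/Gaussian-series bound applied to the dilation, whose second moment is block-diagonal with blocks $\E\b A\b A^T$ and $\E\b A^T\b A$. The only difference is that you spell out the covariance-factorization step (and its independence from the choice of square root) that the paper merely asserts, which is a harmless elaboration rather than a different argument.
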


\begin{proof}
Let $\b B= \left(\begin{array}{cc}
\b 0 & \b A \\
\b A^T & \b 0\\
\end{array}\right)$. Then $\b B$ is a symmetric matrix with jointly Gaussian entries and $\E \|\b A\| = \E \|\b B\|$. In fact, $\b B$ can be written as a sum of finite random independent symmetric matrices, i.e.,
$\b B = \sum_{i=1}^m \epsilon_i \b H_i $, where $\epsilon_i$ are independent standard Gaussian random variables, and $\b H_i\in\mathbb{R}^{2n\times 2n}$ are some fixed symmetric matrices. Using Corollary 2.4 in \cite{tropp2018second}, we have 
\begin{equation*}
    \begin{split}
\E \|\b B\| \leq 2\sqrt{1+2 \log n} \|\E \b B^T \b B\|^{1/2} &=  2\sqrt{1+2 \log n} \left\| 
\left(\begin{array}{cc}
\E \b A^T \b A & \b 0 \\
 \b 0 & \E \b A \b A^T\\
\end{array}\right)
\right\|^{1/2}\\
&\leq 2\sqrt{1+2 \log n}\max\left\{ \|\E \b A^T\b A\|^{1/2}, \|\E \b A\b A^T\|^{1/2}  \right\}
. 
\end{split}
\end{equation*}
\end{proof}

\subsection{Proof of Lemma \ref{lem:row_length_X}}
\label{l4proof}
\begin{proof}
Let $\b D = (Z^TZ)^{1/2}=\operatorname{diag}(\sqrt{n_1},\dots, \sqrt{n_k})$. Then, using the SVD for $\b D\b B\b D$, we can have $\b D \b B \b D = \b U \b \Lambda \b V^T$, where $\b U, \b V \in \mathbb{R}^{K\times K}$ are orthonormal matrices, and $\b \Lambda \in \mathbb{R}^{K\times K}$ is a diagonal matrix. 
Let ${\t{\b X}}_L =\b Z\b D^{-1} \b U, {\t{\b X}}_R =\b Z\b D^{-1} \b V$,
then we can have $\tN = \t{\b X}_L \b\Lambda \t{\b X}_R^T$ is the SVD of $\tN$ because $$\t{\b X}_L^T\t{\b X}_L = \b U^T \b D^{-1} \b Z^T   \b Z \b D^{-1} \b U = \b U^T \b D^{-1} \b D^2 \b D^{-1} \b U = \b I,$$ 
and similarly we can show $\t{\b X}_R^T\t{\b X}_R = \b I$. Let $\b Y = (\b D^{-1} \b U|\b D^{-1} \b V)$ which is a column concatenation of $\b D^{-1} \b U$ and $\b D^{-1} \b V$. Then clearly from the definition of $\t{\b X} $ we have $\t{\b X} =\b Z\b Y$. Moreover, 
\begin{equation*}
\b Y \b Y^T = \left(\b D^{-1} \b U|\b D^{-1} \b V\right) \left(\begin{array}{l}
\b U^T \b D^{-1}  \\
 \b V^T \b D^{-1}
\end{array}\right)
=\b D^{-1} \b U \b U^T \b D^{-1} + \b D^{-1} \b V \b V^T \b D^{-1}=2\b D^{-2}.
\end{equation*}
This result implies ${\b Y}$ is row orthogonal and the $k$th row length is $\|{\b Y}_{k\cdot}\|= \sqrt{2n_{k}^{-1}}$. Then we know for any $1\leq i < j \leq K$,
$$\|{\b Y}_{i\cdot}- {\b Y}_{j\cdot}\|^2 = \|{\b Y}_{i\cdot}\|^2 +\| {\b Y}_{j\cdot}\|^2
=2(n_{i}^{-1}+n_{j}^{-1}).$$
The second claim comes from $\t{\b X}_{i\cdot} =\b  Z_{i\cdot} \b Y=\b Y_{z_i\cdot}$ and hence $\t{\b X}^*_{i\cdot} =  \b Y_{z_i\cdot}^* = \b  Z_{i\cdot} \b Y^*$.
\end{proof}

\subsection{A Variation of the Davis Kahan Theorem}
\label{sec:davis_kahan}
The following is a variation of the Davis Kahan theorem \citep{dk70}. 

\begin{proposition} 
\label{lem:eigenvectors_perturbation}
Let $\b X_L (\b X_R)$ be the top K left (right) singular vectors of ${\b N_T}$. Let $\b X$ be the column concatenate matrix $\b X = (\b X_L |\b X_R )\in\mathbb{R}^{n\times 2K}$. We use $ \lambda_1 \geq \dots \geq \lambda_K >0$ to denote the top $K$ positive singular values of $\frac{\E \boldsymbol {N_T}}{T}$. Then there exists a $2K\times 2K$ orthonormal matrix $\b Q$ such that 
$$\|\b X - \t{\b X} \b Q\|_F^2 \leq \frac{16K\| \frac{1}{T}({\b N_T} - \E \boldsymbol {N_T}) \|^2}{ \lambda_K^2}.$$
\end{proposition}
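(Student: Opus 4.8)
The plan is to reduce this bound on the perturbation of the \emph{concatenated} left/right singular subspaces to a single Hermitian eigenspace perturbation bound via the dilation trick, and then invoke a Davis--Kahan variant. First, form the $2n\times 2n$ symmetric dilations
\[
\b A = \begin{pmatrix} \b 0 & \E\b N_T/T \\ (\E\b N_T/T)^{T} & \b 0 \end{pmatrix}, \qquad \h{\b A} = \begin{pmatrix} \b 0 & \b N_T/T \\ (\b N_T/T)^{T} & \b 0 \end{pmatrix},
\]
so that $\h{\b A}-\b A$ is the dilation of $\b E := \tfrac1T(\b N_T - \E\b N_T)$ and $\|\h{\b A}-\b A\| = \|\b E\|$. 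By Lemma~\ref{lem:row_length_X}, $\E\b N_T = \tN = \t{\b X}_L\b\Lambda\t{\b X}_R^{T}$ with $\tN=\Z\B\Z^{T}$ of rank $K$; hence $\b A$ has eigenvalues $\pm\lambda_1,\dots,\pm\lambda_K$ (where $\lambda_1\ge\cdots\ge\lambda_K>0$ are the singular values of $\E\b N_T/T$) together with $0$ repeated, and the invariant subspace of $\b A$ for $\lambda_1,\dots,\lambda_K$ is the column span of $\b U := \tfrac{1}{\sqrt2}\big(\t{\b X}_L^{T}\ \t{\b X}_R^{T}\big)^{T}\in\mathbb{R}^{2n\times K}$. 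Likewise, fixing the pairing of columns of $\b X_L,\b X_R$ through an SVD of $\b N_T$, the top-$K$ eigenspace of $\h{\b A}$ is spanned by $\h{\b U} := \tfrac{1}{\sqrt2}\big(\b X_L^{T}\ \b X_R^{T}\big)^{T}$. Repeated singular values cause no difficulty, since this is a statement about subspaces.

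Second, apply the Davis--Kahan $\sin\Theta$ theorem \citep{dk70} to $\b A,\h{\b A}$; the crucial point is that the gap separating $\{\lambda_1,\dots,\lambda_K\}$ from the rest of $\Lambda(\b A)$ is exactly $\lambda_K-0=\lambda_K$. Split on $\|\b E\|$. If $\|\b E\|\ge\lambda_K/2$, the claim is trivial: taking $\b O$ to be the Procrustes-optimal rotation aligning $\h{\b U}$ with $\b U$, one has $\|\b X - \t{\b X}\b Q\|_F^2 \le 4K \le 16K\|\b E\|^2/\lambda_K^2$ because $\b X_L,\b X_R,\t{\b X}_L,\t{\b X}_R$ all have orthonormal columns. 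If $\|\b E\|<\lambda_K/2$, then by Weyl's inequality the top $K$ eigenvalues of $\h{\b A}$ exceed $\lambda_K/2$ while the remaining ones are at most $\|\b E\|<\lambda_K/2$, so Davis--Kahan gives $\|\sin\Theta(\h{\b U},\b U)\| \le \|\b E\|/(\lambda_K-\|\b E\|) \le 2\|\b E\|/\lambda_K$; since there are at most $K$ nonzero principal angles, $\|\sin\Theta(\h{\b U},\b U)\|_F \le 2\sqrt K\,\|\b E\|/\lambda_K$, and there is a $K\times K$ orthonormal matrix $\b O$ (the Procrustes rotation) with $\|\h{\b U}-\b U\b O\|_F \le \sqrt2\,\|\sin\Theta(\h{\b U},\b U)\|_F \le 2\sqrt{2K}\,\|\b E\|/\lambda_K$.

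Finally, unpack the dilation. By construction $\|\h{\b U}-\b U\b O\|_F^2 = \tfrac12\big(\|\b X_L-\t{\b X}_L\b O\|_F^2 + \|\b X_R-\t{\b X}_R\b O\|_F^2\big)$, so $\|\b X_L-\t{\b X}_L\b O\|_F^2 + \|\b X_R-\t{\b X}_R\b O\|_F^2 \le 16K\|\b E\|^2/\lambda_K^2$. Letting $\b Q$ be the $2K\times 2K$ block-diagonal orthonormal matrix whose two diagonal blocks both equal $\b O$, we get $\t{\b X}\b Q = (\t{\b X}_L\b O \mid \t{\b X}_R\b O)$, hence
\[
\|\b X - \t{\b X}\b Q\|_F^2 = \|\b X_L-\t{\b X}_L\b O\|_F^2 + \|\b X_R-\t{\b X}_R\b O\|_F^2 \le \frac{16K\,\|\tfrac1T(\b N_T-\E\b N_T)\|^2}{\lambda_K^2},
\]
which is the assertion. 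The main obstacle is the Davis--Kahan bookkeeping: confirming the relevant eigengap is genuinely $\lambda_K$ (which relies on $\E\b N_T$ having rank exactly $K$, guaranteed by the DCH block structure and Lemma~\ref{lem:row_length_X}), handling the regime where $\|\b E\|$ is not small relative to $\lambda_K$ via the trivial bound, and tracking constants through the conversions $\|\sin\Theta\|\leftrightarrow\|\sin\Theta\|_F$ (the $\sqrt K$ factor) and $\|\sin\Theta\|_F\leftrightarrow\|\h{\b U}-\b U\b O\|_F$ (the $\sqrt2$ factor), which combine to give the constant $16K$.
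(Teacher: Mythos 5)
Your proof is correct, but it takes a genuinely different route from the paper's. The paper stays with the asymmetric matrices throughout: it first applies Proposition 2.2 of Vu (2013) to bound $\|\b X_L - \t{\b X}_L\b Q_L\|_F^2$ (and its right-singular analogue) by twice the squared Frobenius norm of the corresponding projector difference, allowing two \emph{separate} rotations $\b Q_L,\b Q_R$ assembled into a block-diagonal $\b Q$, and then invokes Wedin's $\sin\Theta$ theorem together with Weyl's inequality, splitting into the cases $\|\b\Delta\|\le\lambda_K/2$ and $\|\b\Delta\|>\lambda_K/2$ exactly as you do, arriving at the same constant $16K$. You instead symmetrize via the Hermitian dilation, reduce to a single eigenspace perturbation problem for the top-$K$ eigenvalues (gap $\lambda_K$, which indeed requires that $\E\b N_T$ has rank exactly $K$, guaranteed here by $\E\b N_T=\b Z\b B\b Z^T$ and $\lambda_K>0$), apply Davis--Kahan, convert to Frobenius norm with the $\sqrt K$ and $\sqrt2$ factors via a Procrustes alignment, and unpack the dilation; this forces the \emph{same} rotation $\b O$ on the left and right blocks, a slightly stronger conclusion that still yields a valid orthonormal $\b Q$. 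The dilation route avoids citing Wedin and Vu's proposition at the cost of the bookkeeping you note (pairing the columns of $\b X_L,\b X_R$ through a common SVD of $\b N_T$, and checking via Weyl that the complementary eigenvalues of the dilation of $\b N_T/T$ lie below $\|\b E\|$), while the paper's route handles the asymmetric perturbation directly. One small point: your trivial-regime bound $\|\b X-\t{\b X}\b Q\|_F^2\le 4K$ genuinely needs the Procrustes-optimal $\b O$ (so that $\mathrm{tr}(\b O^T\b U^T\hat{\b U})\ge 0$); for an arbitrary orthogonal $\b Q$ the crude bound is only $8K$, which would not suffice at $\|\b E\|=\lambda_K/2$ with the constant $16K$ --- you flag the Procrustes choice, but that nonnegativity deserves the one-line justification.
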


\begin{proof}
The proof is similar to \cite{lei2015consistency} and \cite{rohe2016co}, but we modify them to analyze the concatenate singular subspace for the (expected) count matrix. 
By the Proposition 2.2 of \cite{vu13}, there exist orthonormal matrices $\b Q_L, \b Q_R \in\mathbb{R}^{K\times K}$ such that
\begin{equation*}
\begin{split}
\|\b X_L - \t{\b{ X}}_L \b Q_L\|_F^2 &\leq 2\left\|(\b I - \b X_L \b X_L^T)\t{\b{ X}}_L\t{\b{ X}}_L^T\right\|_{F}^{2},\\
\|\b X_R - \t{\b{ X}}_R \b Q_R\|_F^2 &\leq 2\left\|(\b I - \b X_R \b X_R^T)\t{\b{ X}}_R\t{\b{ X}}_R^T\right\|_{F}^{2}.\\
\end{split}
\end{equation*}
Let $\b Q = \left(\begin{array}{ll}
\b Q_L & \b 0 \\
\b 0 & \b Q_R\\
\end{array}\right)$ which is a orthonormal matrix and we have
$$\|\b X - \t{\b{ X}} \b Q\|_F^2  = \|\b X_L - \t{\b{ X}}_L \b Q_L\|_F^2 + \|\b X_R - \t{\b{ X}}_R \b Q_R\|_F^2. $$
Using the Wedin theorem (\cite{stewart1998perturbation}) and let $\b \Delta= {\b N_T} - \tN$, we have 
\begin{equation*}
\begin{split}
    \left\|(\b I - \b X_L \b X_L^T)\t{\b{ X}}_L \t{\b{ X}}_L^T\right\|_{F}^{2} + \left\|(\b I - \b X_R \b X_R^T)\t{\b{ X}}_R \t{\b{ X}}_R^T\right\|_{F}^{2} &\leq \frac{\left\|\t{\boldsymbol{X}}_L^{T} \boldsymbol{\Delta}\right\|_{F}^{2}+\left\|\boldsymbol{\Delta} \t{\boldsymbol{X}}_R\right\|_{F}^{2}}{\delta^{2}}\\
    &\leq \frac{2K\|\b\Delta\|^2}{(\t \lambda_K - \|\b \Delta\|)^{2}},
\end{split}
\end{equation*}
where $\delta = \min \left(\min _{1 \leq i \leq K, K \leq j \leq n}\left|\t \lambda_{i}-{\lambda}_{j}(\b N_T)\right|, \min _{1 \leq i \leq K}  \lambda_{i}\right)$, $\lambda_i(\b N_T)$ is the $i$-th largest singular value of $\b N_T$ and $\lambda_i$ is as defined in the statement of the proposition. The last inequality comes from the Weyl theorem (\cite{stewart1998perturbation}) which states $\left|\tilde{\lambda}_{i}-\lambda_{i}(\b N_T)\right| \leq\|\b \Delta\|$ for $i=1,\dots,n$, and the triangle inequality. Thus if $\|\b \Delta\|\leq  \lambda_K/2$, $\frac{2K\|\b\Delta\|^2}{( \lambda_K - \|\b \Delta\|)^{2}}\leq \frac{8K\|\b\Delta\|^2}{ \lambda_K^{2}}$. If $\|\b \Delta\| >  \lambda_K/2$, we can have
$$\left\|(\b I - \b X_L \b X_L^T) \t{\b{ X}}_L \t{\b{ X}}_L^T\right\|_{F}^{2} + \left\|(\b I - \b X_R \b X_R^T) \t{\b{ X}}_R \t{\b{ X}}_R^T\right\|_{F}^{2}\leq 2K\leq \frac{8K\|\b\Delta\|^2}{ \lambda_K^{2}}.$$
\end{proof}

\subsection{Proof of Lemma \ref{lem:stability}}

\begin{proof}
\cite{hawkes1971spectra} has shown a sufficient condition for the process to be stationary is that $\rho\left(\b G_{(a,b),(b,a)}\right) \leq \sigma^*<1$, so we will only need to prove the equivalent sufficient condition in terms of the parameters. Let $\lambda$ be any eigenvalue of $\b G_{(a,b),(b,a)}$. We know it satisfies 
\begin{equation}
\label{eq:restricted_SR_eigenvalues_eq}
(\alpha^n_{ab}-\lambda)(\alpha^n_{ba}-\lambda) - (\alpha^r_{ab})^2 =0.
\end{equation}
Since $(\alpha^n_{ab} - \alpha^n_{ba})^2 +4(\alpha^r_{ab})^2\geq 0$,
 we should have two real value roots in (\ref{eq:restricted_SR_eigenvalues_eq}). The sum of these eigenvalues is $\frac{\alpha^n_{ab}+\alpha^n_{ba}}{2}\geq 0$, thus the conditions for their absolute values are smaller or equal to $\sigma^*$ are
$$\frac{\alpha^n_{ab}+\alpha^n_{ba}}{2}  \leq \sigma^* \text{ and } (\alpha^n_{ab}- \sigma^*)(\alpha^n_{ba}-\sigma^*) - (\alpha_{ab}^r)^2\geq 0.$$
This condition is equivalent to $\alpha^n_{ab} \leq \sigma^*, \alpha^n_{ba} \leq \sigma^*$ and $\alpha^r_{ab} < \sqrt{ {(\sigma^*-\alpha^n_{ab})(\sigma^*-\alpha^n_{ba})}}$ in our parameter space.
\end{proof}

\section{Additional Details on Simulation Experiments}

\subsection{Derivation of Expected Count Matrix in the Simulation Varying $\gamma_{\max}$}
\label{sec:appendix_gamma_max}
The entries of the expected count matrix $\E\b N_T$ are as follows:
\begin{equation*}
\E( \b N_T)_{ij} = 
\begin{cases}
0.002T, & z_i=z_j \\
0.001T, & z_i=1, z_j=2 \\
0.0001T, & z_i=2, z_j=1
\end{cases}.
\end{equation*}
It is easy to check this result  when $z_i=z_j$ because only the base intensity $\b \mu$ influences the event counts. When $z_i=1, z_j=2$, we get
\begin{equation*}
\begin{aligned}
\left(\begin{array}{c}
\E (\b N_T)_{ij}\\
\E (\b N_T)_{ji}\\
\end{array}\right) &= T\left(\b I - \left(\begin{array}{cc}
\alpha^n_{12}&\alpha^r_{12}\\
\alpha^r_{21} &\alpha^n_{21}\\
\end{array}\right)\right)^{-1}\left(\begin{array}{c}
\mu_{12}\\
\mu_{21}\\
\end{array}\right)\\
&=T
\left(\begin{array}{cc}
1&- s\\
0 &1\\
\end{array}\right)^{-1}\left(\begin{array}{c}
0.001 -s\\
0.0001\\
\end{array}\right)\\
&=T
\left(\begin{array}{c}
0.001 \\
0.0001\\
\end{array}\right).
\end{aligned}
\end{equation*}

\subsection{Sensitivity of Hawkes Process Parameters on Community Detection}
\label{sec:app_sim_community_detection}

In these experiments, we study the dependence of spectral clustering error on Hawkes process parameters which are summarized by the quantities $\mu_{\max}$ and $h(\gamma_1,\gamma_2,\mu_1,\mu_2)$. As we have shown in the theoretical results, the misclustering error rate is expected to be smaller when $\mu_{\max}$ and $h(\gamma_1,\gamma_2,\mu_1,\mu_2)$ are larger.  We can think of $h(\gamma_1,\gamma_2,\mu_1,\mu_2)$ as a representation of the signal to noise ratio of the model. It is easy to see that when $h(\gamma_1,\gamma_2,\mu_1,\mu_2)$ is close to 0, the difference in the expected counts between communities and within communities are nearly indistinguishable, which makes it hard for the algorithm to find the true community memberships. 

\begin{table}[t]
    \centering
    \caption{Descriptions of the 6 types of excitation in the MULCH model following an event from
    node $i$ in block $a$ to node $j$ in block $b$.}
    \label{tab:excitationDescriptions}
    \begin{tabular}{cp{4.9in}}
        \hline
        Parameter                         & Excitation Type \\
        \hline
        $\alpha_{ab}^{n}$ & \emph{Self excitation}: continuation of event $(x,y)$ \\
        $\alpha_{ab}^{r}$ & \emph{Reciprocal excitation}: event $(y,x)$ taken in response to event $(x,y)$ \\
        $\alpha_{ab}^{tc}$ & \emph{Turn continuation}: $(x,b)$ following $(x,y)$ to other nodes except for $y$ in the same block $b$\\
        $\alpha_{ab}^{ac}$ & \emph{Allied continuation}: event $(a,y)$ following $(x,y)$ from other nodes except $x$ in block $a$ \\
        $\alpha_{ab}^{gr}$ & \emph{Generalized reciprocity}: $(y,a)$ following $(x,y)$ to other nodes except $x$ in block $a$\\
        $\alpha_{ab}^{ar}$ & \emph{Allied reciprocity}: event $(b,x)$ following $(x,y)$ from other nodes except $y$ in block $b$ \\
        \hline
    \end{tabular}
\end{table}

\begin{figure}[tp]
    \newcommand{\figwidth}{0.244\textwidth}
    \centering
    \hfill
    \centering
    \begin{subfigure}[c]{\figwidth}
        \centering
        \includegraphics[width=\textwidth]{ 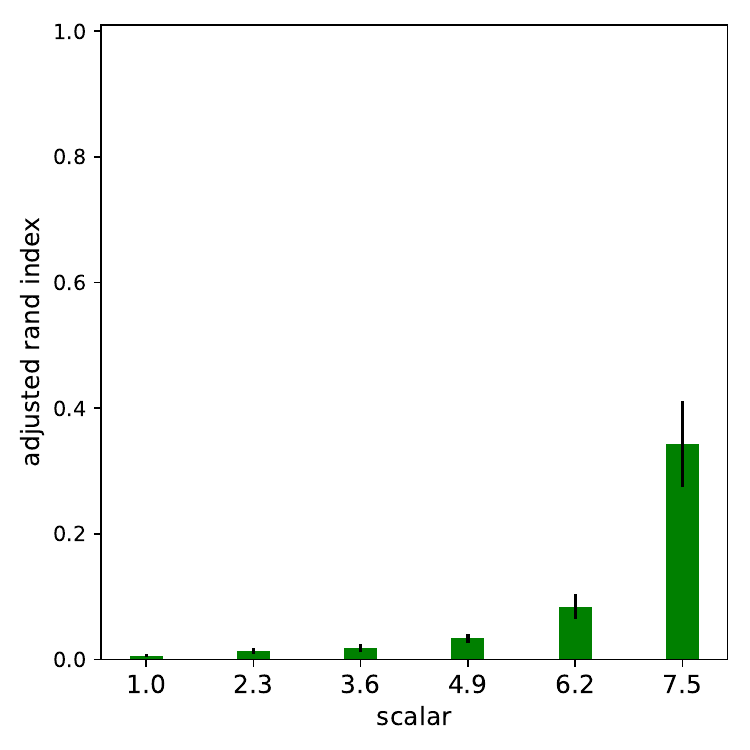}
        \caption{$s\times\alpha_{1}^{n}$}
        \label{fig:scalar_self}
    \end{subfigure}
    \hfill
    \centering
    \begin{subfigure}[c]{\figwidth}
        \centering
        \includegraphics[width=\textwidth]{ 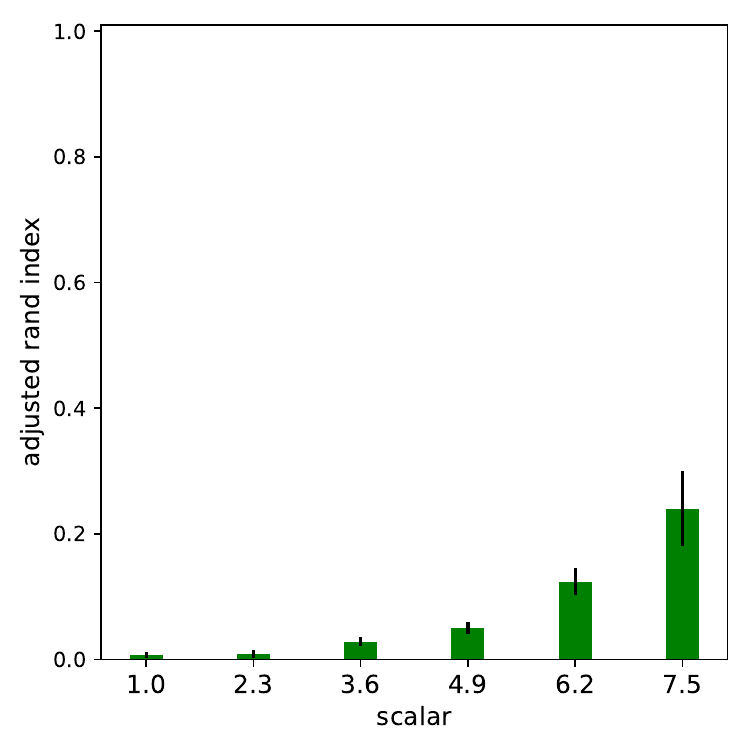}
        \caption{$s\times\alpha_{1}^{r}$}
        \label{fig:scalar_rec}
    \end{subfigure}
    \hfill
    \centering
    \begin{subfigure}[c]{\figwidth}
        \centering
        \includegraphics[width=\textwidth]{ 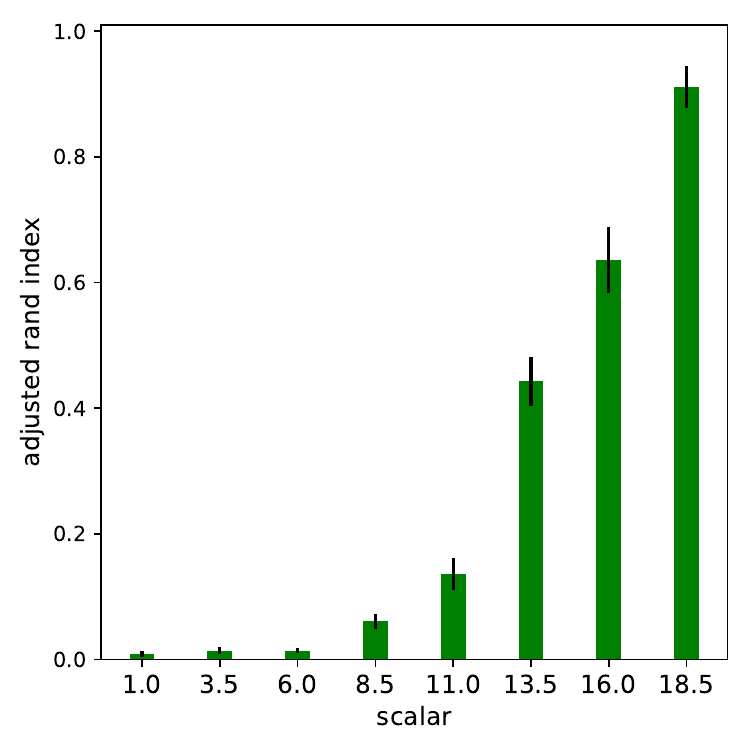}
        \caption{$s\times\alpha_{1}^{tc}$}
    \end{subfigure}
    \hfill
    \begin{subfigure}[c]{\figwidth}
        \centering
        \includegraphics[width=\textwidth]{ 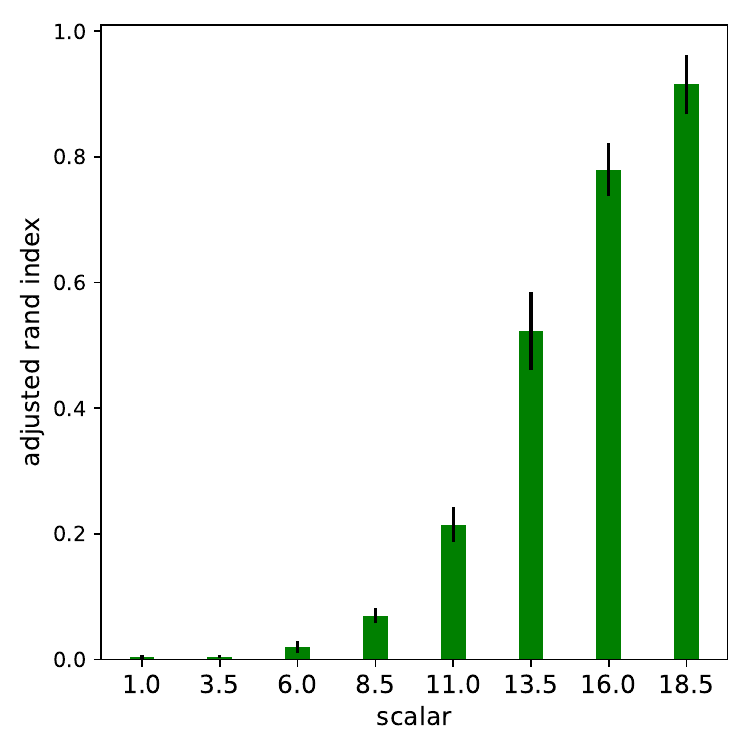}
        \caption{$s\times\alpha_{1}^{ac}$}
    \end{subfigure}
        \hfill
    \\
    \centering
    \begin{subfigure}[c]{\figwidth}
        \centering
        \includegraphics[width=\textwidth]{ 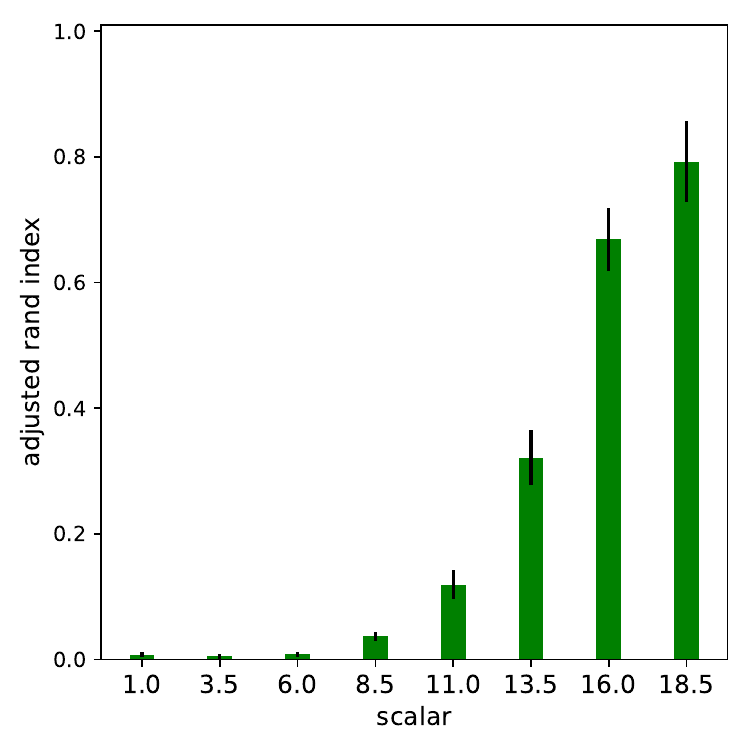}
        \caption{$s\times\alpha_{1}^{gr}$}
    \end{subfigure}
    \hfill
    \centering
    \begin{subfigure}[c]{\figwidth}
        \centering
        \includegraphics[width=\textwidth]{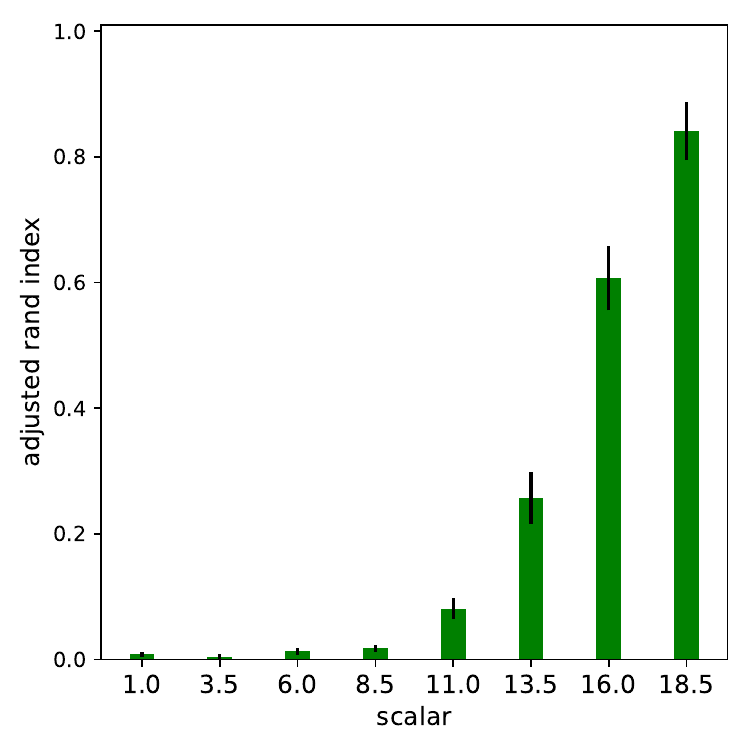}
        \caption{$s\times\alpha_{1}^{ar}$}
        \label{fig:scalar_ar}
    \end{subfigure}
    \hfill
    \begin{subfigure}[c]{\figwidth}
        \centering
        \includegraphics[width=\textwidth]{ 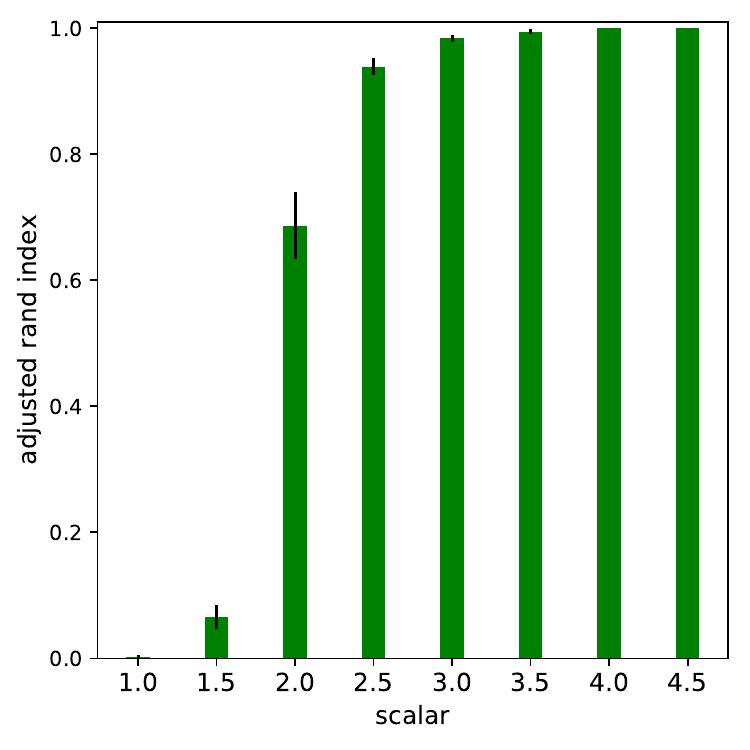}
        \caption{$\mu_1 = 10^{-3}, \mu_2 = \frac{\mu_1}{s}$}
        \label{fig:mu_divide}
    \end{subfigure}
    \hfill
    \begin{subfigure}[c]{\figwidth}
        \centering
        \includegraphics[width=\textwidth]{ 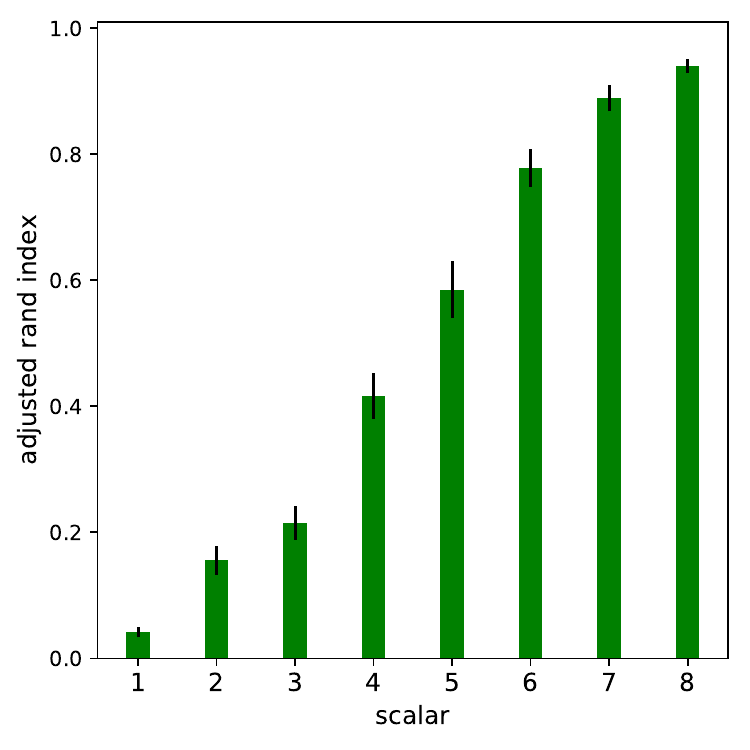}
        \caption{$\mu_{2}\!=\!10^{-4} s, \mu_{1}\!=\!2{\mu_2}$}
        \label{fig:mu_max_mul}
    \end{subfigure}
    \hfill
    \caption{Average adjusted Rand index while modifying one or more parameters at each time and keeping all other parameters as in Section \ref{eq:experiment_param_setting} ($\pm$ standard error over 15 runs).}
    \label{fig:scalar_params}
\end{figure}

To evaluate the influence of $h(\gamma_1,\gamma_2,\mu_1,\mu_2)$, we fix $K=4$, $n=100$, $T=700$ and all decay parameters in the kernel be $\b\beta=1$. 
We use all six types of excitations in the MULCH model, described in Table \ref{tab:excitationDescriptions}. 
We first let the diagonal block pairs and off-diagonal block pairs have the same parameters, i.e., $
\label{eq:experiment_param_setting}
\left(\mu_{1}, \alpha_{1}^{n}, \alpha_{1}^{r}, \alpha_{1}^{tc}, \alpha_{1}^{ac}, \alpha_{1}^{gr}, \alpha_{1}^{ar} \right) =
(\mu_{2}, \alpha_{2}^{n}, \alpha_{2}^{r}, \alpha_{2}^{tc}, \alpha_{2}^{ac}, \alpha_{2}^{gr}, \alpha_{2}^{ar})
=(0.0001, 0.1, 0.1, 0.0015, 0.0015, 0.0015, 0.0015)$.
Then we pick one parameter among the intra-block parameters $(\alpha_{1}^{n}, \alpha_{1}^{r}, \alpha_{1}^{tc}, \alpha_{1}^{ac}, \alpha_{1}^{gr}, \alpha_{1}^{ar})$ at a time and multiply its value by an increasing scalar $s$ while fixing all other parameters. For Example we make $\alpha_{1}^{n}=2\times0.1$, $\alpha_{1}^{n}=3\times0.1$, etc., while keeping values of all other parameters unchanged. 
The community detection accuracy averaged over 15 simulations are shown in Figure \ref{fig:scalar_self}-\ref{fig:scalar_ar}. We can see in these figures that increasing the scalar $s$ can improve the clustering accuracy. This is what we expect since all of them will increase $\gamma_1$ and thus will also increase $h(\gamma_1,\gamma_2,\mu_1,\mu_2)$. Our inequality predicts the clustering accuracy should also increase in this case.

We conduct a seventh experiment where we let $\mu_{1}=0.001$ and $\mu_{2}=0.001/s$ while fixing all other parameters (Figure \ref{fig:mu_divide}). In this setting $\mu_{\max} = \mu_1 = 0.001$, while the function $h(\cdot)$ changes. Together, these seven experiments  show that by increasing $h(\cdot)$ while fixing $\mu_{\max}$, the accuracy of spectral clustering increases. We can also notice that the absolute changes of $\alpha_1^n$ and $\alpha_1^r$ (i.e., $|\alpha_1^n-\alpha_2^n|$ and $|\alpha_1^r-\alpha_2^r|$) have smaller effect on the accuracy comparing with the absolute changes of other four parameters ($\alpha_1^{tc}, \alpha_1^{ac}, \alpha_1^{gr},\alpha_1^{ar}$). That is reasonable because in our theoretical results we can see the multipliers of $\alpha_1^n$ and $\alpha_1^r$ are 1, while the multipliers of $\alpha_1^{ac},\alpha_1^{tc},\alpha_1^{gr},\alpha_1^{ar}$ are $(n/K-2)$, so $\gamma_1$ has weaker dependence on $\alpha_1^n$ and $\alpha_1^r$, and same is true for $h(\gamma_1,\gamma_2,\mu_1,\mu_2)$. Similarly, in Figure \ref{fig:mu_divide}, we can see the clustering accuracy increases when the scalar $s$ increases. This aligns with our theory since $h(\gamma_1,\gamma_2,\mu_1,\mu_2)$ has negative association with $\mu_2/\mu_1$. Therefore increasing the scalar will decrease $\mu_2/\mu_1$ and thus $h(\gamma_1,\gamma_2,\mu_1,\mu_2)$ increases. All the results imply increasing $h(\gamma_1,\gamma_2,\mu_1,\mu_2)$ will improve clustering accuracy, which gives support to our theory.

The $\mu_{\max}$ in the upper bound controls the density level of the network, and we expect the spectral clustering will be easier when the network becomes denser since there will be more information available, and the difference between the blocks will also be magnified.
To check the influence of $\mu_{\max}$ while fixing other parameters, we let $\mu_1=0.0001s$ and $\mu_2=\mu_1/2$, where $s$ is an increasing scalar and we keep all other parameters unchanged. In this setting, $h(\gamma_1,\gamma_2,\mu_1,\mu_2)$ will stay unchanged, but $\mu_{\max}=\mu_1$ will increase. As we show in Figure \ref{fig:mu_max_mul}, the accuracy increases steadily as the scalar increases.

\vskip 0.2in

\bibliography{mvhawkes,sbm}

\begin{thebibliography}{57}
\providecommand{\natexlab}[1]{#1}
\providecommand{\url}[1]{\texttt{#1}}
\expandafter\ifx\csname urlstyle\endcsname\relax
  \providecommand{\doi}[1]{doi: #1}\else
  \providecommand{\doi}{doi: \begingroup \urlstyle{rm}\Url}\fi

\bibitem[Achab et~al.(2018)Achab, Bacry, Ga{\"\i}ffas, Mastromatteo, and Muzy]{achab2017uncovering}
Massil Achab, Emmanuel Bacry, St{\'e}phane Ga{\"\i}ffas, Iacopo Mastromatteo, and Jean-Fran{\c{c}}ois Muzy.
\newblock Uncovering causality from multivariate hawkes integrated cumulants.
\newblock \emph{Journal of Machine Learning Research}, 18\penalty0 (192):\penalty0 1--28, 2018.

\bibitem[Arastuie et~al.(2020)Arastuie, Paul, and Xu]{arastuie2020chip}
Makan Arastuie, Subhadeep Paul, and Kevin Xu.
\newblock Chip: A hawkes process model for continuous-time networks with scalable and consistent estimation.
\newblock In \emph{Advances in Neural Information Processing Systems}, volume~33, 2020.

\bibitem[Bacry et~al.(2015)Bacry, Mastromatteo, and Muzy]{bacry2015hawkes}
Emmanuel Bacry, Iacopo Mastromatteo, and Jean-Fran{\c{c}}ois Muzy.
\newblock Hawkes processes in finance.
\newblock \emph{Market Microstructure and Liquidity}, 1\penalty0 (01):\penalty0 1550005, 2015.

\bibitem[Blundell et~al.(2012)Blundell, Beck, and Heller]{Blundell2012}
Charles Blundell, Jeff Beck, and Katherine~A. Heller.
\newblock Modelling reciprocating relationships with {Hawkes} processes.
\newblock In \emph{Advances in Neural Information Processing Systems 25}, pages 2600--2608, 2012.

\bibitem[Boucheron et~al.(2013)Boucheron, Lugosi, and Massart]{boucheron2013concentration}
St{\'e}phane Boucheron, G{\'a}bor Lugosi, and Pascal Massart.
\newblock \emph{Concentration inequalities: A nonasymptotic theory of independence}.
\newblock Oxford university press, 2013.

\bibitem[Brandes et~al.(2009)Brandes, Lerner, and Snijders]{brandes2009networks}
Ulrik Brandes, J{\"u}rgen Lerner, and Tom~AB Snijders.
\newblock Networks evolving step by step: Statistical analysis of dyadic event data.
\newblock In \emph{2009 international conference on advances in Social network analysis and mining}, pages 200--205. IEEE, 2009.

\bibitem[Butts(2008)]{butts20084}
Carter~T Butts.
\newblock A relational event framework for social action.
\newblock \emph{Sociological Methodology}, 38\penalty0 (1):\penalty0 155--200, 2008.

\bibitem[Chen et~al.(2022)Chen, Liu, and Ma]{chen2022global}
Shuxiao Chen, Sifan Liu, and Zongming Ma.
\newblock Global and individualized community detection in inhomogeneous multilayer networks.
\newblock \emph{The Annals of Statistics}, 50\penalty0 (5):\penalty0 2664--2693, 2022.

\bibitem[Corneli et~al.(2018)Corneli, Latouche, and Rossi]{corneli2018multiple}
Marco Corneli, Pierre Latouche, and Fabrice Rossi.
\newblock Multiple change points detection and clustering in dynamic networks.
\newblock \emph{Statistics and Computing}, 28\penalty0 (5):\penalty0 989--1007, 2018.

\bibitem[Davis and Kahan(1970)]{dk70}
Chandler Davis and William~Morton Kahan.
\newblock The rotation of eigenvectors by a perturbation. {III}.
\newblock \emph{SIAM Journal on Numerical Analysis}, 7\penalty0 (1):\penalty0 1--46, 1970.

\bibitem[Do and Xu(2022)]{do2022analyzing}
Hung~N Do and Kevin~S Xu.
\newblock Analyzing escalations in militarized interstate disputes using motifs in temporal networks.
\newblock In \emph{Complex Networks \& Their Applications X: Volume 1, Proceedings of the Tenth International Conference on Complex Networks and Their Applications COMPLEX NETWORKS 2021 10}, pages 527--538. Springer, 2022.

\bibitem[DuBois and Smyth(2010)]{DuBois2010}
Christopher DuBois and Padhraic Smyth.
\newblock {Modeling relational events via latent classes}.
\newblock In \emph{Proceedings of the 16th ACM SIGKDD International Conference on Knowledge Discovery and Data Mining}, pages 803--812, 2010.

\bibitem[DuBois et~al.(2013)DuBois, Butts, and Smyth]{Dubois2013}
Christopher DuBois, Carter~T. Butts, and Padhraic Smyth.
\newblock {Stochastic blockmodeling of relational event dynamics}.
\newblock In \emph{Proceedings of the 16th International Conference on Artificial Intelligence and Statistics}, pages 238--246, 2013.

\bibitem[Eagle et~al.(2009)Eagle, Pentland, and Lazer]{eagle2009inferring}
Nathan Eagle, Alex~Sandy Pentland, and David Lazer.
\newblock Inferring friendship network structure by using mobile phone data.
\newblock \emph{Proceedings of the National Academy of Sciences}, 106\penalty0 (36):\penalty0 15274--15278, 2009.

\bibitem[Fan et~al.(2022)Fan, Li, Chen, Li, and Sisson]{fan2022hawkes}
Xuhui Fan, Yaqiong Li, Ling Chen, Bin Li, and Scott~A Sisson.
\newblock Hawkes processes with stochastic exogenous effects for continuous-time interaction modelling.
\newblock \emph{IEEE Transactions on Pattern Analysis and Machine Intelligence}, 45\penalty0 (2):\penalty0 1848--1861, 2022.

\bibitem[Gao et~al.(2017)Gao, Ma, Zhang, and Zhou]{gao2017achieving}
Chao Gao, Zongming Ma, Anderson~Y. Zhang, and Harrison~H. Zhou.
\newblock Achieving optimal misclassification proportion in stochastic block models.
\newblock \emph{The Journal of Machine Learning Research}, 18\penalty0 (1):\penalty0 1980--2024, 2017.

\bibitem[Gao and Zhu(2018)]{gao2018functional}
Xuefeng Gao and Lingjiong Zhu.
\newblock Functional central limit theorems for stationary hawkes processes and application to infinite-server queues.
\newblock \emph{Queueing Systems}, 90\penalty0 (1):\penalty0 161--206, 2018.

\bibitem[Gaunt and Li(2023)]{gaunt2023bounding}
Robert~E Gaunt and Siqi Li.
\newblock Bounding kolmogorov distances through wasserstein and related integral probability metrics.
\newblock \emph{Journal of Mathematical Analysis and Applications}, 522\penalty0 (1):\penalty0 126985, 2023.

\bibitem[Giovanni and Zheng(2010)]{giovanni2010multi}
Peccati Giovanni and Cengbo Zheng.
\newblock {Multi-Dimensional Gaussian Fluctuations on the Poisson Space}.
\newblock \emph{Electronic Journal of Probability}, 15:\penalty0 1487 -- 1527, 2010.

\bibitem[Goldsmith-Pinkham and Imbens(2013)]{goldsmith2013social}
Paul Goldsmith-Pinkham and Guido~W Imbens.
\newblock Social networks and the identification of peer effects.
\newblock \emph{Journal of Business \& Economic Statistics}, 31\penalty0 (3):\penalty0 253--264, 2013.

\bibitem[Hall(2004)]{hall2004generalized}
Alastair~R Hall.
\newblock \emph{Generalized method of moments}.
\newblock OUP Oxford, 2004.

\bibitem[Hawkes(1971)]{hawkes1971spectra}
Alan~G Hawkes.
\newblock Spectra of some self-exciting and mutually exciting point processes.
\newblock \emph{Biometrika}, 58\penalty0 (1):\penalty0 83--90, 1971.

\bibitem[Hawkes(2018)]{hawkes2018hawkes}
Alan~G Hawkes.
\newblock Hawkes processes and their applications to finance: a review.
\newblock \emph{Quantitative Finance}, 18\penalty0 (2):\penalty0 193--198, 2018.

\bibitem[Holland et~al.(1983)Holland, Laskey, and Leinhardt]{hll83}
P.~Holland, K.~Laskey, and S.~Leinhardt.
\newblock Stochastic blockmodels: some first steps.
\newblock \emph{Social Networks}, 5:\penalty0 109--137, 1983.

\bibitem[Holme and Saram{\"a}ki(2012)]{holme2012temporal}
Petter Holme and Jari Saram{\"a}ki.
\newblock Temporal networks.
\newblock \emph{Physics reports}, 519\penalty0 (3):\penalty0 97--125, 2012.

\bibitem[Huang et~al.(2022)Huang, Soliman, Paul, and Xu]{huang2022mutually}
Zhipeng Huang, Hadeel Soliman, Subhadeep Paul, and Kevin~S Xu.
\newblock A mutually exciting latent space hawkes process model for continuous-time networks.
\newblock \emph{Conference on Uncertainty in Artificial Intelligence (UAI)}, 2022.

\bibitem[Hubert and Arabie(1985)]{hubert1985comparing}
Lawrence Hubert and Phipps Arabie.
\newblock Comparing partitions.
\newblock \emph{Journal of classification}, 2\penalty0 (1):\penalty0 193--218, 1985.

\bibitem[Junuthula et~al.(2019)Junuthula, Haghdan, Xu, and Devabhaktuni]{junuthula2019block}
Ruthwik Junuthula, Maysam Haghdan, Kevin~S Xu, and Vijay Devabhaktuni.
\newblock The block point process model for continuous-time event-based dynamic networks.
\newblock In \emph{The World Wide Web Conference}, pages 829--839, 2019.

\bibitem[Khabou(2021)]{khabou2021malliavin}
Mahmoud Khabou.
\newblock Malliavin-stein method for the multivariate compound hawkes process.
\newblock \emph{arXiv preprint arXiv:2109.07749}, 2021.

\bibitem[Klimt and Yang(2004)]{klimt2004enron}
Bryan Klimt and Yiming Yang.
\newblock The {Enron} corpus: A new dataset for email classification research.
\newblock In \emph{Proceedings of the 15th European Conference on Machine Learning}, pages 217--226. Springer, 2004.

\bibitem[Kumar et~al.(2004)Kumar, Sabharwal, and Sen]{kumar2004simple}
Amit Kumar, Yogish Sabharwal, and Sandeep Sen.
\newblock A simple linear time (1+/spl epsiv/)-approximation algorithm for k-means clustering in any dimensions.
\newblock In \emph{45th Annual IEEE Symposium on Foundations of Computer Science}, pages 454--462. IEEE, 2004.

\bibitem[Laub et~al.(2015)Laub, Taimre, and Pollett]{laub2015hawkes}
Patrick~J Laub, Thomas Taimre, and Philip~K Pollett.
\newblock Hawkes processes.
\newblock \emph{arXiv preprint arXiv:1507.02822}, 2015.

\bibitem[Lei and Lin(2022)]{lei2022bias}
Jing Lei and Kevin~Z Lin.
\newblock Bias-adjusted spectral clustering in multi-layer stochastic block models.
\newblock \emph{Journal of the American Statistical Association}, pages 1--13, 2022.

\bibitem[Lei and Rinaldo(2015)]{lei2015consistency}
Jing Lei and Alessandro Rinaldo.
\newblock Consistency of spectral clustering in stochastic block models.
\newblock \emph{The Annals of Statistics}, 43\penalty0 (1):\penalty0 215--237, 2015.

\bibitem[Lu and Shiou(2002)]{lu2002inverses}
Tzon-Tzer Lu and Sheng-Hua Shiou.
\newblock Inverses of 2$\times$ 2 block matrices.
\newblock \emph{Computers \& Mathematics with Applications}, 43\penalty0 (1-2):\penalty0 119--129, 2002.

\bibitem[Matias et~al.(2018)Matias, Rebafka, and Villers]{matias2015semiparametric}
Catherine Matias, Tabea Rebafka, and Fanny Villers.
\newblock {A semiparametric extension of the stochastic block model for longitudinal networks}.
\newblock \emph{Biometrika}, 105\penalty0 (3):\penalty0 665--680, 2018.

\bibitem[Minc(1974)]{minc1974nonnegative}
H.~Minc.
\newblock \emph{Nonnegative Matrices}.
\newblock Technion-Israel Institute of Technology, Department of Mathematics, 1974.
\newblock URL \url{https://books.google.com/books?id=gAnvAAAAMAAJ}.

\bibitem[Miscouridou et~al.(2018)Miscouridou, Caron, and Teh]{miscouridou2018modelling}
Xenia Miscouridou, Francois Caron, and Yee~Whye Teh.
\newblock Modelling sparsity, heterogeneity, reciprocity and community structure in temporal interaction data.
\newblock In \emph{Advances in Neural Information Processing Systems}, volume~31, pages 2343--2352, 2018.

\bibitem[Nath et~al.(2025)Nath, Warren, and Paul]{nath2022identifying}
Shanjukta Nath, Keith Warren, and Subhadeep Paul.
\newblock Identifying peer influence in therapeutic communities adjusting for latent homophily.
\newblock \emph{The Annals of Applied Statistics}, 19\penalty0 (1):\penalty0 529--565, 2025.

\bibitem[Newey and McFadden(1994)]{newey1994large}
Whitney~K Newey and Daniel McFadden.
\newblock Large sample estimation and hypothesis testing.
\newblock \emph{Handbook of econometrics}, 4:\penalty0 2111--2245, 1994.

\bibitem[Oliveira(2010)]{oliveira2010sums}
Roberto Oliveira.
\newblock Sums of random hermitian matrices and an inequality by rudelson.
\newblock \emph{Electronic Communications in Probability}, 15:\penalty0 203--212, 2010.

\bibitem[Palmer et~al.(2021)Palmer, McManus, D’Orazio, Kenwick, Karstens, Bloch, Dietrich, Kahn, Ritter, and Soules]{palmer2021mid5}
Glenn Palmer, Roseanne~W McManus, Vito D’Orazio, Michael~R Kenwick, Mikaela Karstens, Chase Bloch, Nick Dietrich, Kayla Kahn, Kellan Ritter, and Michael~J Soules.
\newblock The mid5 dataset, 2011--2014: Procedures, coding rules, and description.
\newblock \emph{Conflict Management and Peace Science}, page 0738894221995743, 2021.

\bibitem[Paranjape et~al.(2017)Paranjape, Benson, and Leskovec]{paranjape2017motifs}
Ashwin Paranjape, Austin~R Benson, and Jure Leskovec.
\newblock Motifs in temporal networks.
\newblock In \emph{Proceedings of the tenth ACM international conference on web search and data mining}, pages 601--610, 2017.

\bibitem[Passino and Heard(2023)]{passino2023mutually}
Francesco~Sanna Passino and Nicholas~A Heard.
\newblock Mutually exciting point process graphs for modeling dynamic networks.
\newblock \emph{Journal of Computational and Graphical Statistics}, 32\penalty0 (1):\penalty0 116--130, 2023.

\bibitem[Paul et~al.(2020)Paul, Chen, et~al.]{paul2020spectral}
Subhadeep Paul, Yuguo Chen, et~al.
\newblock Spectral and matrix factorization methods for consistent community detection in multi-layer networks.
\newblock \emph{The Annals of Statistics}, 48\penalty0 (1):\penalty0 230--250, 2020.

\bibitem[Pe{\~n}a(1995)]{pena1995m}
Juan~M Pe{\~n}a.
\newblock M-matrices whose inverses are totally positive.
\newblock \emph{Linear algebra and its applications}, 221:\penalty0 189--193, 1995.

\bibitem[Rastelli and Corneli(2021)]{rastelli2021continuous}
Riccardo Rastelli and Marco Corneli.
\newblock Continuous latent position models for instantaneous interactions.
\newblock \emph{arXiv preprint arXiv:2103.17146}, 2021.

\bibitem[Rohe et~al.(2011)Rohe, Chatterjee, and Yu]{rcy11}
Karl Rohe, Sourav Chatterjee, and Bin Yu.
\newblock Spectral clustering and the high-dimensional stochastic blockmodel.
\newblock \emph{The Annals of Statistics}, 39\penalty0 (4):\penalty0 1878--1915, 2011.

\bibitem[Rohe et~al.(2016)Rohe, Qin, and Yu]{rohe2016co}
Karl Rohe, Tai Qin, and Bin Yu.
\newblock Co-clustering directed graphs to discover asymmetries and directional communities.
\newblock \emph{Proceedings of the National Academy of Sciences}, 113\penalty0 (45):\penalty0 12679--12684, 2016.

\bibitem[Romero et~al.(2023)Romero, Lijffijt, Rastelli, Corneli, and De~Bie]{romero2023gaussian}
Rapha{\"e}l Romero, Jefrey Lijffijt, Riccardo Rastelli, Marco Corneli, and Tijl De~Bie.
\newblock Gaussian embedding of temporal networks.
\newblock \emph{IEEE Access}, 2023.

\bibitem[Soliman et~al.(2022)Soliman, Zhao, Huang, Paul, and Xu]{soliman2022multivariate}
Hadeel Soliman, Lingfei Zhao, Zhipeng Huang, Subhadeep Paul, and Kevin~S Xu.
\newblock The multivariate community hawkes model for dependent relational events in continuous-time networks.
\newblock In \emph{International Conference on Machine Learning}, pages 20329--20346. PMLR, 2022.

\bibitem[Stewart(1998)]{stewart1998perturbation}
Gilbert~W Stewart.
\newblock Perturbation theory for the singular value decomposition.
\newblock Technical report, 1998.

\bibitem[Tropp(2018)]{tropp2018second}
Joel~A Tropp.
\newblock Second-order matrix concentration inequalities.
\newblock \emph{Applied and Computational Harmonic Analysis}, 44\penalty0 (3):\penalty0 700--736, 2018.

\bibitem[Viswanath et~al.(2009)Viswanath, Mislove, Cha, and Gummadi]{viswanath2009evolution}
Bimal Viswanath, Alan Mislove, Meeyoung Cha, and Krishna~P. Gummadi.
\newblock On the evolution of user interaction in facebook.
\newblock In \emph{Proceedings of the 2nd ACM Workshop on Online Social Networks}, pages 37--42, 2009.

\bibitem[Vu and Lei(2013)]{vu13}
Vincent~Q Vu and Jing Lei.
\newblock Minimax sparse principal subspace estimation in high dimensions.
\newblock \emph{The Annals of Statistics}, 41\penalty0 (6):\penalty0 2905--2947, 2013.

\bibitem[Xin et~al.(2017)Xin, Zhu, and Chipman]{xin2015continuous}
Lu~Xin, Mu~Zhu, and Hugh Chipman.
\newblock A continuous-time stochastic block model for basketball networks.
\newblock \emph{The Annals of Applied Statistics}, 11\penalty0 (2):\penalty0 553--597, 2017.

\bibitem[Yang et~al.(2017)Yang, Rao, and Neville]{yang2017decoupling}
Jiasen Yang, Vinayak Rao, and Jennifer Neville.
\newblock {Decoupling Homophily and Reciprocity with Latent Space Network Models}.
\newblock In \emph{Proceedings of the Conference on Uncertainty in Artificial Intelligence}, 2017.

\end{thebibliography}

\end{document}